\documentclass[twoside,11pt]{article}

\usepackage[T1]{fontenc}
\IfFileExists{lmodern.sty}{\usepackage{lmodern}}{}
\usepackage[preprint]{jmlr2e}

\usepackage{amsmath,amssymb,amsfonts,bm,mathtools}
\usepackage{booktabs,array,tabularx}
\usepackage{algorithm}
\usepackage{algpseudocode}
\usepackage{enumitem}
\usepackage{lastpage}
\usepackage{float}
\newcommand{\FloatBarrier}{}
\IfFileExists{xurl.sty}{\usepackage{xurl}}{\usepackage{url}}
\allowdisplaybreaks[1]
\ifx\pdfsuppresswarningpagegroup\undefined\else
\fi



\newtheorem{lemma}{Lemma}
\newtheorem{proposition}{Proposition}
\newtheorem{remark}{Remark}
\newtheorem{corollary}{Corollary}
\newtheorem{definition}{Definition}
\newtheorem{conjecture}{Conjecture}
\newtheorem{axiom}{Axiom}

\renewenvironment{proof}{\par\noindent{\bf Proof\ }}{\ifhmode\unskip\nobreak\hfill\BlackBox\par\else\noindent\hfill\BlackBox\par\fi\smallskip}

\newcommand{\D}{\mathcal{D}}
\newcommand{\F}{\mathcal{F}}
\newcommand{\Fp}{\mathfrak{F}}

\newcommand{\E}{\mathbb{E}}
\newcommand{\PP}{\mathbb{P}}
\newcommand{\R}{\mathbb{R}}
\newcommand{\ind}{\mathbf{1}}
\newcommand{\TV}{\operatorname{TV}}
\newcommand{\KL}{\operatorname{KL}}

\newcommand{\supp}{\operatorname{supp}}

\jmlrheading{0}{2026}{1-\pageref{LastPage}}{}{}{preprint}{Hanqing Li, Xuewen Lu, and Yuting Chen}
\ShortHeadings{BMA under Redundancy via Density Ratios}{Li, Lu, and Chen}
\firstpageno{1}
\hypersetup{
  hypertexnames=false,
  pdftitle={Bayesian Model Averaging under Predictor Redundancy via Density-Ratio Posterior Compression},
  pdfauthor={Hanqing Li, Xuewen Lu, Yuting Chen},
  pdfkeywords={Bayesian model averaging, density-ratio posterior compression, support kernels, predictor redundancy, model uncertainty}
}
\AtBeginDocument{%
}
\newif\ifBMAincludeSupplement
\BMAincludeSupplementtrue

\begin{document}

\title{Bayesian Model Averaging under Predictor Redundancy via Density-Ratio Posterior Compression}

\author{\name Hanqing Li \email hanqing.li@ucalgary.ca \\
       \addr Department of Mathematics and Statistics\\
University of Calgary\\ Calgary, AB T2N 1N4, Canada
       \AND
       \name Xuewen Lu \email xlu@ucalgary.ca \\
       \addr Department of Mathematics and Statistics\\
University of Calgary\\ Calgary, AB T2N 1N4, Canada
       \AND
       \name Yuting Chen \email yuting.chen@eku.edu \\
       \addr Department of Mathematics and Statistics\\
Eastern Kentucky University\\ Richmond, KY 40475, USA}

\maketitle

\begin{abstract}
Bayesian model averaging in support-indexed regression induces a posterior distribution over active predictor supports. Under predictor redundancy, posterior mass can spread across many nearly interchangeable supports, making exact-support summaries unstable or hard to interpret even when prediction is stable. We study how to report an already fitted Bayesian model averaging posterior without changing the Bayesian target. A report uses hard or soft regions of support space, and its compressed reporting law is compared with the reference posterior through an explicit density ratio. This ratio gives computable total-variation and Kullback--Leibler distortion, bounds for bounded predictive summaries, retained-mass diagnostics, and fallback-weight diagnostics. The framework covers fixed hard regions, metric-ball regions, posterior-cluster regions, and pooled-pruned region dictionaries. We prove exact error formulas and validation bounds for these region reports, and give conditions under which a few regions can replace a long list of individual supports. In simulations, our region reports often give shorter and clearer summaries while preserving the main posterior information, and the density-ratio diagnostics show when too much information has been lost.
\end{abstract}

\begin{keywords} Bayesian model averaging, density-ratio posterior compression, support kernels, predictor redundancy, model uncertainty
\end{keywords}

\section{Introduction}

Bayesian model averaging (BMA) combines models according to their posterior probabilities \citep{hoeting1999,forte2018}. In support-indexed regression, each model is indexed by an active predictor support, so BMA induces a marginal posterior over supports. Our question begins after that posterior has been computed: how should an analyst report support-space uncertainty when redundant predictors can represent the same signal? The predictive distribution may remain stable while posterior mass is split across many near-substitute supports. Exact-support lists, credible support sets, posterior inclusion probabilities, or one representative support may then obscure the lower-resolution message, for example that an interval, module, or neighborhood is involved.

This situation appears repeatedly in applications:
\begin{itemize}[leftmargin=1.8em,itemsep=1pt,topsep=3pt]
\item \textbf{Spectroscopy.} Nearby wavelength channels can record the same broad absorption feature, so a spectral interval may be more meaningful than one selected channel.
\item \textbf{Molecular modules.} Genes or probes in the same pathway, co-expression module, or regulatory neighborhood can substitute for the same biological signal.
\item \textbf{Spatial, sensor, and lagged designs.} Neighboring sites, sensors, or lags can encode the same local effect, making a neighborhood or lag window more interpretable than one coordinate.
\end{itemize}
Figure~\ref{fig:practicalregimes} illustrates these regimes. In such cases, the scientifically meaningful report may be an interval, module, or neighborhood rather than a unique coordinate-level support.

\begin{figure}[!htbp]
\centering
\includegraphics[width=\textwidth]{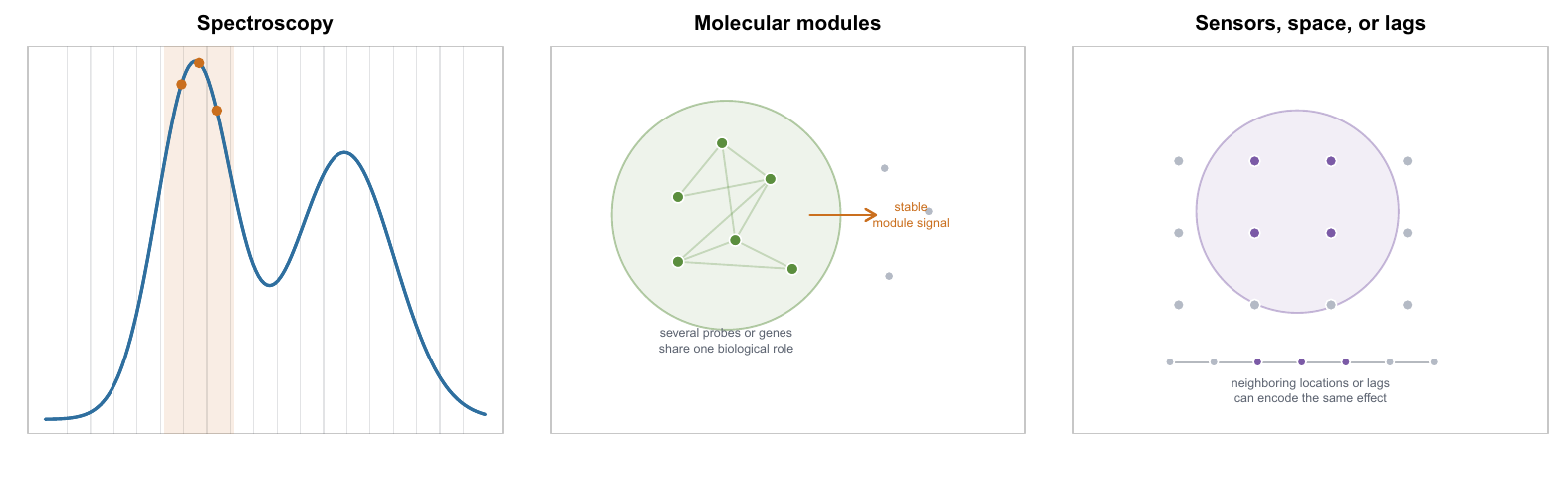}
\caption{Predictor redundancy regimes.}
\label{fig:practicalregimes}
\end{figure}

We treat this task as posterior compression for an already computed reference posterior over model supports. A support is the binary vector \(\gamma\in\Gamma_p=\{0,1\}^p\), and one exact support with positive posterior mass is a support atom. The aim is to report hard or soft support-space regions that are cheaper to describe than support-atom lists, while explicitly measuring how much information is lost relative to the original posterior. The compression step does not refit the Bayesian model, change the support prior, or redefine the reference target. It creates a separate reporting distribution whose loss relative to that target is checked by the density ratio.

Nearby methods answer related but different questions. Top-\(M\) lists and credible support sets keep exact support atoms \citep{hoeting1999,clyde2011}. Posterior clustering and block summaries organize posterior samples \citep{ghosh2015collinearity,papaspiliopoulos2017}. Hamming-distance mixture models use related discrete-space geometry for clustering observed categorical vectors \citep{argiento2025hamming}. These are the closest same-posterior or same-geometry summaries, but they usually do not attach an exact density-ratio distortion check to the reported regions of a chosen reference support posterior.

A second group changes the Bayesian target. Dilution priors, powered-correlation priors, determinantal point process (DPP) priors, and group spike-and-slab priors address redundancy by changing support or coefficient priors \citep{george2010,krishna2009,kojima2016}. A separate group computes or approximates the target. Markov chain Monte Carlo (MCMC), sequential Monte Carlo (SMC), and variational BMA are reference-posterior engines. Projection predictive selection and penalized methods such as the lasso, elastic net, and group lasso mainly compress predictions or coefficients \citep{tibshirani1996,zou2005,yuan2006,simon2013,bair2006,piironen2017,piironen2020,pavone2023}. Thus, the paper studies a same-target reporting problem rather than a new sampler, prior, or predictive selection rule.

We introduce support kernels as the basic reporting objects. A support kernel is a bounded weight function on support space. Indicator kernels describe hard regions, while softer kernels describe neighborhoods or structured regions. Renormalizing the reference posterior by one kernel gives a restricted posterior, and mixing several restricted posteriors gives a compressed posterior. Because the compressed posterior has an explicit density ratio relative to the unrestricted posterior, total variation (TV), Kullback--Leibler distortion, retained mass, bounded predictive-summary bounds, and the fallback weight \(q_0\) can be computed directly. Here \(q_0\) is the mixture weight left on unrestricted BMA, so a large \(q_0\) means that the proposed regions explain little of the posterior by themselves. We state the forward and reverse Kullback--Leibler directions after introducing the density ratio.

We use a few paper-specific terms. A support atom is one exact support. A support region is a set or soft neighborhood of supports. A dictionary is the finite list of kernels available for compression. A reporting cost records how complicated the displayed region list is. Validation-split checks measure compression error relative to the chosen reference posterior, not the correctness of that reference posterior itself.

In use, the method starts from a reference support posterior, builds candidate regions, fits a compressed region mixture, and reports validation-split distortion together with fallback and reporting-cost diagnostics. The paper makes three contributions for this same-posterior reporting problem:
\begin{enumerate}[leftmargin=1.8em,itemsep=0.25em,topsep=0.25em,parsep=0pt,partopsep=0pt]
\item It turns support-posterior reporting into a density-ratio problem, giving computable TV, KL, predictive, retained-mass, and fallback diagnostics for hard and soft regions.
\item It gives a split-sample way to learn and check region reports from posterior draws, with separate construction, mass estimation, weight fitting, and optional final evaluation.
\item It shows when region reports can be more economical than exact-support lists under redundancy, where many interchangeable supports carry the same lower-resolution posterior message.
\end{enumerate}
Figure~\ref{fig:compressionillustration} gives a small enumerable example of the reporting problem. Many support atoms can carry one lower-resolution posterior message, which can be summarized by a few support regions. Empirically, group-Hamming metric regions give the clearest large-support-space success when the redundancy geometry is known, while pooled-pruned dictionaries give a more flexible fallback-controlled construction when that geometry is not known in advance.

\begin{figure}[!htbp]
\centering
\includegraphics[width=\textwidth]{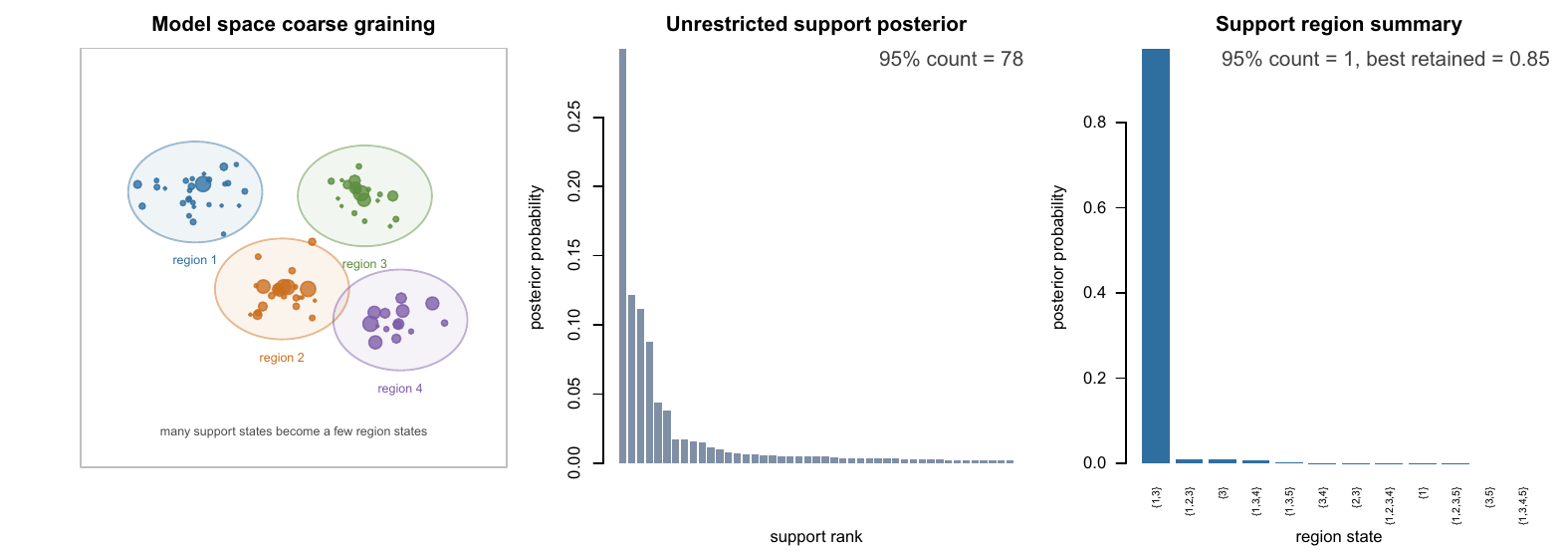}
\caption{Support-kernel compression in an enumerable redundant example.}
\label{fig:compressionillustration}
\end{figure}

The rest of the paper is organized as follows. Section~\ref{sec:background} defines the reference posterior over model supports and the benchmark Bayesian model averaging instance. Section~\ref{sec:mixture} gives hard regions as a special case, then develops the density-ratio identities and reporting-distortion diagnostics. Section~\ref{sec:whyregions} explains why regions can beat exact support lists under redundancy. Section~\ref{sec:mainalgorithms} describes region dictionaries and validation. Section~\ref{sec:leanmainexperiments} gives the empirical evidence. Section~\ref{sec:discussion} discusses scope and limitations, and the Appendix contains proofs and additional diagnostics.

\section{Reference Posterior over Model Supports}\label{sec:background}

Let \(\D=\{(y_i,x_i)\}_{i=1}^n\) be a regression data set with \(p\) candidate predictors. A support is encoded by \(\gamma=(\gamma_1,\dots,\gamma_p)\in\Gamma_p=\{0,1\}^p\), with support set \(\supp(\gamma)=\{j:\gamma_j=1\}\) and size \(|\gamma|=\sum_{j=1}^p\gamma_j\). The support may index active coefficients in a linear model, active terms in a generalized linear model, active basis functions, active modules, or active regions of an ordered predictor set.

\subsection{Support-Indexed Regression}

For each support \(\gamma\), let \(\mathcal M_\gamma\) be a Bayesian regression model with support-specific unknown \(\vartheta_\gamma\in\Theta_\gamma\), likelihood \(p_\gamma(\D\mid\vartheta_\gamma)\), and within-model prior \(\Pi_\gamma\). Let \(p_0(\gamma)>0\) be a support prior on \(\Gamma_p\). Assume the marginal likelihood
\[
m_\gamma(\D) = \int_{\Theta_\gamma} p_\gamma(\D\mid\vartheta_\gamma)\, \Pi_\gamma(\mathrm d\vartheta_\gamma)
\]
is finite. The reference marginal posterior over model supports is
\begin{equation}
\pi_0(\gamma\mid\D) = \frac{m_\gamma(\D)p_0(\gamma)} {\sum_{\gamma'\in\Gamma_p}m_{\gamma'}(\D)p_0(\gamma')}. \label{eq:basepost}
\end{equation}
All density-ratio identities below are statements about this discrete posterior. They do not depend on the likelihood family, the within-model prior, or conjugacy. When the sum in \eqref{eq:basepost} cannot be evaluated exactly, a validated reference approximation can be used. In that case, the compression report inherits the uncertainty of the reference approximation, so the empirical protocol reports chain diagnostics, effective sample sizes, and split stability rather than treating posterior draws as exact.

For prediction, let \(P_\gamma(\cdot\mid\tilde x,\D)\) be the model-specific posterior predictive distribution at a new covariate \(\tilde x\). The unrestricted BMA predictive distribution is the support-posterior mixture \(P_{\pi_0}(\cdot\mid\tilde x,\D)=\sum_{\gamma\in\Gamma_p}\pi_0(\gamma\mid\D)\,P_\gamma(\cdot\mid\tilde x,\D)\). Support-space total variation contracts through this predictive Markov kernel, so the same compression diagnostics control bounded predictive functionals.

\subsection{Model Classes and Benchmark Model}

The compression theory uses only the support posterior in \eqref{eq:basepost} and the predictive kernels above. The reference analysis may therefore be any support-indexed Bayesian analysis that supplies posterior support probabilities or validated posterior support draws.

The experiments use one transparent Gaussian linear instance because it allows exact enumeration in small problems and reproducible diagnostics in larger support spaces. Let \(y=(y_1,\dots,y_n)^\top\in\R^n\) and \(X=(x_1,\dots,x_p)\in\R^{n\times p}\) denote a centered response and standardized predictors, and let \(X_\gamma\) be the active design matrix. Conditional on \(\gamma\), the benchmark uses
\begin{align*}
y \mid \beta_\gamma,\sigma^2,\gamma &\sim N_n(X_\gamma \beta_\gamma,\sigma^2 I_n), \\
\beta_\gamma \mid \sigma^2,\gamma &\sim N_{|\gamma|}(0,\sigma^2 \tau^2 I_{|\gamma|}), \\
\sigma^2 &\sim \operatorname{IG}(a_0,b_0),
\end{align*}
together with the Bernoulli model prior \(p_0(\gamma \mid \theta)=\theta^{|\gamma|}(1-\theta)^{p-|\gamma|}\), \(\theta \in (0,1)\).
For this benchmark, write $V_\gamma=I_n+\tau^2X_\gamma X_\gamma^\top$, $a_n=a_0+n/2$, and $b_\gamma=b_0+y^\top V_\gamma^{-1}y/2$. Integrating out $(\beta_\gamma,\sigma^2)$ yields the marginal likelihood
\[
m_\gamma(y) = \frac{\Gamma(a_n)}{\Gamma(a_0)(2\pi)^{n/2}} \, b_0^{a_0} \, |V_\gamma|^{-1/2} \, b_\gamma^{-a_n}.
\]
Thus \(m_\gamma(\D)=m_\gamma(y)\) and \(p_0(\gamma)=p_0(\gamma\mid\theta)\) in \eqref{eq:basepost}. The normal slab is used because it is proper, gives closed-form marginal likelihoods, and remains well defined for every support, including rank-deficient supports and supports with \(|\gamma|>n\). It is a regularized alternative to a \(g\)-prior. The independent Bernoulli prior is a transparent support prior for both classical and high-dimensional runs. In high-dimensional runs, \(\theta\) is chosen so that the prior expected support size \(p\theta\) is modest.

These choices define the benchmark reference posterior used in the experiments, not a restriction of the compression theory. They are chosen for exact enumeration and reproducible diagnostics, not because the compression framework requires a normal slab or an independent Bernoulli support prior. Other slab choices, such as Laplace, Student-\(t\), grouped, or hierarchical shrinkage slabs, can be used at the reference layer when their marginal likelihoods or posterior support samples are available. A complexity prior, beta-binomial prior, dilution prior, or other support prior can also be substituted without changing the compression identities.

The model-specific predictive distribution is available in closed form as a Student-\(t\) predictive mixture and is used for root mean squared error (RMSE) and log-score checks. The theory itself uses only the generic predictive kernel \(P_\gamma(\cdot\mid\tilde x,\D)\) defined above.

\subsection{Summary and Comparison Classes}
\label{subsec:existingsummaries}

The experiments use the following taxonomy to clarify how later table rows should be interpreted. Inside the density-ratio framework we use four same-target dictionary families: fixed hard regions, metric-ball regions, posterior-cluster regions, and pooled-pruned region dictionaries. Interval and graph kernels are structured-region examples within these families when predictor geometry is ordered or network based.

Group-based rows require a supplied grouping, just as group lasso or group spike-and-slab analyses require groups before fitting. In this paper such groups come from the simulation design or from known predictor geometry, and their adequacy is checked by validation-split distortion rather than assumed.

Methods that change the Bayesian target, such as dilution or DPP priors, and methods that only summarize predictions or coefficients are comparisons rather than compression dictionaries. Thus same-target rows are judged as posterior reports, target-changing rows as redundancy-prior comparisons, and prediction-only rows by their selected variables or predictive submodels. In the experiments, top-\(M\) supports and credible sets are atom-level same-target summaries, density-ratio region reports are same-target region summaries, dilution, DPP, and group spike-and-slab priors are changed-prior Bayesian posteriors, and lasso-type or projection methods return variables or predictive submodels rather than BMA support posteriors.

\section{Density-Ratio Support-Kernel Compression}\label{sec:mixture}

This section defines the compression object used in the rest of the paper. It starts with hard regions because they are the easiest case to read and because their retained mass gives the first exact distortion diagnostic. It then generalizes hard regions to support kernels, derives the density ratio, and defines the TV, KL, reporting-cost, and mass-floor diagnostics used by the algorithm and experiments.

\subsection{Hard Regions as a Special Case}\label{sec:identity}

Section~\ref{sec:background} defines the unrestricted posterior over supports in \(\Gamma_p\). The first question is how to judge one proposed support region against that posterior. A hard support region \(\F\subseteq\Gamma_p\) keeps only supports inside \(\F\) and renormalizes the original analysis. Its retained mass is the unrestricted posterior mass \(\alpha(\F)=\pi_0(\Gamma\in\F\mid\D)=\sum_{\gamma\in\F}\pi_0(\gamma\mid\D)\).
The quantity \(\alpha(\F)\) is the fraction of the fitted posterior that survives if the report is restricted to \(\F\). This is the simplest compression diagnostic: a useful hard region should retain high posterior mass while being cheaper to describe than a list of support atoms.

For a fixed hard-region list, retained mass also has an ordinary Bayesian interpretation. To connect retained mass with such a hierarchy, choose the region-conditional prior by normalized restriction of the original support prior,
\begin{equation}
\pi(\gamma \mid \F) = \frac{p_0(\gamma)\,\ind\{\gamma \in \F\}}{p_0(\F)}, \qquad p_0(\F) = \sum_{\gamma \in \F} p_0(\gamma). \label{eq:restrictedprior}
\end{equation}
This choice keeps the region model tied to the unrestricted analysis. Once a region is proposed, the analyst does not reweight the supports inside it by hand. The region only decides which supports remain admissible. The region-conditional marginal likelihood is \(M(\F \mid \D)=\sum_{\gamma \in \F} m_\gamma(\D)\,\pi(\gamma \mid \F)\). The resulting hierarchy is \(\F \sim \Pi_{\mathrm{fam}}^0\), \(\Gamma \mid \F \sim \pi(\cdot \mid \F)\), and \(\D \mid \Gamma \sim m_\Gamma(\D)\).

\begin{theorem}[Retained mass]
\label{thm:alignment}
Fix any region \(\F \subseteq \Gamma_p\) such that \(p_0(\F)>0\). Then
\begin{equation}
\pi(\gamma \mid \F,\D) = \frac{\pi_0(\gamma \mid \D)\,\ind\{\gamma \in \F\}}{\alpha(\F)}, \label{eq:restrictedpost}
\end{equation}
and
\begin{equation}
M(\F \mid \D) = m_0(\D)\, \frac{\alpha(\F)}{p_0(\F)}, \qquad m_0(\D) = \sum_{\gamma \in \Gamma_p} m_\gamma(\D)p_0(\gamma). \label{eq:margalpha}
\end{equation}
Consequently, if the region prior is chosen as \(\Pi_{\mathrm{fam}}^0(\F \mid \lambda) \propto p_0(\F)\,e^{-\lambda c(\F)}\), then the region posterior satisfies
\begin{equation}
\Pi_{\mathrm{fam}}(\F \mid \D,\lambda) \propto e^{-\lambda c(\F)}\,\alpha(\F). \label{eq:fampostalpha}
\end{equation}
\end{theorem}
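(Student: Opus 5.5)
The plan is a direct Bayes-rule computation inside the hierarchy \(\F\sim\Pi_{\mathrm{fam}}^0\), \(\Gamma\mid\F\sim\pi(\cdot\mid\F)\), \(\D\mid\Gamma\sim m_\Gamma(\D)\), using only the definitions \eqref{eq:basepost} and \eqref{eq:restrictedprior}.

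\textbf{Step 1: restricted posterior \eqref{eq:restrictedpost}.} Conditional on \(\F\), Bayes' rule gives
\[
\pi(\gamma\mid\F,\D)=\frac{m_\gamma(\D)\,\pi(\gamma\mid\F)}{M(\F\mid\D)}.
\]
Substituting \eqref{eq:restrictedprior}, the factor \(p_0(\F)\) cancels, which leaves \(m_\gamma(\D)p_0(\gamma)\ind\{\gamma\in\F\}\) normalized over \(\F\). Multiplying numerator and denominator by \(1/m_0(\D)\) turns each term \(m_\gamma(\D)p_0(\gamma)\) into \(\pi_0(\gamma\mid\D)\) by \eqref{eq:basepost}. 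The denominator then becomes \(\sum_{\gamma\in\F}\pi_0(\gamma\mid\D)=\alpha(\F)\).

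\textbf{Step 2: marginal likelihood \eqref{eq:margalpha}.} Write
\[
M(\F\mid\D)=p_0(\F)^{-1}\sum_{\gamma\in\F}m_\gamma(\D)p_0(\gamma)=p_0(\F)^{-1}\,m_0(\D)\sum_{\gamma\in\F}\pi_0(\gamma\mid\D),
\]
which equals \(m_0(\D)\,\alpha(\F)/p_0(\F)\).

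\textbf{Step 3: region posterior \eqref{eq:fampostalpha}.} By Bayes' rule over regions,
\[
\Pi_{\mathrm{fam}}(\F\mid\D,\lambda)\propto\Pi_{\mathrm{fam}}^0(\F\mid\lambda)\,M(\F\mid\D)\propto p_0(\F)e^{-\lambda c(\F)}\cdot m_0(\D)\,\alpha(\F)/p_0(\F).
\]
The factor \(p_0(\F)\) cancels. The constant \(m_0(\D)\) does not depend on \(\F\), so it is absorbed into the proportionality constant.

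\textbf{Well-definedness.} The only delicate point is that the divisions are legitimate. We have \(p_0(\F)>0\) by assumption, and \(\alpha(\F)>0\) because every \(p_0(\gamma)>0\) and \(\F\) is nonempty. Strict positivity of \(\alpha(\F)\) also needs \(m_\gamma(\D)>0\) for some \(\gamma\in\F\). I would note this, and state that if \(\alpha(\F)=0\) then \eqref{eq:restrictedpost} is read as undefined while \eqref{eq:margalpha} and \eqref{eq:fampostalpha} still hold with value zero. Finally, \(m_0(\D)\) is finite and positive, since the \(m_\gamma\) are finite and \(\Gamma_p\) is finite. With these checks the proof is a few lines of algebra, and I expect no real obstacle.
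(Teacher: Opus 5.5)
Your proposal is correct and follows the paper's proof essentially line for line: Bayes' rule with the restricted prior, identification of the normalizer as \(m_0(\D)\,\alpha(\F)\), the same rewriting of \(M(\F\mid\D)\), and cancellation of \(p_0(\F)\) in the collapsed region posterior. Your well-definedness remarks go slightly beyond the paper, which tacitly assumes \(\alpha(\F)>0\). One small correction there: positivity of \(m_0(\D)\) comes from \eqref{eq:basepost} being well defined, not from the \(m_\gamma\) being finite.
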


Theorem~\ref{thm:alignment} explains why retained mass is the right hard-region score. In \eqref{eq:fampostalpha}, the factor \(p_0(\F)\) that normalizes the restricted support prior is exactly cancelled by the factor \(p_0(\F)^{-1}\) in the region-conditional marginal likelihood. What remains is the unrestricted posterior mass \(\alpha(\F)\), multiplied by the declared reporting penalty \(e^{-\lambda c(\F)}\). Region comparison therefore separates into two readable pieces: compatibility with the unrestricted posterior and simplicity of the reported description.

The region-conditional posterior is a normalized truncation of the unrestricted posterior, so its discrepancy from unrestricted BMA is explicit.
\begin{proposition}[Retained-mass bounds]
\label{prop:bridge}
Let \(\F \subseteq \Gamma_p\) satisfy \(\alpha(\F)>0\). Then
\begin{enumerate}[label=(\roman*), leftmargin=1.8em]
\item The support-space divergences are
\begin{equation}
\KL\!\left(\pi(\cdot \mid \F,\D)\,\|\,\pi_0(\cdot \mid \D)\right)=-\log \alpha(\F),\quad\TV\!\left(\pi(\cdot \mid \F,\D),\pi_0(\cdot \mid \D)\right)=1-\alpha(\F).\label{eq:bridge}
\end{equation}
\item For every fixed \(\tilde x\),
\begin{equation}
\TV\!\left(p_{\F}(\cdot \mid \tilde x,\D), p_{\pi_0}(\cdot \mid \tilde x,\D)\right) \le 1-\alpha(\F), \label{eq:predTVbridge}
\end{equation}
where
\[
p_\F(\tilde y \mid \tilde x,\D) = \sum_{\gamma \in \F} \pi(\gamma \mid \F,\D)\, p_\gamma(\tilde y \mid \tilde x,\D).
\]
\item If \(\bar\pi_\lambda(\gamma \mid \D)=\sum_{\F \in \Fp}\Pi_{\mathrm{fam}}(\F \mid \D,\lambda)\pi(\gamma \mid \F,\D)\), then
\begin{align}
\TV\!\left(\bar\pi_\lambda(\cdot \mid \D), \pi_0(\cdot \mid \D)\right)&\le \E_{\Pi_{\mathrm{fam}}(\cdot \mid \D,\lambda)} \!\left[1-\alpha(\F)\right], \label{eq:avgTV}\\
\TV\!\left(p_{\bar\pi_\lambda}(\cdot \mid \tilde x,\D), p_{\pi_0}(\cdot \mid \tilde x,\D)\right)&\le \E_{\Pi_{\mathrm{fam}}(\cdot \mid \D,\lambda)} \!\left[1-\alpha(\F)\right]. \label{eq:avgpredTV}
\end{align}
\end{enumerate}
\end{proposition}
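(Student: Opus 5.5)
The plan is to work directly from the truncation identity \eqref{eq:restrictedpost} of Theorem~\ref{thm:alignment}. By that identity, the restricted posterior has density ratio
\[
r_\F(\gamma)=\frac{\pi(\gamma\mid\F,\D)}{\pi_0(\gamma\mid\D)}=\frac{\ind\{\gamma\in\F\}}{\alpha(\F)}
\]
with respect to \(\pi_0(\cdot\mid\D)\). Everything in part (i) is then a computation with this ratio. For KL, the restricted posterior is supported on \(\F\), where \(\log r_\F\equiv-\log\alpha(\F)\); averaging this constant under \(\pi(\cdot\mid\F,\D)\) gives \(-\log\alpha(\F)\). For TV, I will use \(\TV=\sum_\gamma(\pi(\gamma\mid\F,\D)-\pi_0(\gamma\mid\D))_+\). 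The positive part vanishes off \(\F\), and on \(\F\) it equals \(\pi_0(\gamma\mid\D)(1/\alpha(\F)-1)\). Summing over \(\F\) gives \(\alpha(\F)(1/\alpha(\F)-1)=1-\alpha(\F)\). As a check, the negative part is exactly the mass \(1-\alpha(\F)\) lying outside \(\F\).

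For part (ii), I will use the data-processing (contraction) property of total variation under a Markov kernel. Both predictive densities are images of support-space laws under the same kernel \(\gamma\mapsto p_\gamma(\cdot\mid\tilde x,\D)\). The BMA predictive \(p_{\pi_0}\) is the image of \(\pi_0\), and \(p_\F\) is the image of \(\pi(\cdot\mid\F,\D)\). To make the step explicit, write
\[
p_\F-p_{\pi_0}=\sum_\gamma\bigl(\pi(\gamma\mid\F,\D)-\pi_0(\gamma\mid\D)\bigr)p_\gamma .
\]
Integrating the absolute value, applying the triangle inequality, and using \(\int p_\gamma=1\) gives \(\TV(p_\F,p_{\pi_0})\le\TV(\pi(\cdot\mid\F,\D),\pi_0)\). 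The bound then follows from (i).

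For part (iii), the key tool is convexity of TV in its first argument with the second argument fixed. Since \(\bar\pi_\lambda-\pi_0=\sum_{\F}\Pi_{\mathrm{fam}}(\F\mid\D,\lambda)\bigl(\pi(\cdot\mid\F,\D)-\pi_0\bigr)\), the triangle inequality gives \(\TV(\bar\pi_\lambda,\pi_0)\le\sum_\F\Pi_{\mathrm{fam}}(\F\mid\D,\lambda)\TV(\pi(\cdot\mid\F,\D),\pi_0)\). Substituting (i) gives \eqref{eq:avgTV}. The predictive bound \eqref{eq:avgpredTV} follows by the same kernel contraction as in (ii), since \(p_{\bar\pi_\lambda}\) is the image of \(\bar\pi_\lambda\). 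Alternatively, one can apply convexity directly to the predictive mixtures.

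The main technical care is well-definedness in the mixture. The family posterior can charge only regions with \(\alpha(\F)>0\), because \eqref{eq:fampostalpha} makes \(\Pi_{\mathrm{fam}}\propto e^{-\lambda c(\F)}\alpha(\F)\). So every restricted posterior entering \(\bar\pi_\lambda\) is defined, and regions with zero retained mass can be dropped. This also requires \(\Pi_{\mathrm{fam}}\) itself to be well defined, meaning the family \(\Fp\) has at least one region with positive retained mass. For the TV convention, I will use \(\TV(P,Q)=\tfrac12\sum|P-Q|\), which is consistent with the exact value \(1-\alpha(\F)\) in (i).
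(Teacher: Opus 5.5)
Your proposal is correct and follows essentially the same route as the paper: part (i) is a direct computation from the truncation identity \eqref{eq:restrictedpost}, part (ii) is total-variation contraction under the predictive Markov kernel, and part (iii) is convexity of the $L^1$ norm followed by that same contraction. The differences are cosmetic. You compute TV as a positive-part sum and prove contraction by a density-level triangle inequality, where the paper takes a supremum over sets. You also add a useful remark that regions with $\alpha(\F)=0$ receive zero weight under \eqref{eq:fampostalpha}, so every restricted posterior entering $\bar\pi_\lambda$ is well defined.
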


The KL identity is one-sided. The divergence \(\KL\{\pi_0(\cdot\mid\D)\,\|\,\pi(\cdot\mid\F,\D)\}\) is infinite unless \(\alpha(\F)=1\), because the restricted posterior assigns zero mass outside \(\F\). Thus retained mass gives a strict hard-region diagnostic. If a reported region has \(\alpha(\F)\ge1-\varepsilon\), then its posterior and predictive TV distortions are at most \(\varepsilon\). Hard regions are therefore best viewed as indicator support kernels, \(w(\gamma)=\ind\{\gamma\in\F\}\). They motivate the retained-mass, fallback-weight, and bridge diagnostics used below, while Appendix~\ref{app:supportingtheory} keeps the formal hard-region list and group-representative dictionary definitions.

\subsection{Support-Kernel Density Ratios}
\label{subsec:supportkernels}

A compressed report is built from a finite list of support kernels. A support kernel is a function \(w:\Gamma_p\to[0,1]\). It is a weight function on supports: values near one retain a support strongly, values near zero downweight it, and intermediate values allow soft boundaries. Indicator kernels recover the hard regions of Subsection~\ref{sec:identity}. Non-binary kernels describe neighborhoods, capacities, intervals, graph modules, or posterior clusters. The optimized mixture weights are a reporting device for posterior compression, not a new Bayesian posterior over scientific groups or regions.

The same construction gives the density-ratio identity, posterior-distortion diagnostics, reporting-distortion summaries, and the mass-floor checks used later. Supporting details regarding hard-region special cases, concentration, MCMC validation, optimization, and structural covers are collected in Appendix~\ref{app:supportingtheory}.

For a kernel \(w_m\), its retained kernel mass is \(\alpha_m=\E_{\pi_0}\{w_m(\Gamma)\}\). Assuming \(\alpha_m>0\), the kernel-restricted posterior component is \(\pi_m(\gamma)=\pi_0(\gamma\mid\D)w_m(\gamma)/\alpha_m\). The fallback kernel is \(w_0(\gamma)\equiv1\), so \(\alpha_0=1\). For \(q\in\Delta_M\), define
\[
\bar\pi_q^{\,W}(\gamma\mid\D) = \sum_{m=0}^M q_m\pi_m(\gamma), \qquad h_q^W(\gamma) = \sum_{m=0}^M q_m\frac{w_m(\gamma)}{\alpha_m}.
\]
Then $\bar\pi_q^{\,W}=\pi_0 h_q^W$. A hard region is recovered by taking \(w_m(\gamma)=\ind\{\gamma\in\F_m\}\), whose retained kernel mass is its retained posterior mass.

\begin{proposition}[Density-ratio identity]
\label{prop:kernelidentity}
Let \(W=\{w_0,\dots,w_M\}\) be a finite support-kernel list. Suppose \(0\le w_m\le1\) and \(\alpha_m>0\) for all \(m\). Then for any \(q\in\Delta_M\),
\[
\bar\pi_q^{\,W}(\gamma\mid\D)=\pi_0(\gamma\mid\D)h_q^W(\gamma), \qquad \E_{\pi_0}\{h_q^W(\Gamma)\}=1.
\]
Consequently,
\begin{equation}
\KL(\bar\pi_q^{\,W}\,\|\,\pi_0)=\E_{\pi_0}\!\left[h_q^W(\Gamma)\log h_q^W(\Gamma)\right], \quad \TV(\bar\pi_q^{\,W},\pi_0)=\frac12\E_{\pi_0}\!\left[|h_q^W(\Gamma)-1|\right]. \label{eq:kernelRKLTV}
\end{equation}
If $h_q^W(\gamma)>0$ for $\pi_0$-almost every $\gamma$, then
\[
\KL(\pi_0\,\|\,\bar\pi_q^{\,W}) = \E_{\pi_0}\!\left[-\log h_q^W(\Gamma)\right].
\]
For every fixed prediction point $\tilde x$,
\[
\TV\!\left(p_{\bar\pi_q^{\,W}}(\cdot\mid\tilde x,\D), p_{\pi_0}(\cdot\mid\tilde x,\D)\right) \le \TV(\bar\pi_q^{\,W},\pi_0).
\]
\end{proposition}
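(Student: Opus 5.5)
The proof is a direct computation on the finite space \(\Gamma_p\), followed by a data-processing argument for the predictive bound.

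\textbf{Density-ratio identity.} First I expand the definition of \(\pi_m\) term by term:
\[
\bar\pi_q^{\,W}(\gamma\mid\D)=\sum_{m=0}^M q_m\,\pi_0(\gamma\mid\D)\,\frac{w_m(\gamma)}{\alpha_m}=\pi_0(\gamma\mid\D)\,h_q^W(\gamma).
\]
This is valid because \(\alpha_m>0\) for every \(m\).

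\textbf{Normalization.} By linearity of expectation,
\[
\E_{\pi_0}\{h_q^W(\Gamma)\}=\sum_{m=0}^M q_m\,\frac{\E_{\pi_0}\{w_m(\Gamma)\}}{\alpha_m}=\sum_{m=0}^M q_m=1,
\]
using \(q\in\Delta_M\). Since \(0\le w_m\le 1\), we have \(h_q^W\ge0\). Together with the normalization, this shows that \(\bar\pi_q^{\,W}\) is a probability mass function. It is absolutely continuous with respect to \(\pi_0\), with Radon--Nikodym derivative \(h_q^W\).

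\textbf{Divergence formulas.} These follow from the standard density-ratio representations on a countable (here finite) space.

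For the forward KL, I write
\[
\KL(\bar\pi\,\|\,\pi_0)=\sum_{\gamma}\bar\pi(\gamma)\log\frac{\bar\pi(\gamma)}{\pi_0(\gamma)}=\sum_{\gamma}\pi_0(\gamma)\,h(\gamma)\log h(\gamma).
\]
Supports with \(\pi_0(\gamma)=0\) also have \(\bar\pi(\gamma)=0\), so they drop out. Where \(h(\gamma)=0\), the convention \(0\log0=0\) applies.

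For TV, I write
\[
\TV=\tfrac12\sum_\gamma|\bar\pi(\gamma)-\pi_0(\gamma)|=\tfrac12\sum_\gamma\pi_0(\gamma)\,|h(\gamma)-1|.
\]

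For the reverse KL, the assumption \(h>0\) \(\pi_0\)-a.s. gives \(\pi_0\ll\bar\pi\) with density \(1/h\). Hence \(\KL(\pi_0\,\|\,\bar\pi)=\sum_\gamma\pi_0(\gamma)\log\bigl(1/h(\gamma)\bigr)\), and this sum is finite because \(\Gamma_p\) is finite.

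\textbf{Predictive bound.} The only step needing a slightly less mechanical argument is the predictive TV inequality, and I would treat it as a Markov-kernel contraction. Both predictive laws are mixtures of the same kernel \(\gamma\mapsto P_\gamma(\cdot\mid\tilde x,\D)\), with mixing laws \(\bar\pi\) and \(\pi_0\) respectively. For any measurable set \(A\),
\[
\bigl|P_{\bar\pi}(A)-P_{\pi_0}(A)\bigr|=\Bigl|\sum_\gamma\bigl(\bar\pi(\gamma)-\pi_0(\gamma)\bigr)P_\gamma(A)\Bigr|.
\]
Since \(P_\gamma(A)\in[0,1]\) and \(\sum_\gamma(\bar\pi(\gamma)-\pi_0(\gamma))=0\), this is at most
\[
\sum_\gamma\bigl(\bar\pi(\gamma)-\pi_0(\gamma)\bigr)_+=\TV(\bar\pi,\pi_0).
\]
Concretely, I bound the sum from above by keeping only the positive parts, and from below by keeping only the negative parts. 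Taking the supremum over \(A\) gives the claimed inequality.

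The same argument extends to any bounded functional with range in \([0,1]\). Hard regions are the special case \(w_m=\ind\{\cdot\in\F_m\}\), which recovers Proposition~\ref{prop:bridge}(i)–(ii) when \(q\) is a point mass on one region.
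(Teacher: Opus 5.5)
Your proposal is correct and follows the paper's proof essentially step for step: you expand the mixture to get $\bar\pi_q^{\,W}=\pi_0h_q^W$, normalize by linearity, substitute into the KL and TV definitions, and obtain the predictive bound by total-variation contraction under the predictive Markov kernel, where your positive-part argument spells out the contraction the paper only invokes. Your labels are reversed from the paper's convention, which calls $\KL(\pi_0\,\|\,\bar\pi_q^{\,W})$ FKL and $\KL(\bar\pi_q^{\,W}\,\|\,\pi_0)$ RKL, but this does not affect the mathematics.
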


After \(h_q^W\) has been defined, we use FKL for \(\KL(\pi_0\,\|\,\bar\pi_q^{\,W})=\E_{\pi_0}\{-\log h_q^W(\Gamma)\}\) and RKL for \(\KL(\bar\pi_q^{\,W}\,\|\,\pi_0)=\E_{\pi_0}\{h_q^W(\Gamma)\log h_q^W(\Gamma)\}\). The names forward and reverse are convention-dependent, so we use them only with this ordering.

\subsection{TV, KL, and Predictive-Summary Diagnostics}
\label{subsec:tvkldiagnostics}

\begin{proposition}[TV controls summaries]
\label{prop:functionalconsequence}
Let \(\varphi:\Gamma_p\to\mathbb R\) be any bounded posterior functional with \(|\varphi|\le B\). Then
\[
\left| \E_{\bar\pi_q^{\,W}}\varphi(\Gamma) - \E_{\pi_0}\varphi(\Gamma) \right| \le 2B\,\TV(\bar\pi_q^{\,W},\pi_0).
\]
In particular, posterior probabilities of support events differ by at most \(\TV(\bar\pi_q^{\,W},\pi_0)\). If \(K(\cdot\mid\gamma)\) is any posterior predictive Markov kernel and \(r\) is a bounded predictive functional with \(|r|\le B\), then the same \(2B\,\TV\) bound holds for the difference between the compressed and unrestricted posterior-predictive expectations of \(r\).
\end{proposition}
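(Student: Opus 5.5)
The plan is to reduce everything to the density-ratio representation of Proposition~\ref{prop:kernelidentity} and the elementary variational characterization of total variation on a finite space. Write \(\mu=\bar\pi_q^{\,W}\) and \(\nu=\pi_0(\cdot\mid\D)\); both are probability measures on the finite set \(\Gamma_p\), with \(\mu=\nu h_q^W\). The difference of expectations is
\[
\E_\mu\varphi-\E_\nu\varphi=\sum_{\gamma\in\Gamma_p}\varphi(\gamma)\{\mu(\gamma)-\nu(\gamma)\}.
\]
Since \(\sum_\gamma\{\mu(\gamma)-\nu(\gamma)\}=0\), I would bound this sum by \(B\sum_\gamma|\mu(\gamma)-\nu(\gamma)|\). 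Using \(\TV(\mu,\nu)=\tfrac12\sum_\gamma|\mu(\gamma)-\nu(\gamma)|\), this equals \(2B\,\TV(\mu,\nu)\). This matches the expression \(\tfrac12\E_{\pi_0}|h_q^W-1|\) in \eqref{eq:kernelRKLTV}, so the convention is consistent with the paper.

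For the event statement, take \(\varphi=\ind_A\) for a support event \(A\subseteq\Gamma_p\). The crude bound with \(B=1\) gives \(2\,\TV\), which is too weak, so I would instead recenter. Put \(\tilde\varphi=\ind_A-\tfrac12\), which satisfies \(|\tilde\varphi|\le\tfrac12\). Because constants integrate to the same value under both measures, the difference of expectations is unchanged, and the bound becomes \(2\cdot\tfrac12\,\TV=\TV\).

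More generally, I would make this recentring part of the main argument. For any constant \(c\), replace \(\varphi\) by \(\varphi-c\). This yields the sharper bound \((\sup\varphi-\inf\varphi)\,\TV\), of which both stated bounds are special cases.

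For the predictive part, define \(\psi(\gamma)=\int r(y)\,K(\mathrm dy\mid\gamma)\). Then \(|\psi|\le B\), since \(K(\cdot\mid\gamma)\) is a probability measure. The compressed and unrestricted posterior-predictive expectations of \(r\) are \(\E_\mu\psi\) and \(\E_\nu\psi\), by linearity of the mixtures \(\sum_\gamma\mu(\gamma)K(\cdot\mid\gamma)\) and \(\sum_\gamma\nu(\gamma)K(\cdot\mid\gamma)\). The first part then applies directly to \(\psi\). Measurability of \(r\) is the only technical point, and it is implicit in calling \(r\) a bounded predictive functional.

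I do not expect a real obstacle. The step needing the most care is keeping the TV normalization consistent, namely the factor \(\tfrac12\) used in \eqref{eq:kernelRKLTV}. The recentring trick is what gives the factor-one bound for event probabilities rather than the naive factor two.
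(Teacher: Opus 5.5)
Your proof is correct. The main bound is the same finite-sum estimate the paper uses; the differences are in the two add-ons.

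\textbf{Event statement.} The paper simply says that taking $\varphi=\ind\{\Gamma\in A\}$ gives the constant $1$. Literally substituting into its chain $\sum_\gamma|\varphi(\gamma)|\,|\bar\pi_q^{\,W}(\gamma)-\pi_0(\gamma)|$ only yields $2\TV$. What is really needed is the cancellation $\sum_\gamma\{\bar\pi_q^{\,W}(\gamma)-\pi_0(\gamma)\}=0$. Your recentring $\varphi\mapsto\varphi-c$ supplies this explicitly, and it also gives the sharper oscillation bound $(\sup\varphi-\inf\varphi)\TV$.

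\textbf{Predictive statement.} The paper pushes both support laws forward through $K$ and invokes contraction, $\TV(\mu K,\nu K)\le\TV(\mu,\nu)$. It then applies the bounded-functional inequality on the possibly continuous predictive space, which tacitly needs the general-measure version of that inequality. You instead pull $r$ back to $\psi(\gamma)=\int r\,\mathrm dK(\cdot\mid\gamma)$ and stay on the finite space $\Gamma_p$. This needs neither a contraction lemma nor total variation on the outcome space.

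\textbf{What each route buys.} Your route loses nothing: applying your recentred bound to $\psi$ with $r=\ind_A$ and taking the supremum over $A$ recovers the contraction itself. This is exactly the argument the paper gives inside the proof of Proposition~\ref{prop:bridge}. The paper's route has the presentational advantage of stating a whole-law predictive TV bound, which it also uses in Propositions~\ref{prop:kernelidentity} and~\ref{prop:referenceerror}.
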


Thus the TV diagnostic is not only a model-space number. It controls bounded posterior summaries, event probabilities, and bounded predictive summaries. For unbounded quantities such as log predictive density, the same interpretation requires an explicit truncation or tail condition.

The same TV logic also explains the role of reference-posterior diagnostics. If the reference posterior \(\pi_0\) is close to an ideal target \(\pi_\star\), then a small density-ratio diagnostic relative to \(\pi_0\) transfers to \(\pi_\star\) up to that reference error. Proposition~\ref{prop:referenceerror} in Appendix~\ref{app:supportingtheory} gives the triangle-inequality statement.

\subsection{Reporting-Distortion}
\label{subsec:reportingdistortion}

Region lists can be of three types. Response-independent kernels are fixed before seeing the response or built from \(X\) alone and may have a prior interpretation. Posterior-adaptive kernels are learned from posterior draws and are therefore compression summaries, not priors; they require sample splitting or equivalent validation. Hybrid lists use response-independent kernels as seeds and add posterior-adaptive kernels only where validation-split distortion remains large. When the dictionary is fixed, we suppress the superscript \(W\) and write \(h_q\).

\begin{definition}[Reporting-distortion]
\label{def:posteriorratedistortion}
For display and table labels, FKL and RKL use the ordering defined above. Write
\[
D_{\TV}(q)=\frac12\E_{\pi_0}|h_q(\Gamma)-1|,\;
D_{\mathrm{FKL}}(q)=\E_{\pi_0}\{-\log h_q(\Gamma)\},\;
D_{\mathrm{RKL}}(q)=\E_{\pi_0}\{h_q(\Gamma)\log h_q(\Gamma)\}.
\]
Let \(\mathrm{Dist}(q)\) be one of these three diagnostics.
The posterior reporting-distortion function of the region list is
\[
R(\varepsilon) = \inf_{q\in\Delta_M} \left\{ \sum_{m=0}^M q_m c_m \mid \mathrm{Dist}(q)\le\varepsilon \right\}.
\]
The penalized version is
\begin{equation}
\mathcal L_{\beta,\tau}(q) = \mathrm{Dist}(q) + \beta\sum_{m=0}^M q_m c_m + \tau\sum_{m=0}^M q_m\log q_m, \label{eq:penalizedRD}
\end{equation}
where $\beta\ge0$ controls compression and $\tau\ge0$ stabilizes the simplex optimization.
\end{definition}

Thus \(R(\varepsilon)\) is a posterior rate-distortion curve. The source distribution is the unrestricted support posterior, the cost is the declared reporting cost, and the distortion is computed from the density ratio. The cost scale is a reporting convention, not a universal minimum description length. For this reason the empirical section reports both expected reporting cost and active-list length, and it compares conclusions across fallback, floor, and list-length diagnostics rather than treating one scalar objective as definitive. The main objective uses \(C_{\mathrm{exp}}(q)=\sum_{m=0}^M q_m c_m\). We call \(C_{\mathrm{exp}}(q)\) the expected reporting cost, meaning the average reporting cost under the mixture representation. A final report can also list every retained region. For a reporting threshold \(\eta>0\), define the actual displayed-list diagnostics
\[
C_{\mathrm{list}}(q;\eta)=\sum_{m:q_m>\eta,\,m\ne0} c_m,
\qquad
K_{\mathrm{list}}(q;\eta)=\#\{m:q_m>\eta,\,m\ne0\}.
\]
If \(1-q_0>0\), let \(\tilde q_m=q_m/(1-q_0)\) for \(m>0\) and define \(K_{\mathrm{eff}}(q)=(\sum_{m>0}\tilde q_m^2)^{-1}\); otherwise set \(K_{\mathrm{eff}}(q)=0\). Thus \(C_{\mathrm{exp}}\) measures mixture-average cost, while \(C_{\mathrm{list}}\) and \(K_{\mathrm{list}}\) are closer to the actual report length shown to an analyst. A practical compression should be judged by both.

The ideal constrained formulations are, for example,
\begin{align*}
&\min_q \ D_{\mathrm{FKL}}(q) &&\mathrm{s.t.}\ q\in\Delta_M,\ \epsilon_0\le q_0\le q_{\max},\ C_{\mathrm{list}}(q;\eta)\le B_{\mathrm{list}}, \\
&\min_q \ C_{\mathrm{list}}(q;\eta) &&\mathrm{s.t.}\ D_{\TV}(q)\le\varepsilon_{\TV},\ D_{\mathrm{FKL}}(q)\le\varepsilon_{\KL},\ q_0\le q_{\max}, \\
&\min_q \ D_{\mathrm{FKL}}(q)+\beta C_{\mathrm{exp}}(q) &&\mathrm{s.t.}\ q\in\Delta_M,\ \epsilon_0\le q_0\le q_{\max}.
\end{align*}
The list constraints are nonconvex because the active set depends on \(q\). The implementation therefore uses a practical approximation: generate candidate solutions over tuning grids and optional \(q_0\) bounds, prune by a \(q\)-mass threshold, refit on the retained active set, and select nondominated points in the three-dimensional frontier of distortion, fallback weight, and list cost.

The empirical tables report expected reporting cost, active or effective kernels, and stored support atoms when defined. A low FKL with large \(q_0\) is fallback-heavy rather than a stand-alone region report. A low expected reporting cost with a long active list is mixture-compact but not necessarily report-compact.

\subsection{Mass Floors and Final Diagnostics}
\label{subsec:massfloorbias}

The formal compression ratio uses the retained masses
\[
\alpha_m=\E_{\pi_0}w_m(\Gamma),\qquad
h_q(\gamma)=\sum_{m=0}^M q_m\frac{w_m(\gamma)}{\alpha_m}.
\]
In finite samples, very small retained-mass estimates can make optimization unstable. The validation split may therefore use floored masses \(\alpha_m^a=\max\{\alpha_m,a\}\) and estimated floored masses \(\widehat\alpha_m^a=\max\{\widehat\alpha_m,a\}\). The validation-split theorem below validates this floored objective. The final report is then recomputed with unfloored retained masses whenever those estimates are stable.

The unfloored report is credible when the mass floor is inactive for important kernels, or when the active floor discrepancy
\[
\Delta_a(q)=\sum_{m:\alpha_m<a} q_m\left(1-\frac{\alpha_m}{a}\right)
\]
is small. If high-weight active kernels rely on the floor, the fit is floor-stabilized and should be labeled exploratory. Lemma~\ref{lem:massfloordiscrepancy} in Appendix~\ref{app:supportingtheory} gives the corresponding normalization and TV/FKL/RKL error bounds.

\section{Why Regions Help under Redundancy}
\label{sec:whyregions}

The density-ratio identities above explain how to check a proposed report. The next question is when a region report can be much shorter than a list of support atoms. The answer is clearest when many exact supports are nearly interchangeable representatives of the same lower-resolution event. The theorem below formalizes this atom-versus-region separation.

\begin{theorem}[Redundancy-region compression]
\label{thm:redundancyregion}
Let \(G_1,\dots,G_K\) be nonempty pairwise disjoint groups of predictor coordinates. Let \(S^\star\) be an active group set. Define
\[
\mathcal C^\star=\left\{\gamma\in\Gamma_p:\supp(\gamma)\subseteq\bigcup_{k\in S^\star}G_k,\ \sum_{j\in G_k}\gamma_j=1 \text{ for every } k\in S^\star\right\},
\]
so \(M^\star=|\mathcal C^\star|=\prod_{k\in S^\star}|G_k|\).
\begin{enumerate}[label=(\alph*), leftmargin=1.8em]
\item \textbf{Exchangeable redundancy.} Suppose the posterior numerator \(p_0(\gamma)m_\gamma(\D)\) is invariant under within-group relabeling on \(\mathcal C^\star\), and suppose \(\pi_0(\mathcal C^\star\mid\D)=1\). A sufficient condition is that the support prior, likelihood, and within-model priors are invariant under the corresponding within-group permutations. Then \(\pi_0\) is uniform on \(\mathcal C^\star\). Any top-\(M\) exact-support truncation satisfies
\[
\TV(\pi_{\mathrm{top}\text{-}M},\pi_0)=1-M/M^\star,
\]
so \(\TV\le\varepsilon\) requires \(M\ge(1-\varepsilon)M^\star\). In contrast, the single hard region \(\mathcal C^\star\) has retained mass one and zero TV, FKL, and RKL distortion.
\item \textbf{Bounded-nonuniform redundancy.} Suppose \(0\le\delta<1\), \(\pi_0(\mathcal C^\star\mid\D)\ge1-\delta\), and, conditional on \(\mathcal C^\star\),
\[
\max_{\gamma\in\mathcal C^\star}\pi_0(\gamma\mid\mathcal C^\star,\D)\le\exp(r)/M^\star.
\]
If an exact-support truncation retains at most \(M\) support atoms and has TV distance at most \(\varepsilon\) from \(\pi_0\), then
\[
M\ge \exp(-r)(1-\varepsilon-\delta)M^\star .
\]
The bound is informative only when \(1-\varepsilon-\delta>0\). Otherwise it is vacuous.
Meanwhile, using the hard-region kernel \(\ind\{\gamma\in\mathcal C^\star\}\) together with fallback weight \(\zeta\) satisfying \(0<\epsilon_0\le\zeta<1\) gives
\[
\TV(\bar\pi_q,\pi_0)\le(1-\zeta)\delta,\qquad
\KL(\pi_0\,\|\,\bar\pi_q)\le\delta\log(1/\zeta),
\]
and
\[
\KL(\bar\pi_q\,\|\,\pi_0)\le
\log\!\left\{\zeta+\frac{1-\zeta}{1-\delta}\right\}.
\]
\item \textbf{Interpretation.} Under these assumptions, exact-support reporting cost scales with the number \(M^\star\) of interchangeable representatives, while the region report stores the group pattern and capacities. This is the statistical redundancy regime in which support-kernel reporting can have lower reporting cost than support-atom reporting.
\end{enumerate}
\end{theorem}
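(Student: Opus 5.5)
Part (a) comes first. Within-group relabeling acts transitively on $\mathcal C^\star$: any two elements differ by choosing a different representative in each active group, and a product of transpositions inside the $G_k$ maps one to the other. By invariance of the numerator $p_0(\gamma)m_\gamma(\D)$, all elements of $\mathcal C^\star$ therefore have equal unnormalized mass. Because $\pi_0(\mathcal C^\star\mid\D)=1$, normalization forces $\pi_0(\gamma\mid\D)=1/M^\star$ on $\mathcal C^\star$. For the sufficient condition, I will note that invariance of the prior, likelihood and within-model priors makes each $m_\gamma(\D)$ invariant after a change of variables in $\vartheta_\gamma$.

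A top-$M$ truncation is the hard-region restriction to a set $\F$ of $M$ atoms. Its retained mass is $\alpha(\F)=M/M^\star$, so Proposition~\ref{prop:bridge}(i) gives $\TV=1-M/M^\star$. The requirement $M\ge(1-\varepsilon)M^\star$ is this identity rearranged. The region $\mathcal C^\star$ itself has $\alpha=1$. Hence $h\equiv1$ $\pi_0$-a.s., and all three diagnostics in Proposition~\ref{prop:kernelidentity} vanish.

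For the lower bound in (b), I interpret an exact-support truncation as the restriction $\pi(\cdot\mid\F,\D)$ to a set $\F$ of at most $M$ atoms. Proposition~\ref{prop:bridge}(i) gives $\TV=1-\alpha(\F)\le\varepsilon$, so $\alpha(\F)\ge1-\varepsilon$. I then split
\[
\alpha(\F)\le\pi_0(\F\cap\mathcal C^\star\mid\D)+\pi_0(\Gamma_p\setminus\mathcal C^\star\mid\D)\le M\,\frac{e^r}{M^\star}+\delta.
\]
This uses the conditional bound together with $\pi_0(\gamma\mid\D)\le\pi_0(\gamma\mid\mathcal C^\star,\D)$ on $\mathcal C^\star$. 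Rearranging gives the claim. If truncations are meant more broadly, as any law supported on $M$ atoms, I will use $\TV\ge\pi_0(\F^c)$, which gives the same chain.

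For the upper bounds, let $\alpha=\pi_0(\mathcal C^\star\mid\D)\ge1-\delta$ and take $q=(\zeta,1-\zeta)$. Then $h=\zeta+(1-\zeta)/\alpha$ on $\mathcal C^\star$ and $h=\zeta$ off it. I compute each diagnostic from Proposition~\ref{prop:kernelidentity}.

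\textbf{TV.} We have $\TV=\tfrac12[\alpha(1-\zeta)(1/\alpha-1)+(1-\alpha)(1-\zeta)]=(1-\zeta)(1-\alpha)\le(1-\zeta)\delta$.

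\textbf{FKL.} On $\mathcal C^\star$, $h\ge1$, so the contribution $-\log h\le0$ there. This leaves $\mathrm{FKL}\le(1-\alpha)\log(1/\zeta)\le\delta\log(1/\zeta)$.

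\textbf{RKL.} By Jensen's inequality, concavity of $\log$ under $\bar\pi_q$ gives $\E_{\bar\pi_q}\log h\le\log\E_{\bar\pi_q}h=\log\E_{\pi_0}h^2$. The plan is instead to bound pointwise. Off $\mathcal C^\star$, $h\log h\le0$ since $\zeta<1$. On $\mathcal C^\star$, the contribution is $\alpha h\log h$ with $\alpha h=\alpha\zeta+1-\zeta\le1$. Hence $\mathrm{RKL}\le\log(\zeta+(1-\zeta)/\alpha)\le\log\{\zeta+(1-\zeta)/(1-\delta)\}$, using monotonicity in $\alpha$.

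Part (c) is interpretive. It follows by comparing the $\Omega(M^\star)$ atom count from (a)–(b) with the single region description.

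I expect the main obstacle to be making precise what counts as an "exact-support truncation" in (b). The argument goes through for any distribution supported on at most $M$ atoms, via $\TV\ge1-\pi_0(\F)$, and I will state it in that generality.
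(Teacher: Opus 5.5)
Your proposal is correct and follows essentially the same route as the paper. You use transitivity of within-group relabelings plus numerator invariance for uniformity, $\TV=1-\pi_0(A)$ for the truncation, and the same split $\pi_0(A)\le e^{r}M/M^\star+\delta$ for the lower bound. For the upper bounds you use the single-region-plus-fallback ratio $h=\zeta+(1-\zeta)/\alpha$ on $\mathcal C^\star$ and $h=\zeta$ off it.

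The only differences are minor. You compute the TV, FKL and RKL bounds inline, where the paper cites its region-compression theorem, whose proof is the same calculation. Your observation that $\TV\ge\pi_0(\F^c)$ validly extends the atom-count lower bound to any law supported on at most $M$ atoms.
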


\begin{corollary}[Vanishing region distortion]
\label{cor:vanishingregiondistortion}
Let \(\mathcal C_n^\star\subseteq\Gamma_{p_n}\) be a sequence of support regions with \(\pi_{0,n}(\mathcal C_n^\star\mid\D_n)\ge1-\delta_n\) and \(\delta_n\to0\). Suppose each dictionary contains the hard-region kernel \(\ind\{\gamma\in\mathcal C_n^\star\}\) and the fallback kernel. For a fixed fallback weight \(\zeta\) satisfying \(0<\epsilon_0\le\zeta<1\), put mass \(1-\zeta\) on the region and \(\zeta\) on fallback. Then
\[
\TV(\bar\pi_{q,n},\pi_{0,n})\le(1-\zeta)\delta_n\to0,\qquad
\KL(\pi_{0,n}\,\|\,\bar\pi_{q,n})\le\delta_n\log(1/\zeta)\to0,
\]
and
\[
\KL(\bar\pi_{q,n}\,\|\,\pi_{0,n})\le
\log\!\left\{\zeta+\frac{1-\zeta}{1-\delta_n}\right\}\to0.
\]
If the reporting cost of \(\mathcal C_n^\star\) grows with groups, intervals, or modules rather than with the number of support atoms in the region, the report can have vanishing posterior distortion while avoiding exact-support enumeration.
\end{corollary}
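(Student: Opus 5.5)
The plan is to reduce the corollary to part (b) of Theorem~\ref{thm:redundancyregion}, applied separately at each \(n\), and then take limits. The three inequalities in (b) for the hard-region-plus-fallback mixture never use the group structure or the nonuniformity bound \(r\). They use only that the reported region has posterior mass at least \(1-\delta\). Accordingly, I would first recompute them directly for a general region \(\mathcal C\) with \(\alpha=\pi_0(\mathcal C\mid\D)\ge1-\delta\), so that the argument does not depend on \(\mathcal C_n^\star\) having product form.

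With \(q=(\zeta,1-\zeta)\) on the fallback and the region kernel, Proposition~\ref{prop:kernelidentity} gives the density ratio
\[
h_q(\gamma)=\zeta+\frac{1-\zeta}{\alpha}\ind\{\gamma\in\mathcal C\}.
\]
So \(h_q=\zeta+(1-\zeta)/\alpha\) on \(\mathcal C\) and \(h_q=\zeta\) off \(\mathcal C\). The three diagnostics then follow.

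\textbf{TV.} Using \eqref{eq:kernelRKLTV}, \(\TV=\tfrac12\E_{\pi_0}|h_q-1|=(1-\zeta)(1-\alpha)\le(1-\zeta)\delta\), because the positive and negative parts of \(h_q-1\) each have mass \((1-\zeta)(1-\alpha)\).

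\textbf{FKL.} \(h_q\ge\zeta>0\) everywhere, and \(-\log h_q\le0\) on \(\mathcal C\) since \(h_q\ge1\) there. Hence \(\E_{\pi_0}[-\log h_q]\le(1-\alpha)\log(1/\zeta)\le\delta\log(1/\zeta)\).

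\textbf{RKL.} By Jensen's inequality, \(\E_{\pi_0}[h_q\log h_q]=\E_{\bar\pi_q}[\log h_q]\le\log\E_{\bar\pi_q}[h_q]\). Here \(\E_{\bar\pi_q}[h_q]=\E_{\pi_0}[h_q^2]\). Alternatively, one can bound \(\log h_q\le\log\{\zeta+(1-\zeta)/\alpha\}\) pointwise, since that is the maximum of \(h_q\). The pointwise route is simpler and yields \(\log\{\zeta+(1-\zeta)/(1-\delta)\}\), using \(\alpha\ge1-\delta\) and monotonicity in \(\alpha\).

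Having these bounds at each fixed \(n\) with \(\delta=\delta_n\), the limits are elementary. The quantities \((1-\zeta)\delta_n\) and \(\delta_n\log(1/\zeta)\) tend to zero because \(\zeta\ge\epsilon_0>0\) is fixed. The RKL bound tends to zero because \(\zeta+(1-\zeta)/(1-\delta_n)\to1\) and \(\log\) is continuous at \(1\). One technical point is that \(\delta_n<1\) is needed for \(\alpha_n>0\), so the region kernel is admissible under Proposition~\ref{prop:kernelidentity}. This holds for all large \(n\), and the corollary is asymptotic, so only eventual validity matters. Alternatively, the result can be stated for \(n\) large enough that \(\delta_n<1\).

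The final sentence about reporting cost is interpretive rather than quantitative. I would justify it by noting that the dictionary contains only two kernels, so \(C_{\mathrm{exp}}(q)=\zeta c_0+(1-\zeta)c(\mathcal C_n^\star)\). This cost inherits the growth of \(c(\mathcal C_n^\star)\), whatever its description, rather than \(|\mathcal C_n^\star|\). Meanwhile, by part (b) of Theorem~\ref{thm:redundancyregion}, atom lists can require order \(M^\star\) atoms whenever the nonuniformity condition holds. The main (mild) obstacle is keeping the RKL bound clean, and the pointwise maximum argument should suffice.
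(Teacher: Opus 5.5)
Your proposal is correct and matches the paper's proof, which invokes the one-region case of Theorem~\ref{thm:regioncompressibility} with \(\alpha_n\ge1-\delta_n\) and fixed \(\zeta\) (this is exactly the density-ratio, TV, FKL and RKL computation you redo inline) and then lets \(\delta_n\to0\). Your RKL step via \(\E_{\bar\pi_q}\log h_q\le\log h_{\mathrm{in}}\) and your remark that \(\delta_n<1\) is needed for the region kernel to be admissible are small, valid refinements of that same argument.
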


The theorem is a sufficient-condition result. It does not say that every correlated design is compressible. It says that if posterior mass is concentrated on an exchangeable or bounded-nonuniform redundancy class, a region can preserve the lower-resolution posterior message while support-atom lists may need a constant fraction of the interchangeable supports.

Corollary~\ref{cor:nearredundancyodds} in Appendix~\ref{app:supportingtheory} gives a compact posterior-odds sufficient condition. The main message is the theorem above: region reports help when posterior mass is concentrated on lower-resolution redundancy classes.

\section{Region Dictionaries and Validation}
\label{sec:mainalgorithms}

Algorithm~\ref{alg:adaptivekernels} gives the implementation template. The analyst chooses one or more region dictionary families, estimates retained masses, fits mixture weights on a weight-fitting validation split, and reports validation-split distortion, fallback, floor, and active-list diagnostics. The dictionary family, fallback bounds, mass floor, tuning grid, and selection rule are fixed before weight fitting. A separate final evaluation split can be used for a last independent empirical report when enough posterior draws are available.

\begin{algorithm}[H]
\caption{Density-ratio posterior compression with selectable dictionaries}
\label{alg:adaptivekernels}
\begin{algorithmic}[1]
\Require Posterior support draws or exact weights for \(\pi_0\), candidate rules, fallback bounds, mass floor, tuning grid, list threshold
\State Build candidate support kernels and add fallback kernel \(w_0\equiv1\).
\State Estimate retained masses on an independent mass-estimation split.
\State Optimize mixture weights under the density-ratio objective on weight-fitting validation draws.
\State Prune, refit, and select a report by the declared tuning or frontier rule.
\State Compute expected reporting cost, active-list, floor, KKT, and validation-split diagnostics.
\State Optionally evaluate the fixed selected kernels and weights on a final evaluation split, without refitting or reselection; label failed diagnostics as exploratory.
\end{algorithmic}
\end{algorithm}

\subsection{Dictionary Families}
\label{subsec:dictionaryfamilies}

The four main families are fixed hard regions, metric-ball regions, posterior-cluster regions, and pooled-pruned region dictionaries. The dictionary source is chosen according to the reporting goal.

Fixed hard regions encode prespecified scientific groups, modules, intervals, or neighborhoods. Metric-ball regions encode replacement neighborhoods in support space. Hamming geometry is standard for discrete product spaces and has been used to build mixture models for categorical observations \citep{argiento2025hamming}; here it is used differently, as a dictionary geometry for regions of a posterior distribution over supports. For a center support \(c\in\Gamma_p\) and radius \(R\), a Hamming ball is
\[
w_{c,R}^{\mathrm{Ham}}(\gamma)=\ind\{d_{\mathrm{Ham}}(\gamma,c)\le R\}.
\]
If \(G_1,\ldots,G_K\) are predictor groups, define \(A_G(\gamma)_k=\ind\{|\supp(\gamma)\cap G_k|>0\}\) and \(d_G(\gamma,c)=d_{\mathrm{Ham}}\{A_G(\gamma),A_G(c)\}\). Then
\[
w_{c,R}^{G}(\gamma)=\ind\{d_G(\gamma,c)\le R\},\qquad
w_{c,\rho}^{G}(\gamma)=\exp\{-\rho d_G(\gamma,c)\}
\]
are hard and soft group-Hamming kernels. Posterior-cluster regions use centers learned from construction-split posterior draws. Pooled-pruned dictionaries combine candidate sources and then keep a shorter refitted active list. A soft capacity kernel, used as a structured-region example, can be written as
\[
w_{S,u,\rho}(\gamma)=\ind\{A(\gamma)\subseteq S\}\exp\!\left\{-\rho\sum_{k\in S}(|\supp(\gamma)\cap G_k|-u_k)_+\right\},
\]
and a soft interval kernel for ordered predictors as
\[
w_{I,u,\rho}(\gamma)=\exp\{-\rho\,d(\supp(\gamma),I)\}\exp\{-\rho(|\supp(\gamma)\cap I|-u)_+\},
\]
where \(d\) measures distance from \(\supp(\gamma)\) to interval \(I\). These examples enter the same framework once their weights and retained masses are defined.

A report should read as a short scientific display, not just as an optimizer output. The displayed object is a list of regions with mixture weights \(q_m\), retained masses \(\alpha_m\), reporting costs, and a fallback row. In grouped redundancy, group-Hamming balls are the most concrete version of this display. A radius-zero row reports one group pattern, while radius-one and radius-two rows report nearby group-level substitutions. The posterior uncertainty inside each row is still inherited from the original BMA posterior.

The group-Hamming option is therefore not a claim that useful groups are always available. It is the sharpest dictionary when groups are meaningful before validation. If the grouping is uncertain, the safer default is to compare it with posterior-cluster and pooled-pruned dictionaries and to let the validation-split TV, FKL, \(q_0\), and active-list diagnostics decide whether the grouped report is credible.

\subsection{Validation-Split Theorem}
\label{subsec:heldoutvalidation}

\begin{theorem}[Validation-split check]\label{thm:heldoutmain}\label{thm:empiricalcertification}
Let \(W=\{w_0,\dots,w_M\}\) be a finite dictionary constructed from a construction split independent of the mass-estimation and validation samples. For \(q\in\Delta_M\), a mass floor \(a\in(0,1]\), and \(\alpha_m=\E_{\pi_0}w_m(\Gamma)\), define
\[
\alpha_m^a=\max\{\alpha_m,a\},
\qquad
h_q^a(\gamma)=\sum_{m=0}^M q_m\frac{w_m(\gamma)}{\alpha_m^a}.
\]
Let \(\widehat\alpha_m^a=\max\{\widehat\alpha_m,a\}\) be computed on an independent mass-estimation sample, and let \(\widehat h_q^a\) be the same ratio with \(\alpha_m^a\) replaced by \(\widehat\alpha_m^a\). Let \(\Gamma_1,\ldots,\Gamma_N\) be an independent validation sample from \(\pi_0(\cdot\mid\D)\). For \(\mathcal Q\subseteq\Delta_M\), define
\begin{align*}
\mathcal L_{\beta,\tau}^a(q)
&=\E_{\pi_0}\ell\{h_q^a(\Gamma)\}+\beta\sum_{m=0}^M q_m c_m+\tau\sum_{m=0}^M q_m\log q_m,\\
\widehat{\mathcal L}_{\beta,\tau}(q)
&=\frac1N\sum_{i=1}^N\ell\{\widehat h_q^a(\Gamma_i)\}+\beta\sum_{m=0}^M q_m c_m+\tau\sum_{m=0}^M q_m\log q_m.
\end{align*}
Let \(\widehat q\) minimize \(\widehat{\mathcal L}_{\beta,\tau}\) over the feasible set \(\mathcal Q\). Assume the following conditions hold.
\begin{flushleft}
\begin{tabular}{@{}l@{\quad}>{\normalfont}p{0.84\linewidth}@{}}
\textnormal{(V1)} & \(0\le w_m\le1\) and \(\alpha_m>0\) for \(m=0,\dots,M\);\\[1mm]
\textnormal{(V2)} & \(h_q^a(\gamma)\in I\) and \(\widehat h_q^a(\gamma)\in I\) for every \(q\in\mathcal Q\) and every \(\gamma\) with \(\pi_0(\gamma\mid\D)>0\), where \(I=[b,a^{-1}]\), \(0\le b\le a^{-1}\), with \(b>0\) for KL-type losses;\\[1mm]
\textnormal{(V3)} & \(\ell\) is \(L\)-Lipschitz on \(I\) and satisfies \(|\ell(z)|\le B_\ell\) for all \(z\in I\);\\[1mm]
\textnormal{(V4)} & \(\displaystyle \max_{0\le m\le M}\left|\frac1{\widehat\alpha_m^a}-\frac1{\alpha_m^a}\right|\le r_\alpha\).
\end{tabular}
\end{flushleft}
Then, with probability at least \(1-\delta\) over the validation sample,
\[
\mathcal L_{\beta,\tau}^a(\widehat q)
\le
\inf_{q\in\mathcal Q}\mathcal L_{\beta,\tau}^a(q)
+
8La^{-1}\sqrt{\frac{2\log\{2(M+1)\}}{N}}
+
4B_\ell\sqrt{\frac{\log(2/\delta)}{2N}}
+
2Lr_\alpha .
\]
\end{theorem}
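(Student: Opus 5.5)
The argument is a standard excess-risk decomposition. The cost and entropy terms \(\beta\sum_m q_mc_m+\tau\sum_m q_m\log q_m\) are deterministic and identical in \(\mathcal L^a_{\beta,\tau}\) and \(\widehat{\mathcal L}_{\beta,\tau}\). Write \(\mathcal L^a=\E\ell(h^a_q)+P(q)\), let \(\widetilde{\mathcal L}(q)=\E_{\pi_0}\ell\{\widehat h^a_q(\Gamma)\}+P(q)\) be the population loss with estimated masses, and let \(\widehat{\mathcal L}\) be its empirical version. For any \(q^\star\in\mathcal Q\) (take an \(\eta\)-minimizer and let \(\eta\to0\)), I would decompose
\[
\mathcal L^a(\widehat q)-\mathcal L^a(q^\star)=\{\mathcal L^a-\widetilde{\mathcal L}\}(\widehat q)+\{\widetilde{\mathcal L}-\widehat{\mathcal L}\}(\widehat q)+\{\widehat{\mathcal L}(\widehat q)-\widehat{\mathcal L}(q^\star)\}+\{\widehat{\mathcal L}-\widetilde{\mathcal L}\}(q^\star)+\{\widetilde{\mathcal L}-\mathcal L^a\}(q^\star).
\]
The middle term is \(\le0\) by optimality of \(\widehat q\). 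The remaining terms are then of two kinds: mass-estimation errors, which are deterministic given (V4), and a uniform deviation over the validation sample.

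\textbf{Mass-estimation terms.} For every \(\gamma\) and \(q\), using \(0\le w_m\le1\) and \(\sum_m q_m=1\),
\[
|\widehat h^a_q(\gamma)-h^a_q(\gamma)|\le\sum_m q_m w_m(\gamma)\,|1/\widehat\alpha^a_m-1/\alpha^a_m|\le r_\alpha .
\]
By (V2) both arguments lie in \(I\), so (V3) gives \(|\ell(\widehat h^a_q)-\ell(h^a_q)|\le Lr_\alpha\) pointwise. Taking \(\pi_0\)-expectations, each of the two outer terms is at most \(Lr_\alpha\), which accounts for the \(2Lr_\alpha\) term. Conditioning on the construction and mass-estimation splits makes \(\widehat h^a_q\) a fixed function, independent of \(\Gamma_1,\dots,\Gamma_N\).

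\textbf{Uniform deviation (the main obstacle).} What remains is bounding \(2\sup_{q\in\mathcal Q}|\frac1N\sum_i\ell(\widehat h^a_q(\Gamma_i))-\E\ell(\widehat h^a_q(\Gamma))|\). The supremum \(Z\) has bounded differences \(2B_\ell/N\) by (V3), so McDiarmid gives \(Z\le\E Z+2B_\ell\sqrt{\log(2/\delta)/(2N)}\) with probability \(1-\delta\). Doubling this yields the \(4B_\ell\) term; the \(\log(2/\delta)\) leaves room for a two-sided version. For \(\E Z\), symmetrization gives \(\E Z\le2\,\E\,\mathfrak R_N\) for the class \(\{\ell\circ\widehat h^a_q\}\). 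The Ledoux--Talagrand contraction (\(\ell\) is \(L\)-Lipschitz on \(I\); the two-sided absolute version costs another factor 2) reduces this to \(2L\) times the Rademacher complexity of the linear class \(\{\sum_m q_m g_m:q\in\mathcal Q\}\), where \(g_m=w_m/\widehat\alpha^a_m\in[0,a^{-1}]\). Since \(\mathcal Q\subseteq\Delta_M\), this complexity equals that of the convex hull of \(M+1\) functions, hence at most \(\E\max_m|\frac1N\sum_i\sigma_i g_m(\Gamma_i)|\). Massart's finite-class lemma applied to the \(2(M+1)\) signed functions \(\pm g_m\) bounds this by \(a^{-1}\sqrt{2\log\{2(M+1)\}/N}\). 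Combining the factors, \(2\times2\times2L\times a^{-1}\sqrt{\cdot}\), gives \(8La^{-1}\sqrt{2\log\{2(M+1)\}/N}\).

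The delicate point is bookkeeping the constants: symmetrization, the absolute-value contraction, and the factor 2 from the two deviation terms must line up exactly with the stated \(8La^{-1}\). One also needs the contraction to apply even though \(\ell\) is only Lipschitz on \(I\); this holds because (V2) guarantees every \(\widehat h^a_q(\Gamma_i)\) lies in \(I\), so \(\ell\) can be extended to a global \(L\)-Lipschitz function without changing any term. Summing the three contributions gives the stated bound.
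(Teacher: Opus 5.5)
Your proposal is correct and follows essentially the paper's argument: a Lipschitz transfer of the mass-estimation error giving $2Lr_\alpha$, then symmetrization, contraction, Massart's lemma over the $M+1$ coordinate Rademacher averages, McDiarmid, and the two-deviation comparison, with the same constants. The only difference is cosmetic: the paper swaps $\widehat h_q^a$ for $h_q^a$ inside the empirical average and runs the empirical process on the population-mass class, whereas you swap inside the population expectation and run it on the estimated-mass class, which is fixed once you condition on the mass-estimation split.

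One step you should make explicit, as the paper does, is the centering. The absolute-value Ledoux--Talagrand contraction requires $\phi(0)=0$, which neither $\ell$ nor your global extension $\bar\ell$ need satisfy. So you must replace $f_q=\ell\circ\widehat h_q^a$ by $f_q-\bar\ell(0)$, or by $f_q-\ell(c_I)$ for a fixed $c_I\in I$, \emph{before} symmetrizing, using that the empirical-process deviation is invariant under this constant shift. If you center only after symmetrizing, you are left with a term $\bar\ell(0)N^{-1}\sum_i\sigma_i$ that the stated constants do not absorb.
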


The theorem compares the fitted compression \(\widehat q\) with the best mixture in the same feasible set \(\mathcal Q\), for the floored population objective \(\mathcal L_{\beta,\tau}^a\). The displayed bound assumes exact empirical minimization; an approximate solver adds its reported optimization gap. The remaining error terms are validation sampling error and mass-estimation error. For TV losses the lower density-ratio bound may be zero. For FKL and RKL, the fallback constraint \(q_0\ge\epsilon_0\), together with the fallback kernel \(w_0\equiv1\), is the usual way to enforce the lower part of (V2). This is not automatically the same as the true compression objective unless the mass floor is inactive or \(\Delta_a(q)\) is small. If the same validation sample is used to select among \(G\) predeclared tuning-grid or frontier candidates whose feasible sets are not already contained in one class \(\mathcal Q\), the simultaneous bound applies the theorem to each candidate and uses the confidence term \(4B_\ell\{\log(2G/\delta)/(2N)\}^{1/2}\), with the Rademacher term evaluated for each candidate dictionary or for the declared union dictionary. The reciprocal mass-estimation radius \(r_\alpha\) is then taken uniformly over the declared union dictionary, or the mass-estimation event is union-bounded across the \(G\) candidates. No extra correction is needed when all candidate points are already optimized inside the single feasible class covered by the theorem. Alternatively, a separate final evaluation split may be reserved for the selected point. Without independent construction, mass-estimation, and validation splits, the reported distortion should be labeled exploratory.

Appendix~\ref{app:supportingtheory} also contains the blocked Markov-chain analogue, the pool-cover bound, concrete cluster, group, and interval cover cases, the retained-mass region-compression theorem, and the support-atom reporting-cost lower bound. Operationally, failed reference diagnostics, unstable retained-mass estimates, a large fallback weight, or a dominated pooled-pruned frontier makes a compression exploratory rather than fully validated.

\subsection{Practical Diagnostics}
\label{subsec:practicaldiagnostics}

The dictionary choice is tied to the reporting goal. Prespecified scientific groups use fixed hard or group-representative regions and are mainly checked by retained mass and fallback \(q_0\). Replacement neighborhoods under redundancy use Hamming or group-Hamming metric-ball regions and are judged by TV/FKL, \(q_0\), group validity, active-list length, and split stability. When the goal is lowest raw TV or FKL from posterior draws, posterior-cluster regions are natural, with medoid reporting cost and split stability reported alongside distortion. Pooled-pruned dictionaries are used for flexible short reports, and their checks include expected reporting cost, active-list size, \(q_0\), validation-split FKL, and the KKT residual. Top-\(M\) support atoms and credible support sets remain the atom-level baselines. For every fitted dictionary, floor and reference reliability are monitored through \(\Delta_a(q)\), retained-mass uncertainty, reference diagnostics, ESS, and split stability.

The tuning constants are part of the statistical summary. The fallback floor \(\epsilon_0\) keeps FKL finite, the mass floor \(a\) protects rare kernels, \(\beta\) trades distortion for reporting cost, and \(\tau\) stabilizes optimization. Table~\ref{tab:reportingcosts} in Appendix~\ref{app:empiricaldiagnostics} gives the reporting-cost conventions.

A complete report lists the active regions, their weights, retained masses, reporting costs, \(q_0\), validation-split TV, FKL, RKL, and predictive summaries. Favorable RMSE or log-score alone is not enough. The summary also needs acceptable posterior distortion, reference diagnostics, and region stability. A final split, when available, reports these diagnostics after selection.

\FloatBarrier

\section{Experiments}
\label{sec:leanmainexperiments}

The experiments address four questions. Can density-ratio summaries be checked exactly? Do region summaries improve on support-atom lists such as top-\(M\) truncation? How do same-target compression methods differ from target-shift BMA priors and prediction-only methods? Do the conclusions survive a larger \(p=100\) run where the reference posterior is sampled and diagnosed?

For same-target compression rows, the unrestricted BMA posterior is always the object being summarized. The tables report TV, FKL, RKL, fallback weight \(q_0\), expected reporting cost, and active or effective list size when these quantities are defined. Smaller TV and FKL mean the summary is closer to unrestricted BMA on support space, while a large \(q_0\) means that the summary still leans on the unrestricted fallback component. When shown, RMSE gap is method RMSE minus unrestricted-BMA RMSE. Entries of the form \(x\,(s)\) report mean with standard error in parentheses over the stated replications or cells. Entries of the form \(x\,[\ell,u]\) report mean with minimum and maximum in brackets across reference fits. Additional exact-regime details, report-length diagnostics, split-stability checks, candidate-source ablations, and reduced real-response checks are collected in Appendix~\ref{app:empiricaldiagnostics}.

\subsection{Protocol and Comparisons}
\label{subsec:benchmarktarget}

We separate comparisons into three classes.
\begin{enumerate}[label=\emph{Class \Roman*.}, labelindent=0pt, labelwidth=5.3em, labelsep=0.75em, leftmargin=!, align=left, itemindent=0pt, listparindent=0pt, topsep=0.25em, itemsep=0.25em, parsep=0pt, partopsep=0pt]
\item \emph{Same-target posterior compression.} These methods summarize the same unrestricted support posterior \(\pi_0\). Their primary metrics are TV, FKL, RKL, \(q_0\), and report length. This class includes fixed hard regions, metric-ball regions, posterior-cluster regions, pooled-pruned region dictionaries, and top-\(M\) or credible-support atom baselines.
\item \emph{Alternative Bayesian targets.} Dilution and DPP BMA change the prior or model before fitting \citep{george2010,krishna2009,kojima2016}. TV/FKL to \(\pi_0\) measures target shift, not compression failure.
\item \emph{Prediction and selection methods.} Lasso, elastic net, and group lasso answer neighboring prediction or variable-selection questions. They are compared through predictive and selection summaries, not through same-target posterior-distortion tables.
\end{enumerate}

A compression report is practically credible only if (i) validation-split TV, FKL, and RKL are small in absolute terms or improve the matched reporting frontier relative to support-atom summaries; (ii) \(q_0\) is below a declared reporting threshold such as \(0.25\); (iii) the active-list cost is small enough to display and interpret; and (iv) reference-posterior diagnostics pass. For grouped reports, the grouping must also be supplied by design knowledge or tested against alternative dictionaries. The \(q_0\) threshold is not universal, but a fallback-heavy solution should not be described as a stand-alone region summary.

\subsection{Exact Redundancy Benchmark}
\label{subsec:currentexactcomparison}

The exact benchmark uses synthetic problems small enough to enumerate all \(2^p\) supports. It removes MCMC error from the same-target comparison, so the reported TV, FKL, and RKL are exact posterior-compression errors for compression rows. Table~\ref{tab:maincompetitorbenchmark} summarizes two correlations and five replications in seven regimes. The full regime-by-regime table is Table~\ref{tab:exactbenchmarkfull} in Appendix~\ref{app:empiricaldiagnostics}. For each method and scenario, we use the same pre-specified selection rule. Among fitted tuning-grid points, report the point with the smallest exact FKL, together with its associated TV, \(q_0\), and expected reporting cost. This rule is used for comparison only. A practitioner who needs a shorter displayed report should instead use the fallback-constrained or active-list-constrained frontier.

Pooled-pruned kernels give low FKL and low expected reporting cost in most regimes. Cluster kernels are strong raw-distortion summaries but can have different reporting cost and fallback behavior. Support-atom lists can be competitive in small exact settings, but they report exact supports rather than regions. Dilution and DPP rows are read separately as target-shift rows: large or small TV/FKL there says how different the prior-changed posterior is from \(\pi_0\), not whether the proposed compression failed.

The Near-best column counts regimes whose FKL is within \(0.002\) of the best same-target compression row; it is a descriptive stability summary rather than a formal test.

\begin{table}[!htbp]
\centering
\small

\begin{tabular}{lccccc}
\toprule
Method & TV & FKL & $q_0$ & Reporting cost & Near-best \\
\midrule
Pooled-pruned & 0.014 (0.000) & 0.002 (0.000) & 0.001 (0.000) & 4.12 (0.07) & 7/7
\\
Cluster kernels & 0.021 (0.001) & 0.003 (0.000) & 0.096 (0.018) & 5.67 (0.08) & 6/7
\\
Top-M & 0.025 (0.003) & 0.024 (0.002) & 0.113 (0.017) & 13.41 (0.93) & 0/7
\\
Credible set & 0.040 (0.000) & 0.049 (0.000) & 0.156 (0.001) & 15.65 (0.12) & 0/7
\\
Fixed hard & 0.038 (0.002) & 0.020 (0.001) & 0.441 (0.020) & 13.60 (0.35) & 0/7
\\
Dilution & 0.023 (0.002) & 0.004 (0.000) & -- & 16.33 (1.65) & --
\\
DPP & 0.041 (0.003) & 0.016 (0.002) & -- & 13.94 (1.35) & --
\\
\bottomrule
\end{tabular}

\caption{Compact exact benchmark summary. Near-best uses a \(0.002\) FKL tolerance.}
\label{tab:maincompetitorbenchmark}
\end{table}

Full regime-level results and reporting-distortion frontiers are given in Table~\ref{tab:exactbenchmarkfull} and Figure~\ref{fig:maincompetitorfrontier} in Appendix~\ref{app:empiricaldiagnostics}.

\FloatBarrier

\subsection{Large Support-Space Benchmark}
\label{subsec:largeendtoend}

The large benchmark moves to non-enumerable \(p=100\) support spaces. Each cell generates a synthetic redundant regression problem, runs six-chain Markov chain Monte Carlo model composition (MC3) to obtain an empirical unrestricted BMA reference posterior, checks the reference sample, and then fits and validates compression on split posterior draws. The full rerun contains \(12\) scenario-correlation cells with three replications each.

Table~\ref{tab:largeendtoenddiag} reports the quality of the sampled reference posterior. The final rerun uses \(18{,}000\) retained cold-chain draws per cell. The core diagnostic gate requires finite maximum split-\(\widehat R\le 1.25\), minimum effective sample size (ESS) at least \(50\), and maximum chain-to-chain group posterior inclusion probability (PIP) range at most \(0.45\). The table also reports maximum group-PIP Monte Carlo standard error (MCSE), unique-support counts, and low-acceptance warnings, with acceptance below \(0.03\) flagged as a warning. All \(36\) reference fits pass this gate, and one cell has a low-acceptance warning.

\begin{table}[H]
\centering
\small

\begin{tabular}{lc}
\toprule
Diagnostic & Value \\
\midrule
Reference runs & 36 \\
Passed gate & 36 \\
Stress-test label & 0 \\
Low-acceptance warnings & 1 \\
Retained draws per run & 18000 \\
Split Rhat max & 1.052 [1.021, 1.109] \\
Minimum ESS & 172.372 [136.516, 224.259] \\
Max group-PIP range & 0.133 [0.041, 0.328] \\
Max group-PIP MCSE & 0.018 [0.004, 0.040] \\
Unique supports & 1722.4 [526.0, 5865.0] \\
\bottomrule
\end{tabular}

\caption{Reference diagnostics for the large \(p=100\) benchmark.}
\label{tab:largeendtoenddiag}
\end{table}

\begin{table}[H]
\centering
\scriptsize
\resizebox{\textwidth}{!}{
\begin{tabular}{lcccccc}
\toprule
Method & TV & FKL & $q_0$ & Reporting cost & Active/list & RMSE gap \\
\midrule
Pooled-pruned & 0.076 (0.002) & 0.017 (0.001) & 0.334 (0.030) & 11.32 (0.36) & 137.1 (19.5) & 0.001 (0.001) \\
Cluster kernels & 0.049 (0.002) & 0.007 (0.001) & 0.723 (0.018) & 16.42 (0.20) & 13.9 (0.5) & 0.001 (0.000) \\
Top-M atoms & 0.220 (0.033) & 0.486 (0.049) & 0.050 (0.008) & 69.06 (6.39) & 96.0 (0.0) & 0.005 (0.002) \\
Credible set & 0.103 (0.001) & 0.427 (0.001) & 0.006 (0.000) & 29.34 (1.30) & 639.8 (244.4) & 0.003 (0.001) \\
Dilution BMA & 0.170 (0.017) & 0.407 (0.065) & -- & 318.47 (84.24) & 318.5 (84.2) & 0.004 (0.001) \\
DPP BMA & 0.185 (0.018) & 0.439 (0.069) & -- & 305.11 (82.07) & 305.1 (82.1) & 0.005 (0.001) \\
Fixed regions & 0.189 (0.027) & 0.188 (0.027) & 0.343 (0.069) & 94.33 (15.57) & 38.1 (2.7) & 0.003 (0.001) \\
\bottomrule
\end{tabular}
}
\caption{Large \(p=100\) benchmark. Dilution and DPP rows change the prior target.}
\label{tab:largeendtoendmethods}
\end{table}

\begin{table}[H]
\centering
\small
\resizebox{\textwidth}{!}{
\begin{tabular}{lcccccc}
\toprule
Method & TV & FKL & $q_0$ & $C_{\mathrm{exp}}$ & $C_{\mathrm{list}}$ & $K_{\mathrm{list}}$ \\
\midrule
Group-Hamming balls & 0.022 (0.002) & 0.008 (0.002) & 0.006 (0.003) & 5.30 (0.05) & 108.2 (2.1) & 17.9 (0.3) \\
Hamming balls & 0.039 (0.005) & 0.020 (0.003) & 0.034 (0.012) & 6.10 (0.22) & 134.3 (4.1) & 20.0 (0.6) \\
\bottomrule
\end{tabular}
}
\caption{Same-target metric-ball instances for the large \(p=100\) benchmark. All rows use the same reference posterior.}
\label{tab:sametargetlargep}
\end{table}

\FloatBarrier

\begin{table}[H]
\centering
\scriptsize

\begin{tabularx}{\textwidth}{@{}lXrrrrrX@{}}
\toprule
Region & Center & Radius & Kernel cols. & $q_m$ & $\alpha_m$ & Cost & Meaning \\
\midrule
G1 & groups \{2,5,9,13,17\} & 2 & 5 & 0.336 & 1.000 & 4.62 & radius 2 group ball \\
G2 & groups \{2,5,9,13,17\} & 3 & 5 & 0.319 & 1.000 & 4.62 & radius 3 group ball \\
G3 & groups \{2,5,9,13,17\} & 1 & 5 & 0.253 & 0.988 & 4.62 & radius 1 group ball \\
G4 & groups \{2,5,9,13,17\} & 0 & 5 & 0.090 & 0.852 & 4.62 & exact group pattern \\
fallback & all supports & -- & 1 & 0.001 & 1.000 & 18.42 & fallback residual \\
\bottomrule
\end{tabularx}

\caption{Displayed group-Hamming report for one large-\(p\) run.}
\label{tab:largepghreport}
\end{table}

Table~\ref{tab:largepghreport} shows a displayed group-Hamming report. The averaged large-\(p\) comparisons are in Tables~\ref{tab:largeendtoendmethods} and~\ref{tab:sametargetlargep}. This example is selected by the reproducible rule of taking the lowest-FKL group-Hamming row among the saved full benchmark runs, so it illustrates one displayed report rather than replacing the averaged evidence.

In this run, the report is a nested group-level posterior summary. The exact group pattern \(\{2,5,9,13,17\}\) retains \(0.852\) posterior mass. Allowing one group substitution retains \(0.988\). Allowing two or three substitutions retains essentially all posterior mass. Because the radius-two and radius-three balls are indistinguishable on the sampled reference support at the displayed precision, their separate \(q_m\) weights should be read as mixture-reporting weights rather than distinct scientific claims.

The weights \(q_m\) are mixture weights in the reporting distribution, not posterior probabilities that the displayed group pattern is scientifically true. The fallback row is the weight left on unrestricted BMA. Table~\ref{tab:sametargetlargep} reports the conservative pre-merge active-kernel count and list cost used in the benchmark. Table~\ref{tab:largepghreport} shows the post-merge displayed report with \(K_{\mathrm{display}}=4\) nonfallback rows. Its ``Kernel cols.'' column counts how many pre-merge active kernel columns were collapsed into each displayed row.

\begin{figure}[H]
\centering
\makebox[\textwidth][c]{\includegraphics[width=1.08\textwidth]{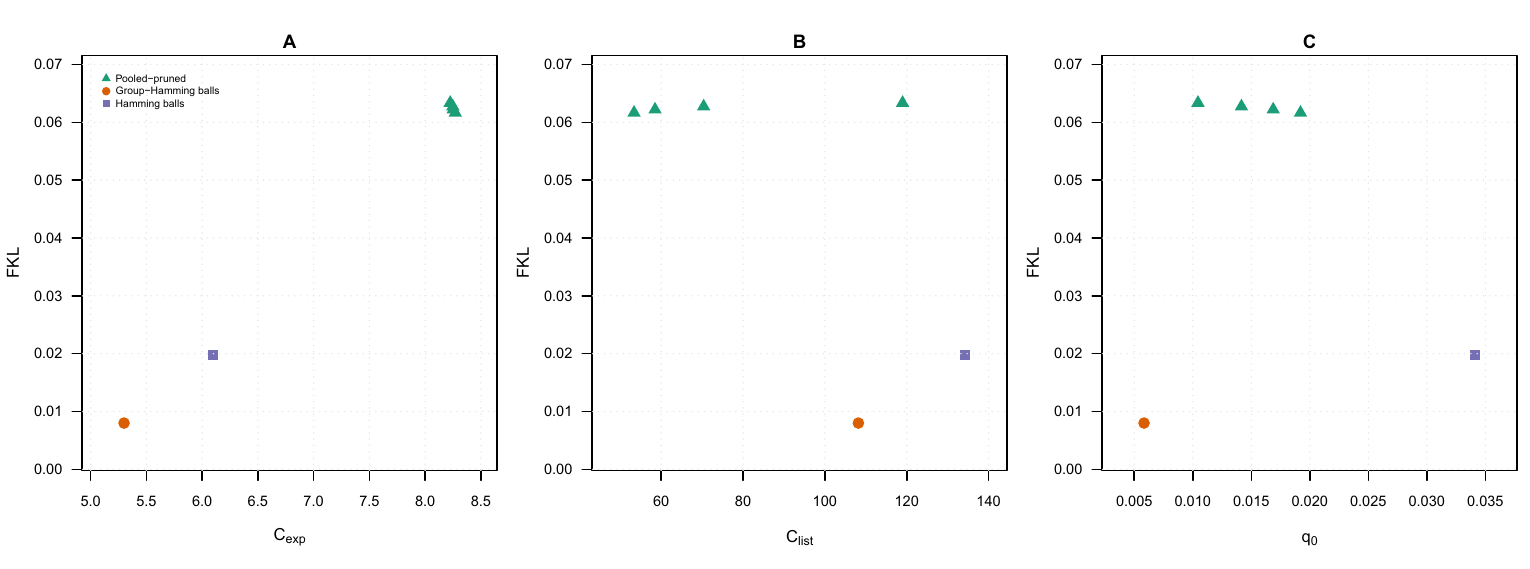}}
\caption{Large \(p=100\) same-target frontier.}
\label{fig:largefrontierclean}
\end{figure}

With these diagnostic-passing reference samples, Tables~\ref{tab:largeendtoendmethods}, \ref{tab:sametargetlargep}, and~\ref{tab:largepghreport} and Figure~\ref{fig:largefrontierclean} show three useful operating points. First, metric-ball reports are useful when the redundancy geometry is known. In Table~\ref{tab:sametargetlargep}, group-Hamming balls improve on ordinary Hamming balls and use little fallback. This is not an external competitor. It is the density-ratio framework instantiated with group-level Hamming geometry.

Second, posterior-cluster kernels and pooled-pruned dictionaries answer different reporting needs. Cluster kernels can have very low raw FKL, but Table~\ref{tab:largeendtoendmethods} also shows large fallback weight. Pooled-pruned summaries combine candidate sources, prune by \(q\)-mass, and can be refit under an explicit upper bound on \(q_0\). Table~\ref{tab:q0constrainedlargep} in Appendix~\ref{app:empiricaldiagnostics} gives this fallback-controlled reading rule.

Third, top-\(M\) summaries remain natural atom-level baselines, but they pay for exact support atoms rather than lower-resolution regions. The empirical conclusion is therefore a rate-distortion statement, not a single universal ranking. Density-ratio diagnostics make visible the tradeoff among raw posterior distortion, fallback reliance, and displayed-list length.

\FloatBarrier

\subsection{Semi-Synthetic Spectroscopy Check}
\label{subsec:semisyntheticcheck}

The semi-synthetic real-\(X\) Tecator check uses the real Tecator predictor matrix and a simulated response generated from a known banded signal. Thus the predictor geometry is real, while the active spectral region is known by construction. This gives a controlled test of whether structured kernels recover a region-level support summary under strong adjacent-channel correlation. Table~\ref{tab:mainsemisynth} and Figure~\ref{fig:mainsemisynth} show improved TV and FKL over fixed hard-region lists and top-\(M\) support atoms at low reporting cost. The response is simulated, so this is not a full real-data claim.

\begin{table}[!htbp]
\centering

\begin{tabular}{lrrrrrr}
\toprule
Method & TV & FKL & $q_0$ & Reporting cost & RMSE gap & Kernels \\
\midrule
Adaptive kernel & 0.058 & 0.017 & 0.001 & 2.348 & 0.0001 & 7.3 \\
Fixed hard & 0.071 & 0.027 & 0.012 & 4.800 & 0.0001 & 17.0 \\
Top-$M$ atoms & 0.142 & 0.203 & 0.129 & 10.490 & 0.0004 & 33.0 \\
\bottomrule
\end{tabular}

\caption{Semi-synthetic real-\(X\) Tecator compression with simulated response.}
\label{tab:mainsemisynth}
\end{table}

\begin{figure}[!htbp]
\centering
\begin{tabular}{@{}cc@{}}
\includegraphics[width=.47\textwidth]{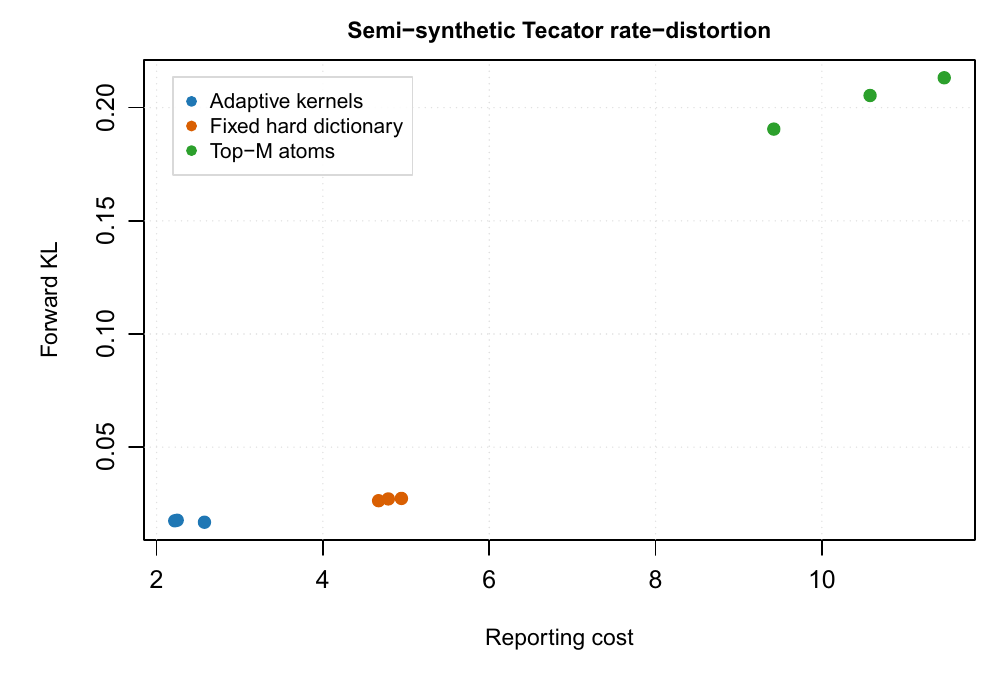} &
\includegraphics[width=.47\textwidth]{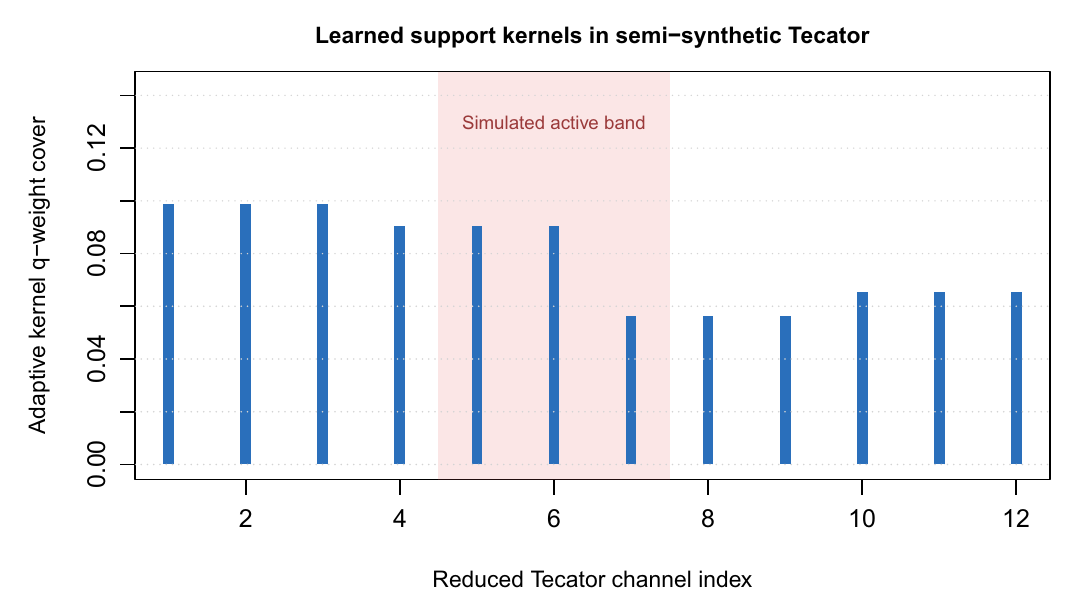}
\end{tabular}
\caption{Semi-synthetic real-\(X\) Tecator diagnostics. The right panel shows q-weighted channel coverage and the simulated active band.}
\label{fig:mainsemisynth}
\end{figure}

\FloatBarrier

Across the experiments, exact-support atoms and the four region dictionary families occupy different points on the same reporting frontier. Density-ratio diagnostics make this frontier explicit by separating posterior distortion, fallback reliance, and displayed-list cost.

\section{Discussion}\label{sec:discussion}

This paper addresses support-level uncertainty reporting for BMA under predictor redundancy, where exact-support summaries can be unstable despite stable lower-resolution posterior structure. Once a summary is written as a support kernel, TV, FKL, RKL, bounded predictive summaries, and the fallback weight \(q_0\) are computable from one density ratio. Empirical log-score gaps are reported as diagnostics, not as direct consequences of the TV theory without extra truncation or tail conditions. The compression step is model-agnostic once the support posterior and predictive kernels are available.

The main target distinction is between changing the Bayesian analysis and summarizing one that has already been fitted. Fixed response-independent hard-region families may have prior interpretations. Posterior-adaptive kernels are compression summaries and need split validation. Dilution-prior and DPP-prior BMA are therefore target-shift comparisons, not compression rows for the fixed posterior.

Region compression is different from support-atom truncation. Under redundancy, many exact supports can encode the same scientific message, while one region can preserve uncertainty among interchangeable predictors. The redundancy result does not say that every correlated design admits useful compression. It says that when posterior mass is concentrated on redundancy regions, support-kernel summaries can be exponentially cheaper than support-atom lists in the number of redundant active groups.

The exact benchmark shows this atom-region separation in an enumerable setting. The large-\(p\) benchmark gives complementary lessons. Group-Hamming metric regions give low distortion with little fallback when group-level redundancy is well matched to the design, and Table~\ref{tab:largepghreport} shows the corresponding displayed report. Posterior-cluster kernels can reduce raw FKL further in some summaries, but the reported fallback weight must be read together with the distortion. Pooled-pruned dictionaries provide a flexible fallback-controlled report when the useful region family is not known in advance. The spectroscopy checks show controlled real-\(X\) and reduced exact real-response behavior, but they do not establish full-resolution real-response performance. Across these settings, low raw distortion, low fallback weight, and short active-list length can conflict, and the density-ratio diagnostics make this conflict explicit.

The method operates after reference-posterior computation, so the compression layer inherits the quality of the reference sample. Tables~\ref{tab:splitstabilityscreen} and~\ref{tab:realresponseexact} in Appendix~\ref{app:empiricaldiagnostics} report split stability and reduced real-response checks. The split-stability check suggests that lower-resolution coverage can be stable even when the exact active-kernel list is only moderately stable.

The empirical evidence consists of exact synthetic benchmarks, sampled large-\(p\) benchmarks with passing reference posterior gates, a semi-synthetic real-\(X\) Tecator experiment with simulated response, and reduced exact real-response spectroscopy checks. Together these results support the paper's central claim that density-ratio region reports can make redundant support posteriors shorter and more interpretable when the reference posterior is reliable and the region geometry matches the design. The main practical qualifications are explicit. Compression inherits reference-posterior error, not every correlated design is compressible, group-Hamming summaries depend on meaningful group definitions, and posterior-adaptive dictionaries require split and stability checks. Full-resolution real-response studies with passing diagnostics would broaden the empirical scope. The contribution is a calibrated way to turn lower-resolution support summaries into auditable posterior reports.

\section*{Acknowledgments}

Xuewen Lu gratefully acknowledges support from a Discovery Grant (RGPIN-2024-03951) from the Natural Sciences and Engineering Research Council (NSERC) of Canada.

\ifBMAincludeSupplement
\appendix

\section{Supporting Theory}
\label{app:supportingtheory}
\setcounter{theorem}{0}
\renewcommand{\thetheorem}{A.\arabic{theorem}}
\providecommand*{\theHtheorem}{\thetheorem}
\renewcommand{\theHtheorem}{appA.\arabic{theorem}}

This appendix gives auxiliary definitions and results for the retained-mass identities, validation bounds, active-set optimization facts, cover arguments, and support-atom separation results used in the paper.

\subsection{Preparatory Definitions and Hard-Region Identities}
\label{subsec:apphardregions}

A hard support region is any subset \(\F \subseteq \Gamma_p\) with positive base-prior mass. A hard-region list specifies which regions are available and how costly they are to report.

\begin{definition}[Hard-region list]
\label{def:representation}
A hard-region list is a triple \(\mathcal R=(\mathcal I,\{\F_i \mid i\in\mathcal I\},c)\), where \(\mathcal I\) is a finite index set, each \(\F_i\subseteq\Gamma_p\) is a support region with positive base-prior mass, and \(c \colon \mathcal I\to[0,\infty)\) is a reporting-cost function. We write \(\Fp=\{\F_i \mid i\in\mathcal I\}\) when only the collection of regions matters, and we write \(c(\F_i)\) for \(c(i)\) when the index is clear.
\end{definition}

\begin{definition}[Group-representative region]
\label{def:grouprepresentative}
Let \(G_1,\dots,G_K\) be fixed predictor groups. For a group set \(S\subseteq\{1,\dots,K\}\) and capacities \(u=(u_k:k\in S)\), define
\[
\mathcal C(S,u) = \left\{\gamma\in\Gamma_p:\supp(\gamma)\subseteq\bigcup_{k\in S}G_k,\ |\supp(\gamma)\cap G_k|\le u_k \text{ for } k\in S\right\}.
\]
When \(u_k=1\), each selected group contributes at most one predictor representative. We call finite lists of such regions group-representative dictionaries. They are response-independent when \(G_1,\dots,G_K\), \(S\), and \(u\) are chosen without using the response.
\end{definition}

The retained-mass identity and bridge bounds for these hard regions are stated in Subsection~\ref{sec:identity}. The hard-region mixture identity below records the same idea after several such regions are mixed with the unrestricted fallback component.

\begin{definition}[Hard-region mixture]
\label{def:familydictionary}
Let $\mathcal C=\{\F_0,\F_1,\dots,\F_M\}$ be a finite list of hard support regions with $\alpha_m=\pi_0(\F_m\mid\D)>0$. The fallback region is $\F_0=\Gamma_p$, so $\alpha_0=1$. Let $c_m=c(\F_m)$ be a nonnegative reporting cost. Define
\[
\pi_m(\gamma) = \pi_0(\gamma\mid\D,\gamma\in\F_m) = \frac{\pi_0(\gamma\mid\D)\ind\{\gamma\in\F_m\}}{\alpha_m}.
\]
For $q=(q_0,\dots,q_M)$ in the simplex $\Delta_M$, the hard-region mixture posterior is
\[
\bar\pi_q(\gamma\mid\D) = \sum_{m=0}^M q_m \pi_m(\gamma).
\]
The fallback weight \(q_0\) is the part of the compressed summary that leaves the unrestricted posterior unchanged. The mixture has density ratio
\[
h_q(\gamma) = \sum_{m=0}^M q_m\frac{\ind\{\gamma\in\F_m\}}{\alpha_m}.
\]
The fallback region contributes the constant term \(q_0\), so if \(q_0>0\), then \(h_q\ge q_0\). This keeps FKL finite.
\end{definition}

\begin{proposition}[Hard-region ratio]
\label{prop:mixtureidentity}
For any region list in Definition~\ref{def:familydictionary} and any \(q\in\Delta_M\),
\[
\bar\pi_q(\gamma\mid\D)=\pi_0(\gamma\mid\D)h_q(\gamma), \qquad \E_{\pi_0}\{h_q(\Gamma)\}=1.
\]
Consequently,
\begin{align*}
\KL\!\left(\bar\pi_q\,\|\,\pi_0\right)&=\E_{\pi_0}\!\left[h_q(\Gamma)\log h_q(\Gamma)\right],\\
\TV\!\left(\bar\pi_q,\pi_0\right)&=\frac12\E_{\pi_0}\!\left[|h_q(\Gamma)-1|\right].
\end{align*}
If $h_q(\gamma)>0$ for $\pi_0$-almost every $\gamma$, then
\[
\KL\!\left(\pi_0\,\|\,\bar\pi_q\right) = \E_{\pi_0}\!\left[-\log h_q(\Gamma)\right].
\]
For every fixed prediction point $\tilde x$,
\[
\TV\!\left(p_{\bar\pi_q}(\cdot\mid\tilde x,\D),p_{\pi_0}(\cdot\mid\tilde x,\D)\right) \le \TV\!\left(\bar\pi_q,\pi_0\right).
\]
\end{proposition}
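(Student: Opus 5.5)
The plan is to treat Proposition~\ref{prop:mixtureidentity} as the indicator-kernel special case of Proposition~\ref{prop:kernelidentity}. Taking \(w_m(\gamma)=\ind\{\gamma\in\F_m\}\) gives \(0\le w_m\le1\), and \(\alpha_m=\E_{\pi_0}w_m(\Gamma)=\pi_0(\F_m\mid\D)>0\) by assumption. The fallback \(\F_0=\Gamma_p\) gives \(w_0\equiv1\). Under this choice \(\pi_m\), \(\bar\pi_q\) and \(h_q\) coincide with \(\pi_m\), \(\bar\pi_q^{\,W}\) and \(h_q^W\), so every displayed claim follows at once. To keep the argument self-contained, I would also verify the steps directly. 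They are short.

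\textbf{Step 1 (the ratio).} Linearity gives
\[
\bar\pi_q(\gamma\mid\D)=\sum_m q_m\pi_0(\gamma\mid\D)\ind\{\gamma\in\F_m\}/\alpha_m=\pi_0(\gamma\mid\D)h_q(\gamma).
\]
Taking the expectation under \(\pi_0\), and using \(\E_{\pi_0}\ind\{\Gamma\in\F_m\}=\alpha_m\) together with \(\sum_m q_m=1\), yields \(\E_{\pi_0}h_q(\Gamma)=1\). So \(\bar\pi_q\) is a probability mass function that is absolutely continuous with respect to \(\pi_0\).

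\textbf{Step 2 (divergences).} On the discrete space \(\Gamma_p\), \(\KL(\bar\pi_q\,\|\,\pi_0)=\sum_\gamma\bar\pi_q\log(\bar\pi_q/\pi_0)\), and this equals \(\sum_\gamma\pi_0 h_q\log h_q\). Terms with \(h_q=0\) vanish under the convention \(0\log0=0\), and supports with \(\pi_0=0\) contribute nothing. Similarly, \(\TV=\tfrac12\sum_\gamma|\pi_0h_q-\pi_0|\). For the reverse KL, \(\KL(\pi_0\,\|\,\bar\pi_q)=\sum_\gamma\pi_0\log(\pi_0/(\pi_0h_q))=\E_{\pi_0}[-\log h_q]\). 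This requires \(h_q>0\) \(\pi_0\)-a.s., which is exactly the stated hypothesis; otherwise the divergence is \(+\infty\).

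\textbf{Step 3 (predictive contraction).} This is the only step that is more than bookkeeping, and I expect it to be the main (mild) obstacle, because it requires treating the predictive as a Markov kernel. Write \(p_{\bar\pi_q}-p_{\pi_0}=\sum_\gamma(\bar\pi_q(\gamma)-\pi_0(\gamma))P_\gamma(\cdot\mid\tilde x,\D)\). For any measurable set \(A\), use \(\sum_\gamma(\bar\pi_q-\pi_0)=0\) to subtract \(1/2\) inside the sum. This gives
\[
|P_{\bar\pi_q}(A)-P_{\pi_0}(A)|=\Bigl|\sum_\gamma(\bar\pi_q-\pi_0)(\gamma)\,(P_\gamma(A)-\tfrac12)\Bigr|\le\tfrac12\sum_\gamma|\bar\pi_q-\pi_0|=\TV(\bar\pi_q,\pi_0).
\]
Taking the supremum over \(A\) gives the bound; this is the data-processing inequality for TV. Alternatively, invoke Proposition~\ref{prop:functionalconsequence} with \(r=\ind_A\) and the centered bound, or cite this step directly from Proposition~\ref{prop:kernelidentity}.
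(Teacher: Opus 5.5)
Your proposal is correct and follows essentially the same route as the paper. Linearity gives $\bar\pi_q=\pi_0h_q$ and $\E_{\pi_0}h_q=1$, the KL and TV identities follow by substitution, and the predictive bound is TV contraction under the predictive Markov kernel; the only difference is that you prove this contraction explicitly by centering $P_\gamma(A)$ at $\tfrac12$, whereas the paper's proof here just invokes contraction (its explicit version, via $\sup_{0\le f\le 1}$, appears in the proof of Proposition~\ref{prop:bridge}).
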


The fallback region contributes \(q_0\pi_0(\cdot\mid\D)\) to \(\bar\pi_q\). Thus \(q_0\) is a fallback diagnostic, not a posterior probability that a scientific group is active. A low expected-reporting-cost solution with small \(q_0\) and small TV or KL distortion gives a compressed region report; a low expected-reporting-cost solution with large \(q_0\) still relies heavily on the unrestricted posterior.

\subsection{Validation and Optimization}

\begin{proposition}[Reference error transfer]
\label{prop:referenceerror}
Let \(\pi_\star\) be the ideal unrestricted BMA posterior and let \(\pi_0\) be the reference posterior used to build and check \(\bar\pi_q^{\,W}\). If \(\TV(\pi_0,\pi_\star)\le\eta\), then
\[
\TV(\bar\pi_q^{\,W},\pi_\star) \le \TV(\bar\pi_q^{\,W},\pi_0)+\eta .
\]
Consequently, for every \(|\varphi|\le B\),
\[
\left| \E_{\bar\pi_q^{\,W}}\varphi-\E_{\pi_\star}\varphi \right| \le 2B\{\TV(\bar\pi_q^{\,W},\pi_0)+\eta\}.
\]
The same bound holds after applying any common posterior-predictive Markov kernel. Thus a small density-ratio diagnostic relative to \(\pi_0\) transfers to the target posterior only to the extent that the reference posterior itself has been validated.
\end{proposition}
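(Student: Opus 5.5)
The plan is to obtain the first display from the triangle inequality for total variation, and then to deduce the functional and predictive consequences from results already stated.

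\textbf{TV transfer.} On the finite space \(\Gamma_p\), total variation is half the \(\ell_1\) distance between probability vectors, so it is a metric. Hence
\[
\TV(\bar\pi_q^{\,W},\pi_\star)\le\TV(\bar\pi_q^{\,W},\pi_0)+\TV(\pi_0,\pi_\star)\le\TV(\bar\pi_q^{\,W},\pi_0)+\eta .
\]
Nothing about the density ratio \(h_q^W\) is needed at this stage. \(\bar\pi_q^{\,W}\) is simply some probability measure on \(\Gamma_p\), by Proposition~\ref{prop:kernelidentity}.

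\textbf{Bounded functionals.} For \(|\varphi|\le B\), I will use the elementary duality bound
\[
|\E_\mu\varphi-\E_\nu\varphi|=\Bigl|\sum_\gamma\varphi(\gamma)\{\mu(\gamma)-\nu(\gamma)\}\Bigr|\le B\sum_\gamma|\mu(\gamma)-\nu(\gamma)|=2B\,\TV(\mu,\nu).
\]
This is exactly the argument behind Proposition~\ref{prop:functionalconsequence}, now applied with \(\mu=\bar\pi_q^{\,W}\) and \(\nu=\pi_\star\), which need not be related by \(h_q^W\). Combining it with the TV transfer gives the stated \(2B\{\TV(\bar\pi_q^{\,W},\pi_0)+\eta\}\) bound.

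\textbf{Predictive consequence.} Let \(K(\cdot\mid\gamma)\) be a common Markov kernel and write \(\mu K=\sum_\gamma\mu(\gamma)K(\cdot\mid\gamma)\). I will show the contraction \(\TV(\mu K,\nu K)\le\TV(\mu,\nu)\). For any measurable set \(A\),
\[
|\mu K(A)-\nu K(A)|=\Bigl|\sum_\gamma\{\mu(\gamma)-\nu(\gamma)\}K(A\mid\gamma)\Bigr|\le\TV(\mu,\nu).
\]
This holds because \(K(A\mid\gamma)\in[0,1]\), and the supremum of \(\sum_\gamma f(\gamma)\{\mu(\gamma)-\nu(\gamma)\}\) over \(f\) taking values in \([0,1]\) equals \(\TV(\mu,\nu)\).

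Applying the contraction to the TV transfer bound, and then the same duality bound on the predictive space, gives the predictive version.

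The only point needing care is that the kernel is common to both laws. That is, the model-specific predictive kernels \(P_\gamma\) are the same under \(\pi_0\), under \(\pi_\star\), and under the compressed law, so only the mixing weights differ. There is no real obstacle here; this is the step I would state explicitly.
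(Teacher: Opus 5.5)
Your proposal is correct and follows essentially the paper's argument: the triangle inequality for TV, the $2B\,\TV$ duality bound for bounded functionals, and Markov-kernel contraction. The only cosmetic difference is the order of the steps. The paper splits the functional gap through $\pi_0$, using Proposition~\ref{prop:functionalconsequence} for one piece and $2B\eta$ for the other, and applies the triangle inequality after pushing forward by $K$. You instead apply the duality bound and the contraction directly to the pair $(\bar\pi_q^{\,W},\pi_\star)$. Both routes give the same constant.
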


\begin{lemma}[Mass estimation]
\label{lem:alphaestimation}
Suppose
\[
\widehat\alpha_m = N_\alpha^{-1}\sum_{i=1}^{N_\alpha}w_m(\Gamma_i^\alpha)
\]
is computed from an independent and identically distributed (iid) mass-estimation sample from $\pi_0(\cdot\mid\D)$ for kernels $0\le w_m\le1$. With probability at least $1-\delta$,
\[
\max_{0\le m\le M}|\widehat\alpha_m-\alpha_m| \le \sqrt{\frac{\log\{2(M+1)/\delta\}}{2N_\alpha}} =e_\alpha.
\]
On this event,
\[
\max_m \left| \frac1{\widehat\alpha_m^a} - \frac1{\alpha_m^a} \right| \le \frac{e_\alpha}{a^2}.
\]
\end{lemma}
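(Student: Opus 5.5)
The plan is a uniform Hoeffding bound followed by a deterministic reciprocal perturbation. The two parts of the lemma are handled separately.

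\textbf{Uniform deviation.} For each fixed \(m\), the summands \(w_m(\Gamma_i^\alpha)\) are iid with mean \(\alpha_m=\E_{\pi_0}w_m(\Gamma)\), since the mass-estimation draws come from \(\pi_0(\cdot\mid\D)\). They lie in \([0,1]\) because \(0\le w_m\le1\). Hoeffding's inequality then gives, for every \(t>0\),
\[
\PP\{|\widehat\alpha_m-\alpha_m|>t\}\le 2\exp(-2N_\alpha t^2).
\]
A union bound over the \(M+1\) kernels \(m=0,\dots,M\) bounds the failure probability by \(2(M+1)\exp(-2N_\alpha t^2)\). Setting this equal to \(\delta\) and solving gives \(t=\sqrt{\log\{2(M+1)/\delta\}/(2N_\alpha)}=e_\alpha\). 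The dictionary is fixed before the mass-estimation sample is drawn; it is built on an independent construction split. The kernels may therefore be treated as deterministic, and no uniform-convergence argument over a class is needed. The fallback term \(m=0\) is trivial, since \(w_0\equiv1\) gives \(\widehat\alpha_0=\alpha_0=1\), but including it costs nothing.

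\textbf{Floored reciprocals.} On this event, the map \(x\mapsto\max\{x,a\}\) is 1-Lipschitz, so
\[
|\widehat\alpha_m^a-\alpha_m^a|\le|\widehat\alpha_m-\alpha_m|\le e_\alpha .
\]
Both \(\widehat\alpha_m^a\) and \(\alpha_m^a\) are at least \(a>0\), which gives
\[
\left|\frac1{\widehat\alpha_m^a}-\frac1{\alpha_m^a}\right|=\frac{|\alpha_m^a-\widehat\alpha_m^a|}{\widehat\alpha_m^a\,\alpha_m^a}\le\frac{e_\alpha}{a^2}.
\]
Taking the maximum over \(m\) completes the proof. This also shows that (V4) of Theorem~\ref{thm:heldoutmain} holds with \(r_\alpha=e_\alpha/a^2\).

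\textbf{Main obstacle.} There is no real obstacle. The only point needing care is the role of the floor: it is what keeps the reciprocals bounded. Without it, a kernel with \(\widehat\alpha_m\) near zero would make \(1/\widehat\alpha_m\) blow up. The 1-Lipschitz property of the floor is what allows the raw deviation bound to transfer to the floored masses.
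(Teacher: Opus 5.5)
Your proof is correct and follows the paper's argument: Hoeffding for each fixed kernel, a union bound over the $M+1$ kernels, and then the $a^{-2}$-Lipschitz property of $x\mapsto 1/\max\{x,a\}$. You unpack that last step as the $1$-Lipschitz floor followed by the identity $|1/u-1/v|=|u-v|/(uv)\le |u-v|/a^2$ for $u,v\ge a$, which is the same fact the paper invokes directly.
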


\begin{lemma}[Mass-floor discrepancy]
\label{lem:massfloordiscrepancy}
For \(q\in\Delta_M\), define
\[
\Delta_a(q)=\sum_{m:\alpha_m<a} q_m\left(1-\frac{\alpha_m}{a}\right).
\]
Then
\[
\E_{\pi_0}h_q^a(\Gamma)=1-\Delta_a(q),
\qquad
\E_{\pi_0}|h_q(\Gamma)-h_q^a(\Gamma)|=\Delta_a(q).
\]
If both ratios are bounded below by \(\epsilon_0\), then
\[
\left|\E_{\pi_0}\{-\log h_q(\Gamma)\}-\E_{\pi_0}\{-\log h_q^a(\Gamma)\}\right|
\le \frac{\Delta_a(q)}{\epsilon_0}.
\]
Also,
\[
\left|\frac12\E_{\pi_0}|h_q(\Gamma)-1|-\frac12\E_{\pi_0}|h_q^a(\Gamma)-1|\right|
\le \frac{\Delta_a(q)}2.
\]
If both ratios lie in \([\epsilon_0,U]\), then the RKL discrepancy satisfies
\[
\left|\E_{\pi_0}\{h_q\log h_q\}-\E_{\pi_0}\{h_q^a\log h_q^a\}\right|
\le \max\{|\log\epsilon_0|,|\log U|+1\}\,\Delta_a(q).
\]
\end{lemma}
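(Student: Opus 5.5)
The statement to prove is Lemma~\ref{lem:massfloordiscrepancy}. The whole argument rests on the pointwise decomposition
\[
h_q(\gamma)-h_q^a(\gamma)=\sum_{m:\alpha_m<a} q_m w_m(\gamma)\Bigl(\frac1{\alpha_m}-\frac1a\Bigr),
\]
which holds because \(\alpha_m^a=\alpha_m\) whenever \(\alpha_m\ge a\). Every summand is nonnegative, since \(0\le w_m\le1\), \(q_m\ge0\), and \(1/\alpha_m>1/a\) on the index set. Hence \(h_q\ge h_q^a\) pointwise.

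\textbf{The two identities.} First compute \(\E_{\pi_0}h_q^a\) term by term, using \(\E_{\pi_0}w_m=\alpha_m\):
\[
\E_{\pi_0}h_q^a=\sum_m q_m\frac{\alpha_m}{\alpha_m^a}=\sum_{m:\alpha_m\ge a}q_m+\sum_{m:\alpha_m<a}q_m\frac{\alpha_m}{a}=1-\Delta_a(q),
\]
where the last step uses \(\sum_m q_m=1\). By Proposition~\ref{prop:kernelidentity}, \(\E_{\pi_0}h_q=1\). Because \(h_q-h_q^a\ge0\), we have \(\E|h_q-h_q^a|=\E(h_q-h_q^a)=\Delta_a(q)\). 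I would also check this directly from the decomposition: \(q_m\alpha_m(1/\alpha_m-1/a)=q_m(1-\alpha_m/a)\).

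\textbf{TV discrepancy.} The reverse triangle inequality gives \(\bigl||h_q-1|-|h_q^a-1|\bigr|\le|h_q-h_q^a|\) pointwise. Taking \(\pi_0\)-expectations and halving yields the bound \(\Delta_a(q)/2\).

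\textbf{FKL discrepancy.} By the mean value theorem, \(|\log x-\log y|\le|x-y|/\min(x,y)\). When both ratios are at least \(\epsilon_0\), this is at most \(|x-y|/\epsilon_0\). Integrating and using the \(L^1\) identity above gives \(\Delta_a(q)/\epsilon_0\).

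\textbf{RKL discrepancy.} Let \(f(x)=x\log x\). Its derivative \(f'(x)=\log x+1\) satisfies \(|f'(x)|\le\max\{|\log\epsilon_0-(-1)|\dots\}\) on \([\epsilon_0,U]\). The clean bound is \(\sup_{[\epsilon_0,U]}|\log x+1|\le\max\{|\log\epsilon_0|+1,|\log U|+1\}\), but this carries an extra \(+1\) on the \(\epsilon_0\) side compared with the stated constant. The sharper treatment uses the sign of \(\log x+1\), which is increasing:
\begin{itemize}
\item if \(\log\epsilon_0\le-1\), then \(|\log\epsilon_0+1|=|\log\epsilon_0|-1\le|\log\epsilon_0|\);
\item otherwise \(\epsilon_0>e^{-1}\), so \(|\log\epsilon_0+1|\le 1+|\log\epsilon_0|\). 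In this case \(\log x+1>0\) on the whole interval, so the supremum is attained at \(U\) and equals \(\log U+1\le|\log U|+1\).
\end{itemize}
In both cases the supremum of \(|f'|\) over \([\epsilon_0,U]\) is at most \(\max\{|\log\epsilon_0|,|\log U|+1\}\). The mean value theorem then gives a pointwise bound \(\max\{\cdot\}\,|h_q-h_q^a|\), and integrating yields the stated inequality. This case check on the derivative is the only step needing care; everything else is linear bookkeeping built on the nonnegative decomposition.
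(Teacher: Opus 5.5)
Your proof is correct and follows the paper's argument step for step. Both use the pointwise inequality $h_q^a\le h_q$ from $\alpha_m^a\ge\alpha_m$, the term-by-term computation of $\E_{\pi_0}h_q^a$ together with $\E_{\pi_0}h_q=1$, the reverse triangle inequality for TV, and Lipschitz bounds on $-\log z$ and $z\log z$ applied to $\E_{\pi_0}|h_q-h_q^a|=\Delta_a(q)$. Your monotonicity check that $\sup_{[\epsilon_0,U]}|1+\log z|\le\max\{|\log\epsilon_0|,|\log U|+1\}$ justifies a constant the paper only asserts; for a clean write-up, delete the unfinished partial bound that precedes it and state explicitly that in your first case the right endpoint contributes at most $|\log U|+1$.
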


\begin{proposition}[MCMC check]
\label{prop:mixingcertification}
Condition on the fixed construction split and on the mass-estimation event in Theorem~\ref{thm:empiricalcertification}. Let the validation output \(\Gamma_{1:N}\) be a stationary beta-mixing sequence from \(\pi_0(\cdot\mid\D)\) with coefficients \(\beta_{\mathrm{mix}}(k)\). Suppose \(N=2\mu s\). Split the output into \(2\mu\) consecutive blocks of length \(s\), keep the odd blocks, and define the blocked validation objective by replacing \(N^{-1}\sum_i\ell\{\widehat h_q^a(\Gamma_i)\}\) with the average of the \(\mu\) odd-block means. Let \(\widehat q_{\mathrm{blk}}\) minimize this blocked objective over the same feasible set \(\mathcal Q\) as in Theorem~\ref{thm:empiricalcertification}. Under the same boundedness and Lipschitz assumptions, with probability at least \(1-\delta-(\mu-1)\beta_{\mathrm{mix}}(s)\),
\[
\mathcal L_{\beta,\tau}^a(\widehat q_{\mathrm{blk}}) \le \inf_{q\in\mathcal Q}\mathcal L_{\beta,\tau}^a(q) + 8La^{-1}\sqrt{\frac{2\log\{2(M+1)\}}{\mu}} + 4B_\ell\sqrt{\frac{\log(2/\delta)}{2\mu}} + 2L r_\alpha .
\]
\end{proposition}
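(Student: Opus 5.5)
The plan is to reduce Proposition~\ref{prop:mixingcertification} to the iid argument behind Theorem~\ref{thm:empiricalcertification}, using Yu's blocking technique for beta-mixing sequences. Condition throughout on the construction split and on the mass-estimation event, so that the dictionary \(W\), the floored estimates \(\widehat\alpha_m^a\), and hence every function \(\gamma\mapsto\ell\{\widehat h_q^a(\gamma)\}\) for \(q\in\mathcal Q\) are deterministic. For each kept odd block \(j=1,\dots,\mu\), define the block functional
\[
Z_j(q)=s^{-1}\sum_{i\in B_{2j-1}}\ell\{\widehat h_q^a(\Gamma_i)\}.
\]
The blocked objective is \(\mu^{-1}\sum_j Z_j(q)\) plus the deterministic cost and entropy terms. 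By stationarity each \(Z_j(q)\) has mean \(\E_{\pi_0}\ell\{\widehat h_q^a(\Gamma)\}\), and by (V3) it satisfies \(|Z_j(q)|\le B_\ell\).

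\textbf{Step 1 (coupling).} Apply the Yu/Eberlein blocking lemma. The \(\mu\) odd blocks are separated by gaps of length \(s\), so for any bounded measurable event \(E\) in the odd-block variables, the probability of \(E\) under the true joint law differs from its probability under the law of \(\mu\) independent copies of the block marginal by at most \((\mu-1)\beta_{\mathrm{mix}}(s)\). We take \(E\) to be the failure event of the uniform deviation bound, namely
\[
\sup_{q\in\mathcal Q}\Bigl|\mu^{-1}\textstyle\sum_j Z_j(q)-\E Z_1(q)\Bigr| > \text{rhs}.
\]
This event is measurable provided we restrict to a countable dense subset of \(\mathcal Q\), which loses nothing by continuity in \(q\). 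Bounding \(E\) under independent blocks therefore transfers to the dependent chain at an additive cost of \((\mu-1)\beta_{\mathrm{mix}}(s)\).

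\textbf{Step 2 (iid uniform deviation over \(\mu\) independent blocks).} This step repeats the proof of Theorem~\ref{thm:empiricalcertification} with \(N\) replaced by \(\mu\).
\begin{itemize}
\item \emph{Concentration.} McDiarmid's inequality applies because the block averages have bounded differences \(2B_\ell/\mu\). This gives the \(B_\ell\sqrt{\log(2/\delta)/(2\mu)}\) term.
\item \emph{Symmetrization.} Symmetrization bounds the expected supremum by twice a Rademacher complexity.
\item \emph{Contraction.} The Ledoux--Talagrand contraction removes \(\ell\) at cost \(L\). The inner map is linear: \(q\mapsto\widehat h_q^a(\gamma)=\sum_m q_m w_m(\gamma)/\widehat\alpha_m^a\). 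After averaging within a block it remains linear in \(q\), with coordinates bounded by \(a^{-1}\).
\item \emph{Linear class over the simplex.} Since \(q\in\Delta_M\), the supremum of a linear form over the simplex is attained at a vertex. Massart's finite-class lemma over the \(M+1\) vertices, applied to both signs (\(2(M+1)\) functions), gives \(a^{-1}\sqrt{2\log\{2(M+1)\}/\mu}\).
\end{itemize}
The constants 8 and 4 then arise exactly as in the iid theorem, through the two one-sided applications and the \(2\times\) from comparing the minimizer with the infimum.

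\textbf{Step 3 (error decomposition).} Write \(\mathcal L^a\) for the population objective with the true \(\alpha_m^a\), and \(\widetilde{\mathcal L}\) for the objective with the estimated \(\widehat\alpha_m^a\) in the population expectation. By (V2)--(V4), \(|\widehat h_q^a-h_q^a|\le\sum_m q_m r_\alpha\le r_\alpha\) pointwise, so \(|\mathcal L^a-\widetilde{\mathcal L}|\le Lr_\alpha\) uniformly in \(q\). Let \(q^\ast\) be an (approximate) minimizer of \(\mathcal L^a\) over \(\mathcal Q\). The chain
\[
\mathcal L^a(\widehat q_{\mathrm{blk}})\le\widetilde{\mathcal L}(\widehat q_{\mathrm{blk}})+Lr_\alpha\le\widehat{\mathcal L}_{\mathrm{blk}}(\widehat q_{\mathrm{blk}})+\text{dev}+Lr_\alpha\le\widehat{\mathcal L}_{\mathrm{blk}}(q^\ast)+\text{dev}+Lr_\alpha\le\mathcal L^a(q^\ast)+2\,\text{dev}+2Lr_\alpha
\]
gives the bound. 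The cost and entropy penalties cancel throughout because they are deterministic.

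\textbf{Main obstacle.} The delicate step is Step 1. The blocking lemma must be applied to an event defined by a supremum over \(q\), and the odd-block marginal must be correctly identified. Stationarity ensures each block has the same joint law of length \(s\), so the independent copies are iid, and the Rademacher argument is applied to block averages rather than to individual draws. Once this is handled, the union of the two failure events gives the stated probability \(1-\delta-(\mu-1)\beta_{\mathrm{mix}}(s)\).
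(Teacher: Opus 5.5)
Your architecture matches the paper's. You condition on construction and mass estimation, pass from the odd blocks to $\mu$ independent copies at cost $(\mu-1)\beta_{\mathrm{mix}}(s)$, run the iid argument over $\mu$ blocks, absorb the mass-estimation error as $Lr_\alpha$, and finish with the two-deviation comparison. Your Yu/Eberlein event-transfer lemma does the same job as the Berbee coupling the paper uses. Keeping $\widehat h_q^a$ inside the block functional and switching to $h_q^a$ only in Step~3 is also fine.

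The step that does not go through as written is the contraction in Step~2, and this is exactly where the blocked case stops being a copy of the iid proof. After symmetrization over the independent blocks, the $j$th summand is
\[
\sigma_j Z_j(q)=\sigma_j\,\frac1s\sum_{r=1}^s\ell\{u_{jr}(q)\},\qquad u_{jr}(q)=\widehat h_q^a(\Gamma^\ast_{jr}).
\]
This is not a Lipschitz function of the block-averaged linear statistic $s^{-1}\sum_r u_{jr}(q)$, because $\ell$ is nonlinear: for $\ell(z)=-\log z$, the average of logs is not the log of the average. Scalar Ledoux--Talagrand contraction needs each summand to have the form $\phi_j\{t_j(q)\}$ with $t_j(q)$ a scalar. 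So you cannot ``remove $\ell$ at cost $L$'' and then average the linear map within the block.

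The paper handles this with a vector-contraction inequality. Write $v_q(O)\in\R^s$ for the centred values $u_{jr}(q)-c_I$ and set $\psi(v)=s^{-1}\sum_r\{\ell(v_r+c_I)-\ell(c_I)\}$. Then $\psi$ is $L/\sqrt s$-Lipschitz in Euclidean norm, and the contraction introduces $\mu s$ independent signs $\epsilon_{jr}$. Massart's lemma over those $\mu s$ terms yields a factor $\sqrt s$ that cancels the $1/\sqrt s$.

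An equally valid and more elementary repair is to pull the within-block average out of the supremum:
\[
\sup_{q\in\mathcal Q}\Bigl|\frac1\mu\sum_{j=1}^\mu\sigma_j\frac1s\sum_{r=1}^s\widetilde\ell\{u_{jr}(q)-c_I\}\Bigr|\le\frac1s\sum_{r=1}^s\sup_{q\in\mathcal Q}\Bigl|\frac1\mu\sum_{j=1}^\mu\sigma_j\widetilde\ell\{u_{jr}(q)-c_I\}\Bigr|.
\]
Then apply scalar contraction and Massart to each within-block position $r$ separately. This gives the same $4La^{-1}\sqrt{2\log\{2(M+1)\}/\mu}$ expected deviation.

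Either way, you also need centring at some $c_I\in I$, with $\widetilde\ell(u)=\ell(u+c_I)-\ell(c_I)$, applied before symmetrization. This ensures the contraction acts on a function vanishing at $0$; $\ell$ itself need not vanish, or even be defined, at $0$ for the KL losses. With this repair, your Step~3 chain delivers the stated bound. The measurability issue you flagged as the main obstacle in Step~1 is routine by comparison.
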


\begin{corollary}[Finite-pool check]
\label{cor:endtoendcertification}
Condition on the construction split and on a finite candidate pool \(\mathcal W=\{w_0,\dots,w_J\}\) containing the fallback kernel \(w_0\equiv1\). Let all retained masses be lower bounded by the floor \(a\), and let validation and mass-estimation samples be independent of the construction split. Suppose the active-set solver returns \(\widehat q\in\Delta_{\epsilon_0}\) with empirical finite-pool optimization gap
\[
\widehat{\mathcal L}_{\beta,\tau}(\widehat q) \le \inf_{q\in\Delta_{\epsilon_0}^{J}} \widehat{\mathcal L}_{\beta,\tau}(q) + \varepsilon_{\mathrm{opt}},
\]
where \(\Delta_{\epsilon_0}^{J}\) is the simplex over the finite pool with fallback weight \(q_0\ge\epsilon_0\). Under the assumptions of Theorem~\ref{thm:empiricalcertification}, and on the reciprocal mass-estimation event with radius \(r_\alpha\), with probability at least \(1-\delta\) over the validation sample,
\[
\mathcal L_{\beta,\tau}^a(\widehat q) \le \inf_{q\in\Delta_{\epsilon_0}^{J}} \mathcal L_{\beta,\tau}^a(q) + \varepsilon_{\mathrm{opt}} + 8La^{-1}\sqrt{\frac{2\log\{2(J+1)\}}{N}} + 4B_\ell\sqrt{\frac{\log(2/\delta)}{2N}} + 2L r_\alpha .
\]
If the finite pool is itself a generated subset of a larger ideal comparison class \(\mathfrak W\), the corollary leaves a candidate-class approximation term between the best generated-pool mixture and the best mixture in that larger class.
\end{corollary}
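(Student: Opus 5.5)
The plan is to reduce Corollary~\ref{cor:endtoendcertification} to Theorem~\ref{thm:empiricalcertification}. We apply the theorem with dictionary \(W=\mathcal W\) (so \(M=J\)) and feasible set \(\mathcal Q=\Delta_{\epsilon_0}^{J}\). The only change is that the theorem assumes exact empirical minimization, while here we have an \(\varepsilon_{\mathrm{opt}}\)-approximate minimizer. So rather than citing the theorem's conclusion, I will rerun its proof skeleton and track the extra gap.

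The decomposition has three steps.
\begin{enumerate}
\item Let \(q^\ast\) be a (near-)minimizer of \(\mathcal L_{\beta,\tau}^a\) over \(\Delta_{\epsilon_0}^{J}\). This set is compact and the objective is continuous, since the ratios are bounded in \(I\) by (V2). Write
\[
\mathcal L^a(\widehat q)-\mathcal L^a(q^\ast)=[\mathcal L^a(\widehat q)-\widehat{\mathcal L}(\widehat q)]+[\widehat{\mathcal L}(\widehat q)-\widehat{\mathcal L}(q^\ast)]+[\widehat{\mathcal L}(q^\ast)-\mathcal L^a(q^\ast)].
\]
The cost and entropy penalties are deterministic and cancel in the outer brackets. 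The middle bracket is at most \(\varepsilon_{\mathrm{opt}}\) by hypothesis, because \(\widehat{\mathcal L}(q^\ast)\ge\inf\widehat{\mathcal L}\).
\item In each outer bracket, replace \(\widehat h^a_q\) by \(h^a_q\) inside the empirical average. Since \(|\widehat h^a_q-h^a_q|\le\sum_m q_m w_m|1/\widehat\alpha^a_m-1/\alpha^a_m|\le r_\alpha\) and \(\ell\) is \(L\)-Lipschitz on \(I\), each replacement costs at most \(Lr_\alpha\). Together this gives the \(2Lr_\alpha\) term. The bound holds on the reciprocal mass-estimation event, which is independent of the validation sample, so we can condition on it.
\item What remains is \(2\sup_{q}|\frac1N\sum_i\ell\{h^a_q(\Gamma_i)\}-\E_{\pi_0}\ell\{h^a_q\}|\), up to the sharper one-sided treatment used in the theorem. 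Its expectation is bounded by symmetrization and Talagrand contraction by \(2L\) times the Rademacher complexity of \(\{h^a_q:q\in\Delta\}\). Since \(q\mapsto h^a_q\) is linear, the supremum over the simplex is attained at the \(J+1\) vertices \(w_m/\alpha^a_m\), each bounded by \(a^{-1}\). Massart's finite-class lemma then gives \(a^{-1}\sqrt{2\log\{2(J+1)\}/N}\), where the factor \(2\) accounts for \(\pm\) signs. McDiarmid's inequality, with bounded differences \(2B_\ell/N\), adds the \(\sqrt{\log(2/\delta)/(2N)}\) deviation. The constants \(8\) and \(4\) come from doubling over the two outer brackets, exactly as in Theorem~\ref{thm:empiricalcertification}.
\end{enumerate}

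I expect the main obstacle to be bookkeeping rather than a new idea. First, the Rademacher step must apply to the floored population ratio \(h^a_q\) with fixed constants, after the mass-estimation replacement and not before. Otherwise the function class would depend on the mass-estimation sample; this is harmless by independence, but it should be stated. Second, the probability statement must be made conditional on the construction split and on the mass event, so that the \(1-\delta\) refers to the validation sample only. Finally, the closing remark about a larger class \(\mathfrak W\) needs no proof. It is simply the identity
\[
\inf_{\Delta^J_{\epsilon_0}}\mathcal L^a=\inf_{\mathfrak W}\mathcal L^a+\bigl(\inf_{\Delta^J_{\epsilon_0}}\mathcal L^a-\inf_{\mathfrak W}\mathcal L^a\bigr),
\]
which names the approximation term.
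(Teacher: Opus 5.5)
Your proposal is correct and is essentially the paper's proof. The paper also reruns the proof of Theorem~\ref{thm:empiricalcertification} with \(M=J\) on the fixed pool to obtain \(\sup_{q\in\Delta_{\epsilon_0}^{J}}|\widehat{\mathcal L}_{\beta,\tau}(q)-\mathcal L_{\beta,\tau}^a(q)|\le\frac12 B_N\), then uses your three-term chain through a population minimizer \(q^\star\), with \(\varepsilon_{\mathrm{opt}}\) absorbing the middle term.

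The only slip is bookkeeping in Step~3. To land on the \(8\) through the paper's two-sided uniform bound, the factor before Massart must be \(4L\) (symmetrization \(2\) times absolute-value contraction \(2L\)), not \(2L\). Also, the theorem's argument is two-sided rather than the one-sided treatment you attribute to it. Neither point affects correctness, since a valid smaller constant would still imply the stated bound.
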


For the active-set result, fix a finite pool of normalized support-kernel columns \(g_\ell=w_\ell/\alpha_\ell\) including the fallback column \(g_0\equiv1\). For \(\epsilon_0>0\), let \(\Delta_{\epsilon_0}=\{q\in\Delta_L:q_0\ge\epsilon_0\}\), and define the exact empirical forward-KL objective
\[
\widehat\Phi(q)=\frac1N\sum_{i=1}^N-\log\!\left\{\sum_{\ell}q_\ell g_\ell(\Gamma_i)\right\}+\beta\sum_\ell q_\ell c_\ell+\tau\sum_\ell q_\ell\log q_\ell ,
\]
with the convention \(0\log0=0\). For an active set \(A\subseteq\{0,\ldots,L\}\) containing \(0\), write \(\Delta_{\epsilon_0}(A)=\{q\in\Delta_{\epsilon_0}:q_\ell=0\text{ for }\ell\notin A\}\) and \(q_A\in\operatorname*{arg\,min}_{q\in\Delta_{\epsilon_0}(A)}\widehat\Phi(q)\).
When \(\tau>0\), directional derivatives at zero coordinates are interpreted in the extended-real sense. A finite inactive-coordinate KKT residual requires either \(\tau=0\), a smoothed entropy term, or an active-weight lower bound on the coordinates being checked.

\begin{theorem}[Active-set optimization]
\label{thm:columngeneration}
Let
\[
T_{\epsilon_0}(q)=\{d:\mathbf 1^\top d=0,\ d_\ell\ge0\ \text{if }q_\ell=0,\ d_0\ge0\ \text{if }q_0=\epsilon_0\},
\]
and let \(D\widehat\Phi(q;d)=\lim_{t\downarrow0}\{\widehat\Phi(q+td)-\widehat\Phi(q)\}/t\) denote the one-sided directional derivative along feasible directions, allowing extended-real values. Then the following statements hold.
\begin{enumerate}[label=(\roman*), leftmargin=1.8em]
\item \(\widehat\Phi\) is finite and convex on \(\Delta_{\epsilon_0}\) for every \(\tau\ge0\).
\item If \(A\subseteq B\) and \(0\in A\), then \(\widehat\Phi(q_B)\le\widehat\Phi(q_A)\).
\item If \(q_A\) satisfies \(D\widehat\Phi(q_A;d)\ge0\) for every \(d\in T_{\epsilon_0}(q_A)\), then
\[
\widehat\Phi(q_A)=\min_{q\in\Delta_{\epsilon_0}}\widehat\Phi(q).
\]
Since \(q_A\) already minimizes over \(\Delta_{\epsilon_0}(A)\), it is enough to check the directions in \(T_{\epsilon_0}(q_A)\) that put positive mass on at least one coordinate in \(A^c\).
\end{enumerate}
\end{theorem}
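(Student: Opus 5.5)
The plan is to prove (i)–(iii) in order, since (iii) uses the convexity from (i) and the feasibility logic behind (ii).

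For (i), expand \(\widehat\Phi\) into its three terms.
\begin{itemize}
\item \emph{Finiteness of the data term.} On \(\Delta_{\epsilon_0}\) we have \(q_0\ge\epsilon_0>0\). The columns satisfy \(g_\ell\ge0\) and \(g_0\equiv1\), so \(\sum_\ell q_\ell g_\ell(\Gamma_i)\ge\epsilon_0\). Each \(\alpha_\ell>0\), so every \(g_\ell\) is finite. Hence every logarithm is finite.
\item \emph{Convexity of the data term.} The map \(q\mapsto\sum_\ell q_\ell g_\ell(\Gamma_i)\) is linear and positive on \(\Delta_{\epsilon_0}\). Composing it with the convex function \(-\log\) gives a convex function of \(q\), and an average of convex functions is convex.
\item \emph{Cost term.} \(\beta\sum_\ell q_\ell c_\ell\) is linear, hence convex and finite.
\item \emph{Entropy term.} \(z\mapsto z\log z\) with \(0\log0=0\) is continuous and convex on \([0,1]\). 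So \(\tau\sum_\ell q_\ell\log q_\ell\) is finite and convex for \(\tau\ge0\).
\end{itemize}
The sum of these finite convex terms is finite and convex, which proves (i).

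For (ii), the argument is pure feasibility. If \(0\in A\subseteq B\), then \(\Delta_{\epsilon_0}(A)\subseteq\Delta_{\epsilon_0}(B)\), because vanishing outside \(A\) implies vanishing outside \(B\). Both sets are nonempty: put \(q_0=1\). Both are compact, and \(\widehat\Phi\) is continuous on them. The continuity comes from the lower bound \(\epsilon_0\) inside the logarithm and the continuity of \(z\log z\) at \(0\). So the minimizers \(q_A\) and \(q_B\) exist. The minimum over the larger set is at most the minimum over the smaller one, which gives \(\widehat\Phi(q_B)\le\widehat\Phi(q_A)\).

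For (iii), I would use the standard first-order optimality argument for a convex function on a convex set.
\begin{enumerate}
\item Take any \(q\in\Delta_{\epsilon_0}\) and set \(d=q-q_A\).
\item Check that \(d\in T_{\epsilon_0}(q_A)\). We have \(\mathbf 1^\top d=0\). If \((q_A)_\ell=0\), then \(d_\ell=q_\ell\ge0\). If \((q_A)_0=\epsilon_0\), then \(d_0=q_0-\epsilon_0\ge0\).
\item Use convexity along the segment. The function \(\phi(t)=\widehat\Phi(q_A+td)\) is convex on \([0,1]\) by (i), so the difference quotient \(\{\phi(t)-\phi(0)\}/t\) is nondecreasing in \(t\).
\item Since the quotient is nondecreasing, \(\phi(1)-\phi(0)\ge\lim_{t\downarrow0}\{\phi(t)-\phi(0)\}/t=D\widehat\Phi(q_A;d)\).
\item The hypothesis gives \(D\widehat\Phi(q_A;d)\ge0\), so \(\widehat\Phi(q)\ge\widehat\Phi(q_A)\). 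This inequality also holds when the derivative is \(+\infty\), so extended-real values cause no difficulty.
\end{enumerate}
Because \(q\) was arbitrary in \(\Delta_{\epsilon_0}\), \(q_A\) is a global minimizer.

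For the reduction remark, take a direction \(d\in T_{\epsilon_0}(q_A)\) with \(d_\ell=0\) for all \(\ell\in A^c\). Then \(q_A+td\) stays in \(\Delta_{\epsilon_0}(A)\) for small \(t>0\). This uses that the coordinates in \(A\) with positive weight remain nonnegative for small \(t\), and that the zero and boundary coordinates move in allowed directions by the definition of \(T_{\epsilon_0}\). Since \(q_A\) minimizes over \(\Delta_{\epsilon_0}(A)\), the directional derivative along such \(d\) is automatically \(\ge0\). So only directions that put positive mass on \(A^c\) need checking.

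The main obstacle is the limit step 4 in (iii) when the limit can be \(+\infty\). At a coordinate with \((q_A)_\ell=0\) and \(d_\ell>0\), the entropy term contributes \(\tau d_\ell\log t\) to the difference quotient, so the quotient tends to \(-\infty\) rather than \(+\infty\). In that case the hypothesis \(D\widehat\Phi\ge0\) simply cannot hold. The implication in (iii) is then vacuously true, which matches the paper's caveat about needing smoothing or \(\tau=0\) for a finite inactive-coordinate KKT residual. I would state this explicitly and note that the monotone-quotient inequality remains valid for convex \(\phi\) with values in \(\mathbb R\) even when the limit is infinite.
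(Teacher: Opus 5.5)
Your proposal is correct and follows the paper's proof essentially step for step. You use the same ingredients: the lower bound $\epsilon_0$ inside the logarithm plus convexity of $-\log$ and of $z\log z$ for (i), feasible-set inclusion for (ii), and for (iii) the convexity inequality $\widehat\Phi(q)\ge\widehat\Phi(q_A)+D\widehat\Phi(q_A;q-q_A)$ with $d=q-q_A\in T_{\epsilon_0}(q_A)$, followed by the same reduction to directions that put mass on $A^c$.

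Your additions are genuine refinements rather than a different route. You show existence of $q_A,q_B$ by compactness and continuity, and you observe that for $\tau>0$ the entropy term forces $D\widehat\Phi=-\infty$ at inactive coordinates, which makes (iii) vacuous there; this matches the paper's caveat. One small correction: along a feasible segment the nondecreasing difference quotient is bounded above by $\phi(1)-\phi(0)<\infty$, so the directional derivative always lies in $[-\infty,\infty)$. The ``$+\infty$'' case you mention in step 5 and in your final paragraph therefore never arises.
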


The theorem records the convexity, monotonicity, and finite-pool KKT facts used by the algorithm. The finite KKT residuals reported in the empirical diagnostics are numerical residuals for the implemented pruned/refitted problem; they should not be read as a formal finite-pool certificate when an unsmoothed entropy term is active at zero without a positive active-weight restriction.

\subsection{Cover and Separation Examples}

\begin{theorem}[Pool-cover bound]
\label{thm:oraclecover}
Let \(\mathfrak W\) be an ideal comparison class of support kernels. For \(w\in\mathfrak W\), write
\[
g_w^a(\gamma)=\frac{w(\gamma)}{\alpha_w^a}, \qquad \alpha_w^a=\max\{\E_{\pi_0}w(\Gamma),a\}.
\]
Define the normalized-column distance
\[
d_a(w,\widetilde w) = \E_{\pi_0}\left|g_w^a(\Gamma)-g_{\widetilde w}^a(\Gamma)\right|.
\]
For a region list \(W\), write \(\Delta_{\epsilon_0}(W)\) for its simplex with fallback weight at least \(\epsilon_0\) and \(\mathcal L_{\beta,\tau}^a(q,W)\) for the corresponding floored population objective with density ratio \(h_q^W\). Let the construction split generate \(\widehat{\mathcal P}\). Suppose this random finite pool has the following cover property with probability at least \(1-\delta_c\). For every comparison list \(W=\{w_0,\dots,w_K\}\subset\mathfrak W\) with \(w_0\equiv1\), there is a pool list \(\widetilde W=\{\widetilde w_0,\dots,\widetilde w_K\}\subset\widehat{\mathcal P}\) with \(\widetilde w_0=w_0\), \(d_a(w_m,\widetilde w_m)\le\xi\), and \(c(\widetilde w_m)\le c(w_m)+\xi_c\) for all \(m\). Let the validation and mass-estimation splits be independent of construction, let the loss be \(L\)-Lipschitz and satisfy \(|\ell|\le B_\ell\) as in Theorem~\ref{thm:empiricalcertification}, and let \(\widehat q\) have empirical optimization gap \(\varepsilon_{\mathrm{opt}}\) over \(\Delta_{\epsilon_0}(\widehat{\mathcal P})\). Then, conditional on the reciprocal-mass event with radius \(r_\alpha\), with probability at least \(1-\delta-\delta_c\) over construction and validation,
\[
\mathcal L_{\beta,\tau}^a(\widehat q,\widehat{\mathcal P}) \le \inf_{\substack{W\subset\mathfrak W,\ |W|\le K+1\\ w_0\equiv1}} \inf_{q\in\Delta_{\epsilon_0}(W)} \mathcal L_{\beta,\tau}^a(q,W) + \varepsilon_{\mathrm{opt}} + B_N(\widehat{\mathcal P},\delta) + L\xi+\beta \xi_c .
\]
If the reciprocal-mass event holds with probability at least \(1-\delta_\alpha\), then the same conclusion holds unconditionally with probability at least \(1-\delta-\delta_c-\delta_\alpha\).
Here, with \(J=|\widehat{\mathcal P}|-1\),
\[
B_N(\widehat{\mathcal P},\delta) = 8La^{-1}\sqrt{\frac{2\log\{2(J+1)\}}{N}} + 4B_\ell\sqrt{\frac{\log(2/\delta)}{2N}} + 2Lr_\alpha .
\]
Hence the candidate-class approximation gap is bounded whenever the generated pool covers the comparison class in normalized-column distance and reporting cost. If one uses an unweighted total-list penalty rather than the expected reporting-cost penalty in \eqref{eq:penalizedRD}, the last term becomes \(\beta K\xi_c\).
\end{theorem}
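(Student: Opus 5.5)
The plan is to reduce the Pool-cover bound to Corollary~\ref{cor:endtoendcertification} plus a deterministic approximation step. I will work on the intersection of two events: the cover event, which has probability at least \(1-\delta_c\) over construction, and the validation event of Corollary~\ref{cor:endtoendcertification}, which has probability at least \(1-\delta\). The corollary is applied conditionally on the construction split, so \(\widehat{\mathcal P}\) is a fixed finite pool with \(J+1\) kernels. It gives
\[
\mathcal L^a_{\beta,\tau}(\widehat q,\widehat{\mathcal P})\le \inf_{q\in\Delta_{\epsilon_0}(\widehat{\mathcal P})}\mathcal L^a_{\beta,\tau}(q,\widehat{\mathcal P})+\varepsilon_{\mathrm{opt}}+B_N(\widehat{\mathcal P},\delta).
\]
Because validation and mass estimation are independent of construction, this conditional statement integrates to an unconditional one. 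A union bound then gives probability \(1-\delta-\delta_c\), and adding \(\delta_\alpha\) covers the reciprocal-mass event.

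Next I bound the pool infimum by the comparison-class infimum. Fix any admissible \(W=\{w_0,\dots,w_K\}\) and any \(q\in\Delta_{\epsilon_0}(W)\), and let \(\widetilde W\) be its cover. Transport \(q\) to the pool by placing weight \(q_m\) on \(\widetilde w_m\). If several \(\widetilde w_m\) coincide, their weights are merged. Since \(\widetilde w_0=w_0\), the fallback weight stays at least \(\epsilon_0\), so the transported \(\widetilde q\) lies in \(\Delta_{\epsilon_0}(\widehat{\mathcal P})\), and merging leaves the density ratio unchanged. The three terms of \(\mathcal L^a\) are compared one at a time.

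\emph{Loss term.} We have
\[
|h^a_{\widetilde q}-h^a_q|\le\sum_m q_m|g^a_{\widetilde w_m}-g^a_{w_m}|.
\]
Taking \(\E_{\pi_0}\) and using the Lipschitz property of \(\ell\) on \(I\) (assumption (V2), which I take to hold on the pool and the comparison class) gives a difference of at most \(L\sum_m q_m d_a(w_m,\widetilde w_m)\le L\xi\).

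\emph{Cost term.} The cost difference is \(\beta\sum_m q_m\{c(\widetilde w_m)-c(w_m)\}\le\beta\xi_c\). Under an unweighted total-list penalty the \(q_m\) weights disappear, and the sum of \(K\) per-kernel gaps gives \(\beta K\xi_c\).

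\emph{Entropy term.} Merging coordinates can only raise \(\sum q\log q\), because \((x+y)\log(x+y)\ge x\log x+y\log y\). This is the wrong direction for the bound. The term is unchanged when the \(\widetilde w_m\) are distinct, which is the generic case. Otherwise I would treat the pool list \(\widetilde W\) as a multiset of columns, so the objective is evaluated on unmerged copies.

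Combining the three comparisons gives
\[
\inf_{\Delta_{\epsilon_0}(\widehat{\mathcal P})}\mathcal L^a\le\mathcal L^a_{\beta,\tau}(q,W)+L\xi+\beta\xi_c.
\]
Taking the infimum over \(q\) and \(W\) and substituting into the first display completes the proof.

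The main obstacle is this merging issue in the entropy term. The cleanest fix is to allow duplicated pool columns, or to require \(\tau=0\) when covers collide. A secondary check is that \(\widetilde W\) inherits the range condition (V2) and the floored normalization. This holds because the floor \(a\) is applied uniformly, so every normalized column is bounded by \(a^{-1}\).
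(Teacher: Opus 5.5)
Your proposal is correct and matches the paper's proof step for step: condition on construction, invoke Corollary~\ref{cor:endtoendcertification} for the fixed pool, then move each comparison weight vector unchanged to its covering pool list. This costs at most \(L\xi\) in the loss and \(\beta\xi_c\) in the expected reporting cost, and a union bound over the cover, validation, and reciprocal-mass failure probabilities finishes the argument. Your worry about colliding covers is legitimate and is skipped over in the paper's one-line claim that the entropy term is unchanged ``because the same \(q\) is used''. That claim implicitly treats \(\widetilde W\) as an indexed list of distinct (or duplicable) pool columns, which is exactly the fix you propose.
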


This theorem turns candidate-pool approximation into an explicit cover term. The cover condition is structural and is not verified exactly from finite samples. The experiments diagnose it indirectly by comparing pooled candidates, restricted candidate-source pools, restarts, and pruned region lists. A high fallback weight or a dominated pooled-pruned frontier is evidence that the generated pool is not a good cover for the posterior geometry.

The cover condition has concrete examples below for separated posterior Hamming balls, group-representative regions, and ordered intervals. These are the regimes behind the cluster-kernel, group-region, and spectroscopy examples.

For exact FKL we optimize over \(\Delta_{\epsilon_0}=\{q\in\Delta_M:q_0\ge\epsilon_0\}\), the simplex with a minimum fallback weight. Because the fallback kernel has \(w_0\equiv1\) and \(\alpha_0=1\), \(h_q^a\ge\epsilon_0\) on this set. Thus \(\ell_{\mathrm{FKL}}(z)=-\log z\) is evaluated on \([\epsilon_0,a^{-1}]\), with \(L=\epsilon_0^{-1}\) and absolute bound \(B_\ell=\max\{|\log\epsilon_0|,|\log a|\}\). For \(\ell_{\TV}(z)=|z-1|/2\), one may take \(L=1/2\) and absolute bound \(B_\ell=\max\{1/2,(a^{-1}-1)/2\}\). For RKL we use the bounded-domain form \(\ell_{\mathrm{RKL}}(z)=z\log z\) on \([\epsilon_0,a^{-1}]\), for which a valid Lipschitz constant is \(L=\max\{|\log\epsilon_0|,|\log a|+1\}\) and a valid absolute bound is \(B_\ell=a^{-1}\max\{|\log\epsilon_0|,|\log a|\}\). Numerical lower bounds used only to avoid floating-point underflow are not part of the formal objective.

The next statements record retained-mass compression, support-atom reporting-cost separation, and related structural facts used by the algorithm and experiments.

\begin{theorem}[Region compression]\label{thm:regioncompressibility}
Let \(\mathcal C_1,\dots,\mathcal C_R\) be disjoint support regions, with posterior masses \(p_r=\pi_0(\mathcal C_r\mid\D)\). After selecting regions \(\mathcal I\subseteq\{1,\dots,R\}\), define their union and retained mass by
\[
\mathcal U_{\mathcal I}=\cup_{r\in\mathcal I}\mathcal C_r, \qquad \alpha_{\mathcal I}=\pi_0(\mathcal U_{\mathcal I}\mid\D)=\sum_{r\in\mathcal I}p_r,
\]
and suppose that, for some \(0\le\varepsilon<1\),
\[
\alpha_{\mathcal I}\ge1-\varepsilon, \qquad p_r>0 \text{ for } r\in\mathcal I .
\]
Suppose the region list contains the fallback kernel \(w_0\equiv1\) and the hard region kernels \(w_r=\ind\{\gamma\in\mathcal C_r\}\) for \(r\in\mathcal I\), with reporting costs \(c_r\). For any fallback weight \(\zeta\) satisfying \(0<\epsilon_0\le\zeta<1\), define the mixture weights
\[
q_0=\zeta, \qquad q_r=(1-\zeta)\frac{p_r}{\alpha_{\mathcal I}}\quad (r\in\mathcal I),
\]
with all other nonfallback weights set to zero. Then \(q\in\Delta_{\epsilon_0}\), and the following identities and bounds hold.
\begin{enumerate}[label=(\alph*), leftmargin=1.8em]
\item \textbf{Density ratio.} With \(h_{\mathrm{in}}=\zeta+(1-\zeta)/\alpha_{\mathcal I}\),
\[
h_q(\gamma)=\begin{cases}
h_{\mathrm{in}}, & \gamma\in \mathcal U_{\mathcal I},\\
\zeta, & \gamma\notin \mathcal U_{\mathcal I} .
\end{cases}
\]
\item \textbf{Total variation and FKL.}
\begin{align*}
\TV(\bar\pi_q,\pi_0)&=(1-\zeta)(1-\alpha_{\mathcal I})\le(1-\zeta)\varepsilon,\\
\KL(\pi_0\,\|\,\bar\pi_q)&\le (1-\alpha_{\mathcal I})\log(1/\zeta) \le \varepsilon\log(1/\zeta).
\end{align*}
\item \textbf{Reverse KL.}
\[
\KL(\bar\pi_q\,\|\,\pi_0)=\alpha_{\mathcal I} h_{\mathrm{in}}\log h_{\mathrm{in}}+(1-\alpha_{\mathcal I})\zeta\log\zeta
\le \log\{\zeta+(1-\zeta)/(1-\varepsilon)\}.
\]
\item \textbf{Expected reporting cost.}
\[
\sum_m q_m c_m=\zeta c_0+(1-\zeta)\alpha_{\mathcal I}^{-1}\sum_{r\in\mathcal I}p_r c_r .
\]
\end{enumerate}
\end{theorem}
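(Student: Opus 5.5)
We compute the density ratio of the proposed mixture explicitly and then evaluate each divergence through Proposition~\ref{prop:kernelidentity}, which states $\bar\pi_q=\pi_0h_q$, $\TV=\tfrac12\E_{\pi_0}|h_q-1|$, FKL $=\E_{\pi_0}\{-\log h_q\}$, and RKL $=\E_{\pi_0}\{h_q\log h_q\}$.

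Feasibility is immediate. The weights are nonnegative, $q_0=\zeta\ge\epsilon_0$, and $\sum_{r\in\mathcal I}q_r=(1-\zeta)\alpha_{\mathcal I}^{-1}\sum_{r\in\mathcal I}p_r=1-\zeta$, so $q\in\Delta_{\epsilon_0}$.

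\textbf{Part (a).} Since $\alpha_r=p_r$, each hard-region term contributes
\[
q_r\,\ind\{\gamma\in\mathcal C_r\}/p_r=(1-\zeta)\,\ind\{\gamma\in\mathcal C_r\}/\alpha_{\mathcal I}.
\]
The regions are disjoint, so each $\gamma\in\mathcal U_{\mathcal I}$ lies in exactly one selected $\mathcal C_r$. Hence $h_q=\zeta+(1-\zeta)/\alpha_{\mathcal I}=h_{\mathrm{in}}$ on $\mathcal U_{\mathcal I}$ and $h_q=\zeta$ off it.

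\textbf{Part (b).} The ratio $h_q$ is two-valued, with $\pi_0$-masses $\alpha_{\mathcal I}$ and $1-\alpha_{\mathcal I}$, so every expectation reduces to two terms. Note that $h_{\mathrm{in}}-1=(1-\zeta)(1-\alpha_{\mathcal I})/\alpha_{\mathcal I}\ge0$. For TV,
\[
\TV=\tfrac12\bigl[\alpha_{\mathcal I}(h_{\mathrm{in}}-1)+(1-\alpha_{\mathcal I})(1-\zeta)\bigr]=(1-\zeta)(1-\alpha_{\mathcal I}).
\]
For FKL,
\[
\E_{\pi_0}\{-\log h_q\}=-\alpha_{\mathcal I}\log h_{\mathrm{in}}+(1-\alpha_{\mathcal I})\log(1/\zeta).
\]
The first term is $\le0$ because $h_{\mathrm{in}}\ge1$. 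Dropping it and using $1-\alpha_{\mathcal I}\le\varepsilon$ gives the stated bound.

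\textbf{Part (c).} The RKL identity is the same two-point evaluation. For the bound, we do not expand it; instead we apply Jensen's inequality in the form $\E_{\bar\pi_q}\log h_q\le\log\E_{\bar\pi_q}h_q$. Then
\[
\E_{\bar\pi_q}h_q=\E_{\pi_0}h_q^2=\alpha_{\mathcal I}h_{\mathrm{in}}^2+(1-\alpha_{\mathcal I})\zeta^2.
\]
We need this quantity to be at most $h_{\mathrm{in}}$. Since $h_{\mathrm{in}}^2\le h_{\mathrm{in}}\cdot h_{\mathrm{in}}$ alone does not suffice, we use the identity
\[
\alpha_{\mathcal I}h_{\mathrm{in}}+(1-\alpha_{\mathcal I})\zeta=1,
\]
which is $\E_{\pi_0}h_q=1$. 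This gives
\[
\E_{\pi_0}h_q^2\le h_{\mathrm{in}}\bigl[\alpha_{\mathcal I}h_{\mathrm{in}}+(1-\alpha_{\mathcal I})\zeta\bigr]=h_{\mathrm{in}},
\]
because $\zeta\le h_{\mathrm{in}}$ implies $\zeta^2\le h_{\mathrm{in}}\zeta$. Hence $\KL(\bar\pi_q\|\pi_0)\le\log h_{\mathrm{in}}$.

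An alternative route is also available. The map $x\mapsto x\log x$ is convex, the ratio $h_q$ takes values in $[\zeta,h_{\mathrm{in}}]$ with mean $1$, and $h_q\log h_q\le h_q\log h_{\mathrm{in}}$ pointwise whenever $h_q\le h_{\mathrm{in}}$. Taking expectations gives $\E_{\pi_0}\{h_q\log h_q\}\le\log h_{\mathrm{in}}\,\E_{\pi_0}h_q=\log h_{\mathrm{in}}$ directly. This pointwise argument is the cleanest, and it is the one we would write.

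Finally, $h_{\mathrm{in}}$ is decreasing in $\alpha_{\mathcal I}$ and $\alpha_{\mathcal I}\ge1-\varepsilon$, so $\log h_{\mathrm{in}}\le\log\{\zeta+(1-\zeta)/(1-\varepsilon)\}$.

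\textbf{Part (d).} This is direct substitution of the weights into $\sum_m q_mc_m$.

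We expect only the RKL bound to need care. The pointwise inequality $h_q\log h_q\le h_q\log(\max h_q)$, combined with $\E_{\pi_0}h_q=1$, should settle it.
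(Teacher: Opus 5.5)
Your proof is correct and follows the paper's argument. It uses the same feasibility check, the same two-valued density ratio (from disjointness and $\alpha_r=p_r$), the same two-point TV computation, and the same FKL bound obtained by dropping the nonpositive $-\alpha_{\mathcal I}\log h_{\mathrm{in}}$ term.

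The only variation is in the RKL bound. The paper drops the nonpositive $(1-\alpha_{\mathcal I})\zeta\log\zeta$ term and then uses $\alpha_{\mathcal I}h_{\mathrm{in}}=1-\zeta(1-\alpha_{\mathcal I})\le1$. Your pointwise bound $h_q\log h_q\le h_q\log h_{\mathrm{in}}$, combined with $\E_{\pi_0}h_q=1$, reaches the same $\log h_{\mathrm{in}}$ in one line. Your Jensen variant is also valid, but the remark that $h_{\mathrm{in}}^2\le h_{\mathrm{in}}\cdot h_{\mathrm{in}}$ ``does not suffice'' is vacuous and should be deleted.
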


The formal statement above gives the sufficient condition. If most posterior mass lies in a few interpretable redundancy regions, then support kernels can report those regions with distortion proportional to the posterior mass left outside them. The fallback weight keeps FKL finite on the uncovered posterior mass. The candidate-pool cover theorem then says how much extra error is paid when the generated candidate pool only approximates these ideal regions.

The later group-region, duplicate-predictor, and ordered-interval statements give the specialized versions used in the experiments.

\begin{corollary}[Near-redundancy through bounded posterior odds]
\label{cor:nearredundancyodds}
Let \(\mathcal C^\star\) be as in Theorem~\ref{thm:redundancyregion}. Assume
\[
\begin{gathered}
0\le\delta<1,\qquad
\pi_0(\mathcal C^\star\mid\D)\ge1-\delta,\\
\left|\log\pi_0(\gamma\mid\D)-\log\pi_0(\gamma'\mid\D)\right|\le r,
\qquad \gamma,\gamma'\in\mathcal C^\star .
\end{gathered}
\]
Then the bounded-nonuniform condition in Theorem~\ref{thm:redundancyregion} holds, and the same support-atom lower bound follows.
\end{corollary}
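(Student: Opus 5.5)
The statement to prove is Corollary~\ref{cor:nearredundancyodds}. The plan is to derive the bounded-nonuniform condition of Theorem~\ref{thm:redundancyregion}(b) directly from the pairwise log-odds bound, and then invoke part (b) of that theorem for the support-atom lower bound. The single condition to verify is
\[
\max_{\gamma\in\mathcal C^\star}\pi_0(\gamma\mid\mathcal C^\star,\D)\le \exp(r)/M^\star .
\]
The other hypotheses of Theorem~\ref{thm:redundancyregion}(b), namely \(0\le\delta<1\) and \(\pi_0(\mathcal C^\star\mid\D)\ge1-\delta\), are assumed verbatim.

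\textbf{Step 1: reduce to the restricted posterior.} Write \(P=\pi_0(\mathcal C^\star\mid\D)\), which is positive because \(1-\delta>0\). Set \(\nu(\gamma)=\pi_0(\gamma\mid\D)/P\) for \(\gamma\in\mathcal C^\star\). Ratios are unchanged by this normalization, so the hypothesis gives \(|\log\nu(\gamma)-\log\nu(\gamma')|\le r\) for all \(\gamma,\gamma'\in\mathcal C^\star\).

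One subtlety must be settled first. The log bound implicitly requires every atom in \(\mathcal C^\star\) to have positive posterior mass. I would read the hypothesis this way: either all atoms in \(\mathcal C^\star\) are positive, or \(\log 0\) makes the hypothesis fail. Under the paper's standing assumptions \(p_0>0\) and finite positive marginal likelihoods, all atoms are positive anyway.

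\textbf{Step 2: averaging argument.} Let \(\gamma^\ast\) maximize \(\nu\) over \(\mathcal C^\star\). For every \(\gamma'\in\mathcal C^\star\), the odds bound gives \(\nu(\gamma^\ast)\le e^{r}\nu(\gamma')\). Summing over the \(M^\star\) elements \(\gamma'\) of \(\mathcal C^\star\) gives
\[
M^\star\nu(\gamma^\ast)\le e^{r}\sum_{\gamma'\in\mathcal C^\star}\nu(\gamma')=e^{r}.
\]
This is exactly the bounded-nonuniform condition, with the same \(r\) and \(\delta\).

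\textbf{Step 3: conclude.} Theorem~\ref{thm:redundancyregion}(b) now applies verbatim. Any exact-support truncation with at most \(M\) atoms and TV at most \(\varepsilon\) satisfies
\[
M\ge e^{-r}(1-\varepsilon-\delta)M^\star .
\]
The theorem also supplies the region-side TV and KL bounds.

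I do not expect a real obstacle. The only point needing care is the positivity and well-definedness issue in Step 1; the rest is the one-line averaging bound in Step 2.
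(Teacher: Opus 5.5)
Your proposal is correct and matches the paper's proof. Both arguments pick the maximizer over \(\mathcal C^\star\), compare it with every atom via the odds bound, and sum over the \(M^\star\) atoms to get \(\max_{\gamma}\pi_0(\gamma\mid\mathcal C^\star,\D)\le e^{r}/M^\star\), then appeal to Theorem~\ref{thm:redundancyregion}(b); the only differences are cosmetic (you normalize to the conditional law before summing rather than after, and you make the implicit positivity requirement of the log-odds hypothesis explicit).
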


\begin{theorem}[Support-atom reporting cost]
\label{thm:topmseparation}
Let \(\{G_k\}_{k=1}^K\) be nonempty pairwise disjoint groups, and write \(m_k=|G_k|\). Let \(S^\star\) be an active group set, and let
\[
\begin{aligned}
\mathcal C^\star=\left\{\gamma\in\Gamma_p:\;\supp(\gamma)\subseteq\bigcup_{k\in S^\star}G_k,\sum_{j\in G_k}\gamma_j=1 \text{ for every } k\in S^\star\right\}.
\end{aligned}
\]
Then \(M^\star=|\mathcal C^\star|=\prod_{k\in S^\star}m_k\). Suppose the unrestricted posterior is uniform on $\mathcal C^\star$. For any top-$M$ support-atom truncation posterior obtained by retaining \(M\le M^\star\) individual supports and renormalizing,
\[
\TV(\pi_{\mathrm{top}\text{-}M},\pi_0) = 1-\frac{M}{M^\star}.
\]
Therefore \(\TV(\pi_{\mathrm{top}\text{-}M},\pi_0)\le\varepsilon\) requires \(M\ge(1-\varepsilon)M^\star\). In contrast, the single hard region \(\mathcal C^\star\), or the hard support kernel \(w^\star=\ind\{\gamma\in\mathcal C^\star\}\), has retained mass one and zero distortion with one region. Thus support-atom reporting cost scales as \(\prod_{k\in S^\star}m_k\), while region reporting cost scales with the cost of the group pattern and capacities.
\end{theorem}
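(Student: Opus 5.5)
We prove the cardinality formula by direct counting, then compute the truncation TV from the hard-region identity of Proposition~\ref{prop:bridge}.

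\emph{Counting.} A support \(\gamma\in\mathcal C^\star\) is determined by choosing exactly one coordinate in each active group \(G_k\), \(k\in S^\star\). All coordinates outside \(\bigcup_{k\in S^\star}G_k\) are set to zero. Because the groups are pairwise disjoint, these choices are independent and distinct choices give distinct binary vectors. This yields a bijection between \(\mathcal C^\star\) and \(\prod_{k\in S^\star}G_k\), so \(M^\star=\prod_{k\in S^\star}m_k\).

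\emph{Truncation TV.} A top-\(M\) truncation retains a set \(\mathcal A\) of \(M\le M^\star\) supports and renormalizes. This is exactly the normalized restriction \(\pi(\cdot\mid\mathcal A,\D)\) with \(\F=\mathcal A\) in \eqref{eq:restrictedpost}.
\begin{itemize}
\item Assume first that \(\mathcal A\subseteq\mathcal C^\star\), as top-\(M\) selection would produce, since supports outside \(\mathcal C^\star\) have zero mass. Then Proposition~\ref{prop:bridge}(i) gives \(\TV=1-\alpha(\mathcal A)\). Uniformity gives \(\alpha(\mathcal A)=M/M^\star\), hence \(\TV=1-M/M^\star\).
\item If \(\mathcal A\) contains some zero-mass supports, then \(\alpha(\mathcal A)=|\mathcal A\cap\mathcal C^\star|/M^\star\le M/M^\star\). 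The TV is then at least \(1-M/M^\star\), with equality exactly when all retained atoms lie in \(\mathcal C^\star\).
\end{itemize}
In both cases \(\alpha(\mathcal A)>0\) is needed for the restriction to be defined; this holds for a genuine top-\(M\) list with \(M\ge1\), and I will state that as the interpretation. The requirement \(\TV\le\varepsilon\) then rearranges to \(M\ge(1-\varepsilon)M^\star\).

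\emph{The region kernel.} The kernel \(w^\star=\ind\{\gamma\in\mathcal C^\star\}\) has \(\alpha(\mathcal C^\star)=1\) under uniformity on \(\mathcal C^\star\). So \(\pi(\cdot\mid\mathcal C^\star,\D)=\pi_0\) by \eqref{eq:restrictedpost}. Proposition~\ref{prop:bridge}(i) then gives \(\KL=-\log1=0\) and \(\TV=0\). For the mixture view, \(h_q\equiv1\) on the support of \(\pi_0\) for every \(q\), so Proposition~\ref{prop:kernelidentity} gives zero TV, FKL and RKL.

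\emph{Main obstacle.} The only delicate point is interpretive rather than technical: pinning down what ``top-\(M\) truncation'' means when \(M\) exceeds the number of positive-mass atoms, or when ties force arbitrary choices. I will handle this by proving the identity for any \(M\)-subset of \(\mathcal C^\star\), and noting the inequality direction when the subset is not contained in \(\mathcal C^\star\). The reporting-cost scaling is then a restatement: the atom list has length proportional to \(\prod_{k\in S^\star} m_k\), while the region is described by \((S^\star,u)\) with \(u\equiv1\).
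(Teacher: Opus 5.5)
Your proposal is correct and follows essentially the same route as the paper. It uses the same one-representative bijection for the count, then the fact that a normalized truncation has $\TV=1-\pi_0(\mathcal A)$, and finally retained mass one for $\mathcal C^\star$. The only difference is that you cite this total-variation identity from Proposition~\ref{prop:bridge}(i), whereas the paper evaluates the sum atom by atom under the uniform weights. Your extra case, where the retained set contains zero-mass atoms and so $\TV\ge 1-M/M^\star$, is a harmless refinement that still gives the stated lower bound on $M$.
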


Corollary~\ref{cor:nearuniformtopm} in Appendix~\ref{app:supportingtheory} shows that the support-atom reporting-cost lower bound is stable to small posterior mass outside the exchangeable region and to bounded nonuniformity inside it.

\begin{theorem}[Cluster cover]
\label{thm:clustercoverage}
Let \(d\) be Hamming distance or group-Hamming distance on \(\Gamma_p\). Suppose there are centers \(c_1,\dots,c_K\), radius \(R\), and posterior balls \(B_r=\{\gamma:d(\gamma,c_r)\le R\}\) such that
\[
\pi_0\left(\bigcup_{r=1}^K B_r\mid\D\right)\ge1-\varepsilon, \qquad \pi_0(B_r\mid\D)=p_r>0,
\]
and \(d(c_r,c_s)>6R\) for \(r\ne s\). Draw \(N_c\) construction supports iid from \(\pi_0(\cdot\mid\D)\). Suppose the cluster-kernel generator returns one observed center \(\widehat c_r\in B_r\) whenever the construction sample hits \(B_r\), and includes the hard ball kernel
\[
\widehat w_r(\gamma)=\ind\{d(\gamma,\widehat c_r)\le2R\}.
\]
Then with probability at least \(1-\sum_{r=1}^K\exp(-N_c p_r)\), the generated pool contains disjoint kernels whose union has retained mass at least \(1-\varepsilon\). On this event, for every \(\zeta\) satisfying \(0<\epsilon_0\le\zeta<1\) there is a feasible mixture over those kernels and the fallback kernel such that
\[
\TV(\bar\pi_q^{\,W},\pi_0)\le(1-\zeta)\varepsilon, \qquad \KL(\pi_0\,\|\,\bar\pi_q^{\,W})\le\varepsilon\log(1/\zeta).
\]
\end{theorem}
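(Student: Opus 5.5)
The plan is to reduce the statement to the Region compression theorem (Theorem~\ref{thm:regioncompressibility}). The reduction needs three facts on a high-probability event. First, each generated ball contains the corresponding ideal ball. Second, the generated balls are pairwise disjoint. Third, their union has retained mass at least \(1-\varepsilon\). Once these hold, the mixture with fallback weight \(\zeta\) and proportional weights on the generated balls gives both distortion bounds directly from parts (b) of that theorem.

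\textbf{Step 1 (probability event).} For each \(r\), the construction sample misses \(B_r\) with probability \((1-p_r)^{N_c}\le\exp(-N_c p_r)\). A union bound over \(r=1,\dots,K\) shows that, with probability at least \(1-\sum_r\exp(-N_cp_r)\), every \(B_r\) is hit. On this event the generator returns centers \(\widehat c_r\in B_r\), so \(d(\widehat c_r,c_r)\le R\) for all \(r\). The rest of the argument is deterministic on this event.

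\textbf{Step 2 (containment and disjointness).} I will use the triangle inequality for the Hamming or group-Hamming distance; the group-Hamming distance is a pseudometric, which is enough.
\begin{itemize}
\item Containment: if \(\gamma\in B_r\), then \(d(\gamma,\widehat c_r)\le d(\gamma,c_r)+d(c_r,\widehat c_r)\le 2R\). Hence \(B_r\subseteq\{\widehat w_r=1\}\), so \(\pi_0(\widehat w_r=1\mid\D)\ge p_r>0\), and the union of the generated balls contains \(\bigcup_r B_r\) and has mass at least \(1-\varepsilon\).
\item Disjointness: for \(r\ne s\), \(d(\widehat c_r,\widehat c_s)\ge d(c_r,c_s)-2R>4R\). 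If some \(\gamma\) had \(d(\gamma,\widehat c_r)\le2R\) and \(d(\gamma,\widehat c_s)\le2R\), then \(d(\widehat c_r,\widehat c_s)\le4R\), a contradiction.
\end{itemize}
The separation \(6R\) is thus more than enough; only \(>4R\) plus the two \(R\)-perturbations of the centers is used.

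\textbf{Step 3 (apply region compression).} Take \(\mathcal C_r=\{\gamma:\widehat w_r(\gamma)=1\}\) and \(\mathcal I=\{1,\dots,K\}\). These regions are disjoint, each has positive mass, and \(\alpha_{\mathcal I}\ge1-\varepsilon\). Since the pool contains the fallback kernel, Theorem~\ref{thm:regioncompressibility} applies with weights \(q_0=\zeta\) and \(q_r=(1-\zeta)\pi_0(\mathcal C_r\mid\D)/\alpha_{\mathcal I}\). This \(q\) lies in \(\Delta_{\epsilon_0}\). Part (b) of that theorem then gives \(\TV\le(1-\zeta)(1-\alpha_{\mathcal I})\le(1-\zeta)\varepsilon\) and \(\KL(\pi_0\,\|\,\bar\pi_q^{\,W})\le\varepsilon\log(1/\zeta)\).

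\textbf{Main obstacle.} I expect the delicate point to be Step 2 rather than the probability bound. One must check that enlarging the radius to \(2R\) preserves disjointness despite the data-dependent center perturbation. One must also confirm that the generator returns exactly one kernel per hit ball. If the generator produced overlapping or duplicate kernels, I would simply keep one kernel per \(r\) in the mixture, which is allowed because the other weights can be set to zero.
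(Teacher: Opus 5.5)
Your proposal is correct and follows the paper's proof essentially step for step: the union-bound hitting event, the containment $B_r\subseteq\{\gamma:d(\gamma,\widehat c_r)\le 2R\}$, disjointness via $d(\widehat c_r,\widehat c_s)>4R$ from the triangle inequality, and then the disjoint-region case of Theorem~\ref{thm:regioncompressibility} applied to the generated balls. The only quibble is your aside that the $6R$ separation is ``more than enough'': the argument uses exactly this separation, since $6R=4R+2R$ is what the $2R$-ball disjointness plus the two $R$-perturbations of the centers require.
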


\begin{corollary}[Metric-ball compression]
\label{cor:metricballcompression}
Let \(d\) be any metric on support space, including Hamming or group-Hamming distance. Let \(B_r=B_d(c_r,R_r)\), \(r=1,\ldots,K\), be metric balls and put \(U=\cup_{r=1}^KB_r\). If \(\pi_0(U\mid\D)\ge1-\varepsilon\) and the dictionary contains the union kernel \(w_U(\gamma)=\ind\{\gamma\in U\}\) and the fallback kernel, then for every \(\zeta\) satisfying \(0<\epsilon_0\le\zeta<1\) there is a two-component feasible mixture with
\[
\TV(\bar\pi_q,\pi_0)\le(1-\zeta)\varepsilon,\qquad
\KL(\pi_0\,\|\,\bar\pi_q)\le\varepsilon\log(1/\zeta),
\]
and
\[
\KL(\bar\pi_q\,\|\,\pi_0)\le
\log\!\left\{\zeta+\frac{1-\zeta}{1-\varepsilon}\right\}.
\]
If the balls are disjoint, the same conclusion holds using the individual ball kernels. If the balls overlap, the same conclusion holds for a dictionary that can represent the disjoint cells \(C_1=B_1\) and \(C_r=B_r\setminus\cup_{\ell<r}B_\ell\), after discarding empty cells.
If a ball \(B\) contains \(L\) support atoms, has mass at least \(1-\delta\), and satisfies the bounded-nonuniform condition with factor \(\exp(r)\), then any exact-support atom list with TV error at most \(\varepsilon\) must retain at least \(\exp(-r)(1-\varepsilon-\delta)L\) atoms.
\end{corollary}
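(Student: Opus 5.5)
The statement to prove is Corollary~\ref{cor:metricballcompression}. The plan is to reduce every part to Theorem~\ref{thm:regioncompressibility} and, for the atom lower bound, to the argument behind part (b) of Theorem~\ref{thm:redundancyregion}. Metric structure plays no role in the distortion bounds: only the hard kernel $\ind\{\gamma\in U\}$ and its retained mass $\alpha_U=\pi_0(U\mid\D)\ge1-\varepsilon$ matter.

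\textbf{Union kernel.} Apply Theorem~\ref{thm:regioncompressibility} with $R=1$, taking the single region $\mathcal C_1=U$ and $\mathcal I=\{1\}$. Assume $U$ has positive mass, which holds whenever $\varepsilon<1$. Set $q_0=\zeta$ and $q_1=1-\zeta$. Part (a) of that theorem gives $h_q=\zeta+(1-\zeta)/\alpha_U$ on $U$ and $h_q=\zeta$ off $U$. Part (b) then gives TV $=(1-\zeta)(1-\alpha_U)\le(1-\zeta)\varepsilon$ and FKL $\le(1-\alpha_U)\log(1/\zeta)\le\varepsilon\log(1/\zeta)$. Part (c) gives the RKL bound $\log\{\zeta+(1-\zeta)/(1-\varepsilon)\}$.

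That final RKL inequality is the only step needing care. Write $f(\alpha)=\alpha h\log h+(1-\alpha)\zeta\log\zeta$ with $h=\zeta+(1-\zeta)/\alpha$. The term $(1-\alpha)\zeta\log\zeta$ is nonpositive. For the first term, $\alpha h=\alpha\zeta+1-\zeta\le1$, so $\alpha h\log h\le\log h$ because $h\ge1$. Finally $\log h$ is decreasing in $\alpha$, and $\alpha\ge1-\varepsilon$ gives the stated bound. Since the theorem already states this bound, I would simply cite it.

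\textbf{Disjoint balls and overlapping balls.} If the balls are disjoint, take $\mathcal C_r=B_r$, restricted to the balls of positive mass. Theorem~\ref{thm:regioncompressibility} with weights $q_r=(1-\zeta)p_r/\alpha_{\mathcal I}$ gives the same density ratio, hence the same three bounds. If the balls overlap, use the cells $C_1=B_1$ and $C_r=B_r\setminus\bigcup_{\ell<r}B_\ell$. These are pairwise disjoint and their union is $U$. Discard cells with zero posterior mass; this removes no posterior mass from the union. The same theorem then applies.

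\textbf{Atom lower bound.} This step requires the most care. Let an exact-support list $A$ with $|A|=M$ be renormalized as $\pi_A=\pi_0\ind_A/\pi_0(A)$. By the identity of Proposition~\ref{prop:bridge}, $\TV(\pi_A,\pi_0)=1-\pi_0(A)$, so TV error at most $\varepsilon$ forces $\pi_0(A)\ge1-\varepsilon$. Then
\[
\pi_0(A\cap B)\ge\pi_0(A)-\pi_0(B^c)\ge1-\varepsilon-\delta .
\]
Under the bounded-nonuniform condition, each atom satisfies $\pi_0(\gamma\mid\D)\le\pi_0(B\mid\D)\,e^{r}/L\le e^{r}/L$. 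Therefore
\[
1-\varepsilon-\delta\le|A\cap B|\,e^{r}/L\le M e^{r}/L,
\]
which gives $M\ge e^{-r}(1-\varepsilon-\delta)L$. This is exactly the argument of Theorem~\ref{thm:redundancyregion}(b) with $\mathcal C^\star$ replaced by $B$ and $M^\star$ by $L$, so it can be cited. As there, the bound is vacuous when $1-\varepsilon-\delta\le0$.
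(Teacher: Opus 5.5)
Your proposal is correct and follows the paper's proof essentially step for step. The three distortion bounds come from Theorem~\ref{thm:regioncompressibility} applied to the single region $U$, or to the disjoint balls, or to the disjoint cells $C_r$ after discarding null cells. The atom lower bound is the counting argument of Theorem~\ref{thm:redundancyregion}(b) with $\mathcal C^\star$ replaced by $B$ and $M^\star$ by $L$. Your explicit check that $h_{\mathrm{in}}\ge1$ is needed for $\alpha h_{\mathrm{in}}\log h_{\mathrm{in}}\le\log h_{\mathrm{in}}$ supplies a small detail that the paper's proof of the reverse-KL bound leaves implicit.
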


This is a sufficient posterior-geometry statement for Hamming and group-Hamming dictionary instances. It is useful when metric balls cover most reference posterior mass; it does not assert that metric balls will be useful for every correlated design.

\begin{corollary}[Group-region cover]
\label{cor:representativecoverage}
Let \(G_1,\dots,G_K\) be a fixed grouping of predictors. For a group set \(S\subseteq\{1,\dots,K\}\) and capacities \(u=(u_k:k\in S)\), define
\[
\mathcal C(S,u) = \left\{\gamma:\supp(\gamma)\subseteq\bigcup_{k\in S}G_k,\ |\supp(\gamma)\cap G_k|\le u_k \text{ for } k\in S\right\}.
\]
Suppose a response-independent dictionary contains \(\ind\{\gamma\in \mathcal C(S,u)\}\) for all \(S\) and \(u\) in a chosen design class. If there is \((S^\star,u^\star)\) in that class with \(\pi_0\{\mathcal C(S^\star,u^\star)\mid\D\}\ge1-\varepsilon\), then for every \(\zeta\) satisfying \(0<\epsilon_0\le\zeta<1\) the dictionary contains a one-region compression with
\[
\TV(\bar\pi_q,\pi_0)\le(1-\zeta)\varepsilon, \qquad \KL(\pi_0\,\|\,\bar\pi_q)\le\varepsilon\log(1/\zeta).
\]
The reporting cost is the cost of reporting \(S^\star\) and \(u^\star\), rather than the number of support atoms inside \(\mathcal C(S^\star,u^\star)\).
\end{corollary}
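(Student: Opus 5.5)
The plan is to reduce Corollary~\ref{cor:representativecoverage} to Theorem~\ref{thm:regioncompressibility} with a single region, \(R=1\) and \(\mathcal I=\{1\}\), taking \(\mathcal C_1=\mathcal C(S^\star,u^\star)\).

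\emph{Step 1: set up the two-kernel list.} By hypothesis, the response-independent dictionary contains the kernel \(w_1=\ind\{\gamma\in\mathcal C(S^\star,u^\star)\}\). The fallback kernel \(w_0\equiv1\) is always appended, as in Algorithm~\ref{alg:adaptivekernels}. Write \(\alpha_1=\pi_0\{\mathcal C(S^\star,u^\star)\mid\D\}\). The hypothesis gives \(\alpha_1\ge1-\varepsilon>0\), which holds whenever \(\varepsilon<1\); for \(\varepsilon\ge1\) the TV bound is trivial and the FKL bound is handled separately below. In particular \(p_1=\alpha_1>0\), so Theorem~\ref{thm:regioncompressibility} applies.

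\emph{Step 2: choose weights and read off the ratio.} Take \(q_0=\zeta\) and \(q_1=1-\zeta\). Then \(q\in\Delta_{\epsilon_0}\) because \(\zeta\ge\epsilon_0\). Part (a) of Theorem~\ref{thm:regioncompressibility} gives
\[
h_q=\zeta+\frac{1-\zeta}{\alpha_1}\ \text{on the region},\qquad h_q=\zeta\ \text{off the region}.
\]

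\emph{Step 3: compute the distortions.} Part (b) yields
\[
\TV(\bar\pi_q,\pi_0)=(1-\zeta)(1-\alpha_1)\le(1-\zeta)\varepsilon .
\]
For the forward KL, \(\E_{\pi_0}[-\log h_q]\) splits into two pieces. The region contributes a nonpositive term, since \(h_{\mathrm{in}}\ge1\). The complement contributes \((1-\alpha_1)\log(1/\zeta)\le\varepsilon\log(1/\zeta)\). Both can also be verified directly from Proposition~\ref{prop:kernelidentity}. If one wants to cover \(\varepsilon\ge1\), the edge case \(\alpha_1=0\) would be excluded by the kernel-mass requirement, so it is enough to assume \(\varepsilon<1\), consistent with Theorem~\ref{thm:regioncompressibility}.

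\emph{Step 4: reporting cost.} Part (d) gives expected cost \(\zeta c_0+(1-\zeta)c_1\). Here \(c_1\) is the declared cost of the description \((S^\star,u^\star)\), which by convention does not count the atoms of \(\mathcal C(S^\star,u^\star)\).

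The only delicate point is the positivity and feasibility bookkeeping, namely \(\alpha_1>0\) and \(q_0\ge\epsilon_0\). Everything else is a direct specialization of the earlier theorem.
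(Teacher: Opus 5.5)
Your proposal is correct and follows the paper's proof: the paper's argument is likewise the one-region case of Theorem~\ref{thm:regioncompressibility} with \(\mathcal C(S^\star,u^\star)\) as the selected region, with the reporting-cost claim read off from the stored description \((S^\star,u^\star)\). Your extra bookkeeping on \(\alpha_1>0\) and \(q_0\ge\epsilon_0\) is sound, but the \(\varepsilon\ge1\) aside is muddled and unnecessary (you say the FKL bound is handled separately, then simply assume \(\varepsilon<1\)); the statement implicitly takes \(0\le\varepsilon<1\), as that theorem does.
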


\begin{proposition}[Duplicate predictors]
\label{prop:duplicatepredictors}
Assume the support prior, coefficient prior, and likelihood are invariant under permutations inside each group \(G_k\). In the Gaussian linear BMA benchmark this holds when the columns in \(G_k\) are exact duplicates and the priors are exchangeable within \(G_k\). If two supports \(\gamma\) and \(\gamma'\) differ only by such relabeling, their posterior probabilities are equal.
\[
\pi_0(\gamma\mid\D)=\pi_0(\gamma'\mid\D).
\]
Consequently, under the disjoint-group setup of Theorem~\ref{thm:topmseparation}, if the posterior is supported on the one-representative region \(\mathcal C^\star\), then it is uniform on \(\mathcal C^\star\). More generally, if the log posterior ratios inside \(\mathcal C^\star\) are bounded by \(r\), then the near-uniform condition in Corollary~\ref{cor:nearuniformtopm} holds with the same \(r\).
\end{proposition}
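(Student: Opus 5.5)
The proposition has two claims, and I would prove them in order.

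\emph{Step 1: equality of posterior probabilities.} I would reduce this to equality of the unnormalized numerators \(p_0(\gamma)m_\gamma(\D)\). The reason is that \eqref{eq:basepost} divides every numerator by the same normalizing constant. Let \(\sigma\) be a permutation of coordinates that acts inside each \(G_k\) and sends \(\supp(\gamma)\) to \(\supp(\gamma')\). Such a \(\sigma\) exists because \(\gamma\) and \(\gamma'\) differ only by within-group relabeling.

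\begin{itemize}
\item Invariance of the support prior gives \(p_0(\gamma')=p_0(\gamma)\).
\item For the marginal likelihood, I would change variables \(\vartheta_{\gamma'}=\sigma(\vartheta_\gamma)\) in the integral defining \(m_\gamma(\D)\). Invariance of the likelihood and of the within-model prior under this relabeling then gives \(m_{\gamma'}(\D)=m_\gamma(\D)\).
\item In the Gaussian benchmark this can be checked directly. Exact duplicate columns give \(X_{\gamma'}=X_\gamma P\) for a permutation matrix \(P\). Hence \(X_{\gamma'}X_{\gamma'}^\top=X_\gamma X_\gamma^\top\), so \(V_{\gamma'}=V_\gamma\) and \(b_{\gamma'}=b_\gamma\). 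The closed form for \(m_\gamma(y)\) therefore coincides. The Bernoulli prior depends only on \(|\gamma|\), so it is unchanged.
\end{itemize}

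One point needs care: the relabeling must only move coordinates inside groups. Coordinates outside \(\bigcup_k G_k\) are fixed by \(\sigma\), so the invariance hypothesis applies to them as well.

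\emph{Step 2: uniformity on \(\mathcal C^\star\).} Any two supports in \(\mathcal C^\star\) pick exactly one coordinate from each \(G_k\), \(k\in S^\star\), and nothing else. So for \(\gamma,\gamma'\in\mathcal C^\star\) I can take the transposition in each group swapping the chosen coordinates; their product is a within-group permutation mapping \(\gamma\) to \(\gamma'\). The permutation group therefore acts transitively on \(\mathcal C^\star\). By Step 1, \(\pi_0\) is constant on \(\mathcal C^\star\). If \(\pi_0(\mathcal C^\star\mid\D)=1\), the constant equals \(1/M^\star\). This is exactly the uniformity hypothesis of Theorem~\ref{thm:topmseparation}.

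\emph{Step 3: the general bounded-ratio case.} Suppose \(|\log\pi_0(\gamma\mid\D)-\log\pi_0(\gamma'\mid\D)|\le r\) on \(\mathcal C^\star\). Let \(\gamma_{\max}\) maximize \(\pi_0\) over \(\mathcal C^\star\). For every \(\gamma'\in\mathcal C^\star\),
\[
\pi_0(\gamma_{\max}\mid\D)\le e^{r}\pi_0(\gamma'\mid\D).
\]
Summing over \(\gamma'\in\mathcal C^\star\) gives \(M^\star\pi_0(\gamma_{\max}\mid\D)\le e^{r}\pi_0(\mathcal C^\star\mid\D)\). Rearranging yields
\[
\max_{\gamma\in\mathcal C^\star}\pi_0(\gamma\mid\mathcal C^\star,\D)\le \frac{e^{r}}{M^\star},
\]
which is the bounded-nonuniform condition with the same \(r\). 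This is the same argument as in Corollary~\ref{cor:nearredundancyodds}.

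\emph{Main obstacle.} The substantive step is justifying marginal-likelihood invariance in general, that is, the change of variables under relabeling. Everything else is bookkeeping.
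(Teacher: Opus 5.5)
Your proposal is correct and follows the paper's proof essentially step for step: you get equality of the posterior numerators from invariance of the support prior and the marginal likelihood under a within-group relabeling, uniformity from transitivity of within-group permutations on \(\mathcal C^\star\), and the near-uniform condition from the bounded log-ratio argument. Your explicit checks, namely \(X_{\gamma'}=X_\gamma P\) giving \(V_{\gamma'}=V_\gamma\), the product-of-transpositions construction, and the summation over \(\mathcal C^\star\), fill in details that the paper asserts without computation.
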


\begin{lemma}[Interval cover]
\label{lem:intervalcoverage}
For ordered predictors \(1,\dots,p\), any interval can be written as a disjoint union of at most \(\max\{1,2\lceil\log_2 p\rceil\}\) dyadic intervals. Hence any union of \(r\) contiguous bands can be written as a union of at most \(r\max\{1,2\lceil\log_2 p\rceil\}\) dyadic intervals. If a multiresolution interval dictionary contains such unions with capacity \(u\), and if the unrestricted posterior assigns mass at least \(1-\varepsilon\) to supports contained in a union of \(r\) bands with that capacity, then the dictionary contains a region with retained mass at least \(1-\varepsilon\). The distortion bounds in Theorem~\ref{thm:regioncompressibility} therefore apply with a reporting cost that grows at most logarithmically in \(p\) per band, up to the chosen capacity encoding.
\end{lemma}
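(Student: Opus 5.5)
The statement combines a combinatorial fact with a direct application of Theorem~\ref{thm:regioncompressibility}, and I will handle the two parts separately.

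\emph{Combinatorial part.} Fix an interval \([s,t]\subseteq\{1,\dots,p\}\) and write \(L=\lceil\log_2 p\rceil\). Dyadic intervals are the sets \(\{k2^j+1,\dots,(k+1)2^j\}\) intersected with \(\{1,\dots,p\}\), for levels \(j=0,\dots,L\). I would use the standard segment-tree decomposition. Take \(j\) to be the largest level at which some dyadic block of size \(2^j\) lies inside \([s,t]\).

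On the left side, greedily peel off, in increasing level order, the maximal dyadic blocks aligned at \(s\). Each such block is the canonical one at its level, and its level strictly increases from step to step. This side therefore uses at most one block per level below the top level, and symmetrically on the right side.

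A cleaner way to run the count is to find the least common ancestor node of \(s\) and \(t\) in the binary tree of depth \(L\). The interval then splits into a left part contained in the left child and a right part contained in the right child. Each part is a suffix or prefix of a dyadic block, and such a suffix or prefix is a disjoint union of at most one dyadic block per level, for at most \(L\) levels each. This gives at most \(2L\) blocks. The case \(p=1\), or an interval that is itself dyadic, gives \(1\) block, which explains the \(\max\{1,\cdot\}\). Applying this to each of \(r\) bands and taking unions gives the bound \(r\max\{1,2L\}\).

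\emph{Statistical part.} Let \(\mathcal C\) be the region of supports contained in the union of the \(r\) bands, with the stated capacity. By the decomposition, this union equals a union of at most \(r\max\{1,2L\}\) dyadic intervals. By hypothesis, the multiresolution dictionary contains exactly this region with capacity \(u\), so its retained mass is \(\alpha(\mathcal C)=\pi_0(\mathcal C\mid\D)\ge1-\varepsilon\).

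Apply Theorem~\ref{thm:regioncompressibility} with \(R=1\), \(\mathcal C_1=\mathcal C\) and \(\mathcal I=\{1\}\). We need \(p_1>0\). This holds because \(\varepsilon<1\) gives \(\alpha_{\mathcal I}\ge1-\varepsilon>0\). The TV, FKL and RKL bounds then follow directly for every fallback weight \(\zeta\in[\epsilon_0,1)\).

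For the reporting cost, the region is encoded by listing its dyadic intervals together with the capacity. Each dyadic interval is specified by a level and an index, which takes \(O(\log p)\) symbols. Alternatively, under a per-interval unit-cost convention, each band costs \(O(\log p)\). Part (d) of Theorem~\ref{thm:regioncompressibility} then shows that the expected cost is \(\zeta c_0+(1-\zeta)c_1\), which grows at most logarithmically in \(p\) per band, up to the capacity encoding.

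\emph{Main obstacle.} The real work is the count of at most two blocks per level, in particular the boundary effects when \(p\) is not a power of two. I would handle this by embedding \(\{1,\dots,p\}\) into \(\{1,\dots,2^L\}\), decomposing there, and intersecting the resulting blocks with \(\{1,\dots,p\}\). Since \(t\le p\), the intersected blocks remain inside the interval and stay disjoint, so the count does not increase.

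One interpretive choice also needs to be fixed. The capacity \(u\) must apply to the union region as a whole, not separately to each dyadic piece. Otherwise the union of the pieces would not be the same set as \(\mathcal C\). I would state this convention explicitly as part of what it means for the dictionary to "contain such unions with capacity \(u\)".
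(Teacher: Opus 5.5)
Your proof is correct and follows essentially the paper's route: the canonical dyadic decomposition with at most two blocks per scale, followed by a one-region application of Theorem~\ref{thm:regioncompressibility}. Your least-common-ancestor split into a suffix and a prefix of sibling blocks, together with the embedding into \(\{1,\dots,2^L\}\), is a cleaner justification of the paper's greedy ``two per scale, \(\lceil\log_2 p\rceil\) scales'' count.

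Two small remarks. First, charging each dyadic interval \(O(\log p)\) bits only yields \(O(\log^2 p)\) per band, so the logarithmic claim rests on the per-interval unit-cost convention you also mention. Second, for the retained-mass step the paper only needs the dictionary region to \emph{contain} the band-union region, since retained mass is monotone. Your capacity convention forcing equality is therefore unnecessary: a per-piece capacity \(u\) gives a superset and works equally well.
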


\begin{corollary}[Near-uniform support atoms]
\label{cor:nearuniformtopm}
Let \(\mathcal C^\star\) have size \(M^\star\) under the disjoint-group setup of Theorem~\ref{thm:topmseparation}. Suppose \(\pi_0(\mathcal C^\star)\ge1-\delta\) and, conditional on \(\mathcal C^\star\),
\[
\max_{\gamma\in\mathcal C^\star} \pi_0(\gamma\mid\mathcal C^\star) \le \frac{\exp(r)}{M^\star}.
\]
For any support-atom truncation that retains a set $A$ of at most $M$ supports and renormalizes $\pi_0$ on $A$,
\[
\TV\{\pi_0(\cdot\mid A),\pi_0\}\le\varepsilon \quad\Longrightarrow\quad M \ge \exp(-r)(1-\varepsilon-\delta)M^\star .
\]
Thus the support-atom reporting-cost lower bound is stable to small posterior mass outside the exchangeable group-representative region and to a bounded nonuniformity factor $\exp(r)$ inside the region.
\end{corollary}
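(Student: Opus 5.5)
Let \(\pi_A=\pi_0(\cdot\mid A)\) denote the truncation to a retained set \(A\) with \(|A|\le M\), renormalized as \(\pi_0(\gamma)\ind\{\gamma\in A\}/\pi_0(A)\). The plan is to show that \(\TV(\pi_A,\pi_0)\) equals the mass left outside \(A\), and then to bound that leftover mass from below using the concentration and flatness hypotheses on \(\mathcal C^\star\).

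\textbf{Step 1 (TV identity).} The truncation \(\pi_A\) is exactly the hard-region restriction of Proposition~\ref{prop:bridge} with \(\F=A\). Part (i) of that proposition then gives
\[
\TV(\pi_A,\pi_0)=1-\pi_0(A).
\]
Hence \(\TV\le\varepsilon\) forces \(\pi_0(A)\ge1-\varepsilon\). If \(\pi_0(A)=0\), the truncation is undefined, so we may assume \(\pi_0(A)>0\).

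\textbf{Step 2 (upper bound on \(\pi_0(A)\)).} Split the retained set as \(A=(A\cap\mathcal C^\star)\cup(A\setminus\mathcal C^\star)\).

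The outside part is controlled by concentration: \(\pi_0(A\setminus\mathcal C^\star)\le\pi_0((\mathcal C^\star)^c)\le\delta\).

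The inside part is controlled by flatness. Each atom satisfies
\[
\pi_0(\gamma)=\pi_0(\mathcal C^\star)\,\pi_0(\gamma\mid\mathcal C^\star)\le \pi_0(\gamma\mid\mathcal C^\star)\le e^r/M^\star .
\]
Summing over at most \(M\) atoms gives \(\pi_0(A\cap\mathcal C^\star)\le M e^r/M^\star\).

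\textbf{Step 3 (combine).} Steps 1 and 2 give
\[
1-\varepsilon\le \pi_0(A)\le \delta+Me^r/M^\star ,
\]
which rearranges to \(M\ge e^{-r}(1-\varepsilon-\delta)M^\star\).

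When \(1-\varepsilon-\delta\le0\), the bound holds trivially because \(M\ge0\).

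There is no real obstacle. The only point needing care is Step 2: the conditional bound must be converted to an unconditional one, which uses \(\pi_0(\mathcal C^\star)\le1\). One should also note that the disjoint-group structure is not used, beyond the fact that \(M^\star=|\mathcal C^\star|\).
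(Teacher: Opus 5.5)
Your proposal is correct and matches the paper's proof step for step. You use the identity $\TV\{\pi_0(\cdot\mid A),\pi_0\}=1-\pi_0(A)$, split $\pi_0(A)$ into the part inside $\mathcal C^\star$ (bounded by $e^rM/M^\star$ using $\pi_0(\mathcal C^\star)\le1$) and the part outside (bounded by $\delta$), and then rearrange, with the citation of Proposition~\ref{prop:bridge}(i) and the remarks on the degenerate cases being harmless additions.
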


\section{Additional Empirical Diagnostics}
\label{app:empiricaldiagnostics}

This appendix reports the full exact benchmark, reporting-cost conventions, large-\(p\) report diagnostics, split-stability checks, and reduced real-response spectroscopy checks.

\subsection{Exact Benchmark Details}
\label{subsec:exactbenchmarkappendix}

Table~\ref{tab:exactbenchmarkfull} gives the full exact benchmark by scenario. It is the detailed version of the compact exact benchmark summary in Table~\ref{tab:maincompetitorbenchmark}.

\begin{table}[!htbp]
\centering
\scriptsize

\begin{tabular}{llcccc}
\toprule
Scenario & Method & TV & FKL & $q_0$ & Reporting cost \\
\midrule
graph community & Pooled-pruned & 0.014 (0.001) & 0.001 (0.000) & 0.001 (0.000) & 4.86 (0.17) \\
 & Cluster kernels & 0.025 (0.002) & 0.003 (0.000) & 0.207 (0.075) & 6.50 (0.35) \\
 & Top-M & 0.084 (0.017) & 0.040 (0.004) & 0.405 (0.086) & 29.75 (4.33) \\
 & Credible set & 0.041 (0.000) & 0.049 (0.000) & 0.164 (0.001) & 17.78 (0.27) \\
 & Dilution & 0.031 (0.003) & 0.006 (0.001) & -- & 43.50 (9.10) \\
 & DPP & 0.055 (0.005) & 0.021 (0.004) & -- & 37.10 (7.52) \\
 & Fixed hard & 0.048 (0.001) & 0.012 (0.001) & 0.652 (0.041) & 16.34 (0.70) \\
\addlinespace
misspecified grouping & Pooled-pruned & 0.011 (0.001) & 0.002 (0.000) & 0.001 (0.000) & 3.56 (0.16) \\
 & Cluster kernels & 0.013 (0.002) & 0.002 (0.000) & 0.025 (0.006) & 4.99 (0.10) \\
 & Top-M & 0.010 (0.004) & 0.019 (0.005) & 0.039 (0.014) & 9.88 (0.73) \\
 & Credible set & 0.040 (0.000) & 0.048 (0.000) & 0.156 (0.002) & 15.88 (0.14) \\
 & Dilution & 0.030 (0.006) & 0.006 (0.002) & -- & 8.80 (1.95) \\
 & DPP & 0.052 (0.011) & 0.022 (0.007) & -- & 7.20 (1.60) \\
 & Fixed hard & 0.025 (0.007) & 0.007 (0.003) & 0.801 (0.081) & 18.78 (1.38) \\
\addlinespace
multi representative & Pooled-pruned & 0.014 (0.002) & 0.002 (0.000) & 0.001 (0.000) & 4.65 (0.11) \\
 & Cluster kernels & 0.021 (0.001) & 0.002 (0.000) & 0.050 (0.029) & 5.53 (0.13) \\
 & Top-M & 0.014 (0.005) & 0.023 (0.005) & 0.052 (0.019) & 11.15 (0.87) \\
 & Credible set & 0.040 (0.000) & 0.048 (0.000) & 0.157 (0.001) & 16.44 (0.24) \\
 & Dilution & 0.038 (0.006) & 0.007 (0.002) & -- & 11.70 (2.75) \\
 & DPP & 0.069 (0.010) & 0.025 (0.006) & -- & 9.80 (2.14) \\
 & Fixed hard & 0.016 (0.003) & 0.010 (0.001) & 0.089 (0.036) & 6.53 (0.61) \\
\addlinespace
noisy grouping & Pooled-pruned & 0.013 (0.000) & 0.002 (0.000) & 0.001 (0.000) & 3.82 (0.29) \\
 & Cluster kernels & 0.016 (0.002) & 0.002 (0.000) & 0.035 (0.013) & 5.35 (0.16) \\
 & Top-M & 0.009 (0.003) & 0.017 (0.004) & 0.034 (0.013) & 9.76 (0.70) \\
 & Credible set & 0.039 (0.001) & 0.048 (0.000) & 0.155 (0.002) & 15.92 (0.30) \\
 & Dilution & 0.023 (0.005) & 0.005 (0.001) & -- & 8.50 (1.87) \\
 & DPP & 0.040 (0.007) & 0.017 (0.005) & -- & 6.70 (1.51) \\
 & Fixed hard & 0.025 (0.007) & 0.009 (0.003) & 0.729 (0.088) & 17.52 (1.52) \\
\addlinespace
one representative & Pooled-pruned & 0.017 (0.001) & 0.003 (0.000) & 0.001 (0.000) & 4.05 (0.17) \\
 & Cluster kernels & 0.019 (0.001) & 0.002 (0.000) & 0.026 (0.008) & 5.59 (0.11) \\
 & Top-M & 0.009 (0.003) & 0.018 (0.005) & 0.035 (0.013) & 9.54 (0.70) \\
 & Credible set & 0.039 (0.001) & 0.048 (0.000) & 0.154 (0.002) & 15.63 (0.12) \\
 & Dilution & 0.021 (0.004) & 0.004 (0.001) & -- & 8.50 (1.83) \\
 & DPP & 0.037 (0.006) & 0.015 (0.004) & -- & 7.10 (1.52) \\
 & Fixed hard & 0.046 (0.002) & 0.017 (0.001) & 0.506 (0.023) & 13.63 (0.41) \\
\addlinespace
ordered interval & Pooled-pruned & 0.016 (0.001) & 0.003 (0.000) & 0.001 (0.000) & 5.13 (0.03) \\
 & Cluster kernels & 0.030 (0.003) & 0.007 (0.000) & 0.284 (0.097) & 7.62 (0.27) \\
 & Top-M & 0.004 (0.001) & 0.009 (0.003) & 0.012 (0.006) & 7.83 (0.29) \\
 & Credible set & 0.039 (0.001) & 0.048 (0.000) & 0.152 (0.003) & 15.09 (0.26) \\
 & Dilution & 0.007 (0.003) & 0.000 (0.000) & -- & 5.10 (0.97) \\
 & DPP & 0.015 (0.006) & 0.002 (0.001) & -- & 5.10 (0.97) \\
 & Fixed hard & 0.065 (0.007) & 0.065 (0.006) & 0.006 (0.003) & 12.53 (0.15) \\
\addlinespace
weak signal & Pooled-pruned & 0.012 (0.001) & 0.003 (0.000) & 0.001 (0.000) & 2.76 (0.30) \\
 & Cluster kernels & 0.026 (0.003) & 0.004 (0.001) & 0.043 (0.011) & 4.12 (0.30) \\
 & Top-M & 0.047 (0.014) & 0.040 (0.003) & 0.212 (0.080) & 15.96 (4.62) \\
 & Credible set & 0.041 (0.000) & 0.052 (0.000) & 0.151 (0.002) & 12.81 (0.62) \\
 & Dilution & 0.011 (0.003) & 0.002 (0.001) & -- & 28.20 (5.55) \\
 & DPP & 0.019 (0.005) & 0.009 (0.003) & -- & 24.60 (4.42) \\
 & Fixed hard & 0.042 (0.002) & 0.019 (0.000) & 0.302 (0.043) & 9.86 (0.80) \\
\bottomrule
\end{tabular}

\caption{Full exact benchmark by scenario.}
\label{tab:exactbenchmarkfull}
\end{table}

Figure~\ref{fig:maincompetitorfrontier} gives the corresponding reporting-distortion frontiers.

\begin{figure}[!htbp]
\centering
\includegraphics[width=.95\textwidth]{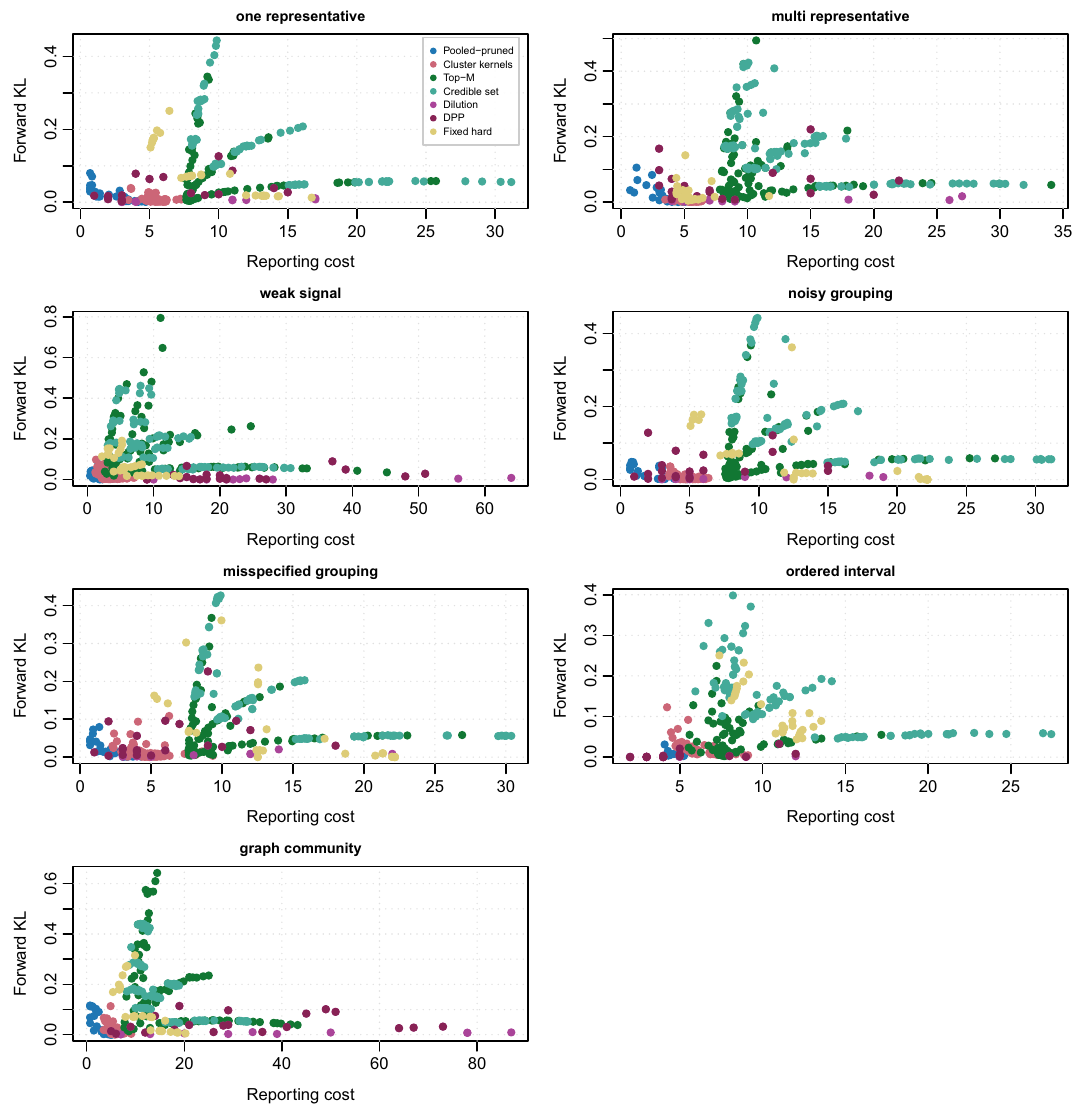}
\caption{Full exact reporting-distortion frontiers.}
\label{fig:maincompetitorfrontier}
\end{figure}

\FloatBarrier

Table~\ref{tab:reportingcosts} records the reporting-cost conventions used by the reporting-distortion objective.

\begin{table}[!htbp]
\centering
\scriptsize
\setlength{\tabcolsep}{2.6pt}
\renewcommand{\tabularxcolumn}[1]{m{#1}}
\renewcommand{\arraystretch}{1.0}
\begin{tabularx}{0.985\textwidth}{@{}>{\raggedright\arraybackslash}m{0.19\textwidth}>{\raggedright\arraybackslash}m{0.245\textwidth}>{\raggedright\arraybackslash}m{0.22\textwidth}>{\raggedright\arraybackslash}X@{}}
\toprule
Family & Displayed item & Primary reporting cost & Sensitivity check \\
\midrule
\end{tabularx}
\vspace{-0.25em}
\renewcommand{\arraystretch}{1.32}
\begin{tabularx}{0.985\textwidth}{@{}>{\raggedright\arraybackslash}m{0.19\textwidth}>{\raggedright\arraybackslash}m{0.245\textwidth}>{\raggedright\arraybackslash}m{0.22\textwidth}>{\raggedright\arraybackslash}X@{}}
\rule[-1.8ex]{0pt}{7.2ex}Top-\(M\) support atoms and credible sets & Sparse index sets for selected supports & Weighted sparse-support reporting cost & Stored support-atom count and credible-set size \\
\rule[-1.8ex]{0pt}{7.2ex}Fixed hard or group-representative regions & Active groups, capacities, and region type & Group-pattern and capacity reporting cost & Number of listed hard regions \\
\rule[-1.8ex]{0pt}{7.2ex}Metric-ball regions & Hamming or group-Hamming centers, radii, and optional bandwidths & Center, radius, and bandwidth reporting cost & Active-list length, metric choice, and split stability \\
\rule[-1.8ex]{0pt}{7.2ex}Posterior-cluster regions & Cluster medoids, bandwidth, and cluster count & Medoid reporting cost with bandwidth and cluster overhead & Cluster count, medoid size, and fallback weight \\
\rule[-1.8ex]{0pt}{7.2ex}Pooled-pruned region dictionaries & Retained kernels after q-mass pruning & Weighted reporting cost of the refitted list & Active kernel count, effective kernels, and retained q-mass threshold \\
\rule[-1.8ex]{0pt}{7.2ex}Prior-changing BMA comparisons & Prior family, hyperparameters, and posterior approximation & Not a reporting cost for the unrestricted posterior & Target-shift TV/FKL and predictive diagnostics \\
\bottomrule
\end{tabularx}
\caption{Reporting-cost conventions.}
\label{tab:reportingcosts}
\end{table}

\FloatBarrier

\subsection{Prediction and Fallback Diagnostics}

This subsection reports diagnostics that affect the interpretation of the large-\(p\) support-posterior summaries.

Table~\ref{tab:q0constrainedlargep} gives the large \(p=100\) pooled-pruned refits with explicit upper bounds on fallback mass. The KKT residual there is computed on the empirical penalized objective scale after pruning and refitting; it is used as a numerical diagnostic rather than as a statistical error bound.

\begin{table}[!htbp]
\centering
\small
\resizebox{\textwidth}{!}{
\begin{tabular}{lcccccc}
\toprule
$q_0$ bound & TV & FKL & $q_0$ & $C_{\mathrm{exp}}$ & $C_{\mathrm{list}}$ & KKT \\
\midrule
$q_0\le0.05$ & 0.136 (0.004) & 0.063 (0.004) & 0.010 (0.003) & 8.22 (0.24) & 119.0 (39.6) & 0.027 (0.008) \\
$q_0\le0.10$ & 0.136 (0.004) & 0.063 (0.003) & 0.014 (0.005) & 8.25 (0.24) & 70.4 (16.2) & 0.025 (0.007) \\
$q_0\le0.25$ & 0.135 (0.004) & 0.062 (0.003) & 0.017 (0.008) & 8.25 (0.25) & 58.5 (8.3) & 0.015 (0.005) \\
$q_0\le0.50$ & 0.135 (0.004) & 0.062 (0.003) & 0.019 (0.010) & 8.27 (0.25) & 53.4 (6.3) & 0.011 (0.004) \\
\bottomrule
\end{tabular}
}
\caption{Fallback-constrained pooled-pruned refits in the large \(p=100\) benchmark.}
\label{tab:q0constrainedlargep}
\end{table}

Table~\ref{tab:reportlengthdiagnostics} separates two notions of report size. \(C_{\mathrm{exp}}\) is the expected reporting cost under the mixture weights, while \(C_{\mathrm{list}}\) and \(K_{\mathrm{list}}\) measure the full active list above the reporting threshold. This distinction matters because a mixture can have a compact weighted description but still require many listed regions if every active component is printed.

\begin{table}[!htbp]
\centering
\small

\begin{tabular}{lccccc}
\toprule
Method & $q_0$ bound & $C_{\mathrm{exp}}$ & $C_{\mathrm{list}}$ & $K_{\mathrm{list}}$ & $K_{\mathrm{eff}}$ \\
\midrule
Pooled-pruned & $q_0\le0.05$ & 8.22 (0.24) & 119.0 (39.6) & 14.6 (5.2) & 5.6 (0.9) \\
Pooled-pruned & $q_0\le0.10$ & 8.25 (0.24) & 70.4 (16.2) & 8.3 (2.0) & 5.0 (0.6) \\
Pooled-pruned & $q_0\le0.25$ & 8.25 (0.25) & 58.5 (8.3) & 6.8 (0.9) & 4.8 (0.5) \\
Pooled-pruned & $q_0\le0.50$ & 8.27 (0.25) & 53.4 (6.3) & 6.3 (0.8) & 4.6 (0.5) \\
Group-Hamming balls & -- & 5.30 (0.05) & 108.2 (2.1) & 17.9 (0.3) & 5.2 (0.2) \\
Hamming balls & -- & 6.10 (0.22) & 134.3 (4.1) & 20.0 (0.6) & 5.4 (0.2) \\
\bottomrule
\end{tabular}

\caption{Expected reporting cost and active-list length.}
\label{tab:reportlengthdiagnostics}
\end{table}

Main Table~\ref{tab:sametargetlargep} gives same-target metric-ball dictionary instances built from posterior support representatives. Tables~\ref{tab:reportlengthdiagnostics} and~\ref{tab:largepghreportfull} add the active-list diagnostics and full display-merged group-Hamming report behind those main comparisons.

\begin{table}[!htbp]
\centering
\scriptsize

\begin{tabularx}{\textwidth}{@{}llXrrrrr@{}}
\toprule
Region & Type & Center & Radius & Kernel cols. & $q_m$ & $\alpha_m$ & Cost \\
\midrule
G1 & group-Hamming ball & groups \{2,5,9,13,17\} & 2 & 5 & 0.336 & 1.000 & 4.62 \\
G2 & group-Hamming ball & groups \{2,5,9,13,17\} & 3 & 5 & 0.319 & 1.000 & 4.62 \\
G3 & group-Hamming ball & groups \{2,5,9,13,17\} & 1 & 5 & 0.253 & 0.988 & 4.62 \\
G4 & group-Hamming ball & groups \{2,5,9,13,17\} & 0 & 5 & 0.090 & 0.852 & 4.62 \\
fallback & unrestricted BMA & all supports & -- & 1 & 0.001 & 1.000 & 18.42 \\
\bottomrule
\end{tabularx}

\caption{Full display-merged group-Hamming report for the example in Table~\ref{tab:largepghreport}.}
\label{tab:largepghreportfull}
\end{table}

Table~\ref{tab:splitstabilityscreen} reports split stability. The coverage-profile correlations are high, indicating that the lower-resolution posterior message is stable across splits. The active-kernel Jaccard values are moderate, indicating that the exact decomposition into printed kernels is not unique. Thus the diagnostic supports stable region-level coverage but not a unique active-kernel list.

\begin{table}[!htbp]
\centering
\scriptsize
\resizebox{\textwidth}{!}{
\begin{tabular}{lcccccc}
\toprule
Setting & Splits & TV & FKL & $q_0$ & Active Jaccard & Coverage cor. \\
\midrule
graph\_community, rho=0.7, rep=1 & 5 & 0.102 (0.001) & 0.038 (0.001) & 0.001 (0.000) & 0.432 & 0.986 \\
weak\_signal, rho=0.9, rep=3 & 5 & 0.119 (0.001) & 0.055 (0.000) & 0.001 (0.000) & 0.349 & 0.962 \\
\bottomrule
\end{tabular}
}
\caption{Split-stability check.}
\label{tab:splitstabilityscreen}
\end{table}

\FloatBarrier

\subsection{Dictionary Diagnostics}
\label{subsec:ablationdiagnostics}

The candidate-source ablation asks which candidate sources help the validation objective, and it checks that the pooled diagnostic is not worse than its embedded pruned subset under the same objective.

\begin{table}[H]
\centering
\scriptsize
\resizebox{\textwidth}{!}{
\begin{tabular}{lrrrrrrrrrr}
\toprule
Method & TV & FKL & Reporting cost & $\beta c$ & $\tau\sum q\log q$ & Obj. & $q_0$ & Eff. & Active & Sec. \\
\midrule
Pooled union & 0.062 & 0.026 & 2.072 & 0.041 & -0.002 & 0.066 & 0.001 & 11.8 & 64.3 & 11.06 \\
Pooled-pruned & 0.062 & 0.026 & 2.079 & 0.042 & -0.002 & 0.066 & 0.001 & 12.0 & 65.0 & 6.13 \\
Full adaptive & 0.044 & 0.018 & 4.581 & 0.092 & -0.001 & 0.109 & 0.334 & 1.9 & 5.0 & 4.03 \\
No residual & 0.047 & 0.020 & 4.520 & 0.090 & -0.001 & 0.110 & 0.336 & 1.9 & 4.7 & 3.20 \\
No cluster & 0.048 & 0.021 & 4.584 & 0.092 & 0.000 & 0.112 & 0.334 & 1.8 & 4.7 & 2.84 \\
Cluster only & 0.008 & 0.002 & 4.701 & 0.094 & 0.000 & 0.096 & 0.023 & 1.3 & 3.0 & 0.36 \\
Residual only & 0.008 & 0.002 & 5.511 & 0.110 & 0.000 & 0.112 & 0.050 & 1.5 & 3.0 & 0.26 \\
Intervals only & 0.040 & 0.011 & 4.289 & 0.086 & -0.001 & 0.096 & 0.001 & 3.9 & 9.3 & 2.13 \\
Graph only & 0.045 & 0.011 & 5.018 & 0.100 & -0.001 & 0.111 & 0.010 & 2.1 & 4.3 & 0.41 \\
Fixed hard & 0.103 & 0.072 & 10.189 & 0.204 & -0.001 & 0.275 & 0.314 & 2.8 & -- & 0.38 \\
Top-$M$ atoms & 0.041 & 0.094 & 9.831 & 0.197 & -0.002 & 0.289 & 0.039 & 9.2 & -- & 0.17 \\
\bottomrule
\end{tabular}
}
\caption{Candidate-source ablation.}
\label{tab:adaptiveablationdetail}
\end{table}

\FloatBarrier

\subsection{Real-Response Spectroscopy Diagnostics}
\label{subsec:realresponsereduced}

The reduced real-response experiment uses Tecator meat spectroscopy with fat response and Gasoline near-infrared spectra with octane response \citep{thodberg1993,liland2026pls}. It averages contiguous spectral blocks using only \(X\), then enumerates the exact BMA posterior on the reduced ordered-region design. This gives a real-response check under an exact reduced reference posterior. Table~\ref{tab:realresponseexact} reports whether the same density-ratio summaries remain useful when the response is observed rather than simulated. The reduced exact spectroscopy checks show that the compression layer can work with observed responses after dimension reduction, while the full-resolution spectroscopy problem remains a larger reference-posterior computation task.

\begin{table}[!htbp]
\centering
\scriptsize

\begin{tabular}{llccccc}
\toprule
Dataset & Method & TV & FKL & $q_0$ & $C_{\mathrm{exp}}$ & RMSE gap \\
\midrule
Gasoline & Pooled-pruned & 0.067 (0.001) & 0.016 (0.000) & 0.001 (0.000) & 5.96 (0.04) & 0.000 (0.000) \\
 & Cluster kernels & 0.076 (0.003) & 0.019 (0.002) & 0.490 (0.058) & 8.10 (0.19) & 0.001 (0.001) \\
 & Top-M atoms & 0.231 (0.012) & 0.216 (0.003) & 0.271 (0.018) & 25.46 (0.86) & 0.003 (0.001) \\
 & Credible set & 0.090 (0.000) & 0.147 (0.000) & 0.098 (0.000) & 17.74 (0.11) & 0.001 (0.001) \\
 & Dilution BMA & 0.642 (0.010) & 1.899 (0.070) & -- & 72.00 (6.94) & 0.004 (0.004) \\
 & DPP BMA & 0.816 (0.015) & 5.571 (0.239) & -- & 31.40 (1.29) & 0.017 (0.007) \\
 & Fixed regions & 0.056 (0.011) & 0.026 (0.005) & 0.942 (0.012) & 41.01 (0.41) & -0.000 (0.000) \\
\addlinespace
Tecator & Pooled-pruned & 0.076 (0.002) & 0.022 (0.002) & 0.001 (0.000) & 6.45 (0.09) & -0.001 (0.001) \\
 & Cluster kernels & 0.053 (0.002) & 0.009 (0.001) & 0.570 (0.117) & 9.19 (0.19) & -0.003 (0.002) \\
 & Top-M atoms & 0.260 (0.009) & 0.219 (0.001) & 0.322 (0.018) & 28.24 (0.95) & -0.015 (0.004) \\
 & Credible set & 0.091 (0.000) & 0.146 (0.000) & 0.100 (0.001) & 18.63 (0.25) & -0.007 (0.002) \\
 & Dilution BMA & 0.904 (0.007) & 8.225 (0.308) & -- & 10.00 (0.45) & 0.052 (0.022) \\
 & DPP BMA & 0.984 (0.002) & 18.969 (0.477) & -- & 2.60 (0.24) & 0.058 (0.025) \\
 & Fixed regions & 0.157 (0.015) & 0.073 (0.007) & 0.816 (0.021) & 36.82 (0.70) & -0.006 (0.004) \\
\bottomrule
\end{tabular}

\caption{Reduced exact real-response spectroscopy benchmark.}
\label{tab:realresponseexact}
\end{table}

Table~\ref{tab:keypairedcomparisons} gives paired comparison checks. The mean gap is comparison minus pooled-pruned, so positive FKL means pooled-pruned has lower FKL, while positive expected reporting cost means pooled-pruned has lower expected reporting cost. Rows involving Dilution-prior BMA are target-shift comparisons to \(\pi_0\), not same-target compression dominance claims. The table shows the direction and uncertainty of the main pairwise claims rather than introducing a new benchmark.

\begin{table}[!htbp]
\centering
\footnotesize

\begin{tabular}{llcrr}
\toprule
Evidence & Comparison & metric & mean gap & Wilcoxon $p$ \\
\midrule
Exact & Top-M atoms & FKL & 0.021 & <0.001 \\
Exact & Posterior clustering & FKL & 0.001 & <0.001 \\
Exact & Posterior clustering & reporting cost & 1.553 & <0.001 \\
Exact & Dilution-prior BMA & FKL & 0.002 & 0.002 \\
Large end-to-end & Top-M atoms & FKL & 0.469 & <0.001 \\
Large end-to-end & Posterior clustering & FKL & -0.011 & <0.001 \\
Large end-to-end & Posterior clustering & reporting cost & 5.117 & <0.001 \\
Large end-to-end & Dilution-prior BMA & FKL & 0.390 & <0.001 \\
Reduced real-response & Top-M atoms & FKL & 0.198 & 0.006 \\
Reduced real-response & Posterior clustering & FKL & -0.006 & 0.103 \\
Reduced real-response & Posterior clustering & reporting cost & 2.380 & 0.006 \\
\bottomrule
\end{tabular}

\caption{Paired comparison checks.}
\label{tab:keypairedcomparisons}
\end{table}

\FloatBarrier

\section{Proofs}
\label{app:proofs}

The proofs are ordered by mathematical dependency rather than by page order. We first prove retained-mass and hard-region identities, then the density-ratio and validation results, then the pool-cover and active-set results, and finally the structural cover and support-atom reporting-cost results. When a later theorem uses a preparatory lemma or proposition, the proof names that result explicitly.

\subsection{Proof of Theorem~\ref{thm:alignment}}

\begin{proof} By Bayes' rule and \eqref{eq:restrictedprior},
\[
\pi(\gamma \mid \F,\D) \propto m_\gamma(\D)\,p_0(\gamma)\,\ind\{\gamma \in \F\}.
\]
Normalizing over $\gamma \in \F$ gives
\[
\pi(\gamma \mid \F,\D) = \frac{m_\gamma(\D)p_0(\gamma)\ind\{\gamma \in \F\}} {\sum_{\gamma' \in \F} m_{\gamma'}(\D)p_0(\gamma')}.
\]
Since
\[
\pi_0(\gamma \mid \D)=\frac{m_\gamma(\D)p_0(\gamma)}{m_0(\D)},
\]
the denominator equals
\[
m_0(\D)\sum_{\gamma' \in \F}\pi_0(\gamma' \mid \D) = m_0(\D)\,\alpha(\F),
\]
which proves \eqref{eq:restrictedpost}. For \eqref{eq:margalpha},
\[
M(\F \mid \D) = \frac{1}{p_0(\F)} \sum_{\gamma \in \F} m_\gamma(\D)p_0(\gamma) = m_0(\D)\,\frac{\alpha(\F)}{p_0(\F)}.
\]
Substituting \eqref{eq:margalpha} into the collapsed posterior
\[
\Pi_{\mathrm{fam}}(\F \mid \D,\lambda) \propto \Pi_{\mathrm{fam}}^0(\F \mid \lambda)\,M(\F \mid \D)
\]
yields \eqref{eq:fampostalpha}.
\end{proof}

\subsection{Proof of Proposition~\ref{prop:bridge}}

\begin{proof} For every $\gamma \in \F$, Theorem~\ref{thm:alignment} implies
\[
\pi(\gamma \mid \F,\D) = \frac{\pi_0(\gamma \mid \D)}{\alpha(\F)},
\]
and the restricted posterior is zero outside $\F$. Hence
\[
\KL\!\left(\pi(\cdot \mid \F,\D)\,\|\,\pi_0(\cdot \mid \D)\right) = \sum_{\gamma \in \F} \frac{\pi_0(\gamma \mid \D)}{\alpha(\F)} \log \frac{1}{\alpha(\F)} = -\log \alpha(\F),
\]
which proves the KL identity in \eqref{eq:bridge}. For total variation,
\begin{align*}
\TV\left(\pi(\cdot\mid\F,\D),\pi_0(\cdot\mid\D)\right)&=\frac{1}{2}\sum_{\gamma\in\F}\left|\frac{\pi_0(\gamma\mid\D)}{\alpha(\F)}-\pi_0(\gamma\mid\D)\right|+\frac{1}{2}\sum_{\gamma\notin\F}\pi_0(\gamma\mid\D)\\
&=\frac{1}{2}\left(\frac{1-\alpha(\F)}{\alpha(\F)}\right)\alpha(\F)+\frac{1}{2}(1-\alpha(\F))=1-\alpha(\F),
\end{align*}
which proves the total-variation identity in \eqref{eq:bridge}.

Let \(K_{\tilde x}(\gamma,\cdot)=p_\gamma(\cdot\mid\tilde x,\D)\) be the posterior-predictive Markov kernel. For any two support laws \(\mu\) and \(\nu\),
\begin{align*}
\TV(\mu K_{\tilde x},\nu K_{\tilde x})
&=\sup_A\left|\sum_{\gamma}\{\mu(\gamma)-\nu(\gamma)\}K_{\tilde x}(\gamma,A)\right|\\
&\le \sup_{0\le f\le1}\left|\sum_{\gamma}\{\mu(\gamma)-\nu(\gamma)\}f(\gamma)\right|
=\TV(\mu,\nu).
\end{align*}
Taking \(\mu=\pi(\cdot\mid\F,\D)\) and \(\nu=\pi_0(\cdot\mid\D)\) proves \eqref{eq:predTVbridge}.

For the averaged posterior, write \(\bar\pi_\lambda=\sum_{\F}r_\F\pi_\F\), where \(r_\F=\Pi_{\mathrm{fam}}(\F\mid\D,\lambda)\) and \(\pi_\F=\pi(\cdot\mid\F,\D)\). Then
\[
\TV\!\left(\bar\pi_\lambda(\cdot \mid \D),\pi_0(\cdot \mid \D)\right) \le \sum_{\F \in \Fp} \Pi_{\mathrm{fam}}(\F \mid \D,\lambda)\, \TV\!\left(\pi(\cdot \mid \F,\D),\pi_0(\cdot \mid \D)\right),
\]
because the \(L^1\) norm is convex. Substituting the total-variation identity in \eqref{eq:bridge} gives \eqref{eq:avgTV}. Applying the same Markov-kernel contraction to \(\bar\pi_\lambda\) gives \eqref{eq:avgpredTV}.
\end{proof}

\subsection{Proof of Proposition~\ref{prop:mixtureidentity}}

\begin{proof} By definition,
\[
\bar\pi_q(\gamma\mid\D) = \sum_{m=0}^M q_m \frac{\pi_0(\gamma\mid\D)\ind\{\gamma\in\F_m\}}{\alpha_m} = \pi_0(\gamma\mid\D)h_q(\gamma).
\]
Taking expectation under $\pi_0(\cdot\mid\D)$ gives
\[
\E_{\pi_0}h_q(\Gamma) = \sum_{m=0}^M q_m \frac{\pi_0(\F_m\mid\D)}{\alpha_m} = \sum_{m=0}^M q_m =1.
\]
The KL and TV identities follow by substituting $\bar\pi_q=\pi_0h_q$ into their definitions. If $h_q>0$ on the support of $\pi_0$, then
\[
\KL(\pi_0\,\|\,\bar\pi_q) = \sum_\gamma \pi_0(\gamma\mid\D) \log\frac{\pi_0(\gamma\mid\D)}{\pi_0(\gamma\mid\D)h_q(\gamma)} = \E_{\pi_0}\{-\log h_q(\Gamma)\}.
\]
Finally, prediction from a support posterior is a Markov kernel from $\Gamma_p$ to predictive distributions. Total variation contracts under Markov kernels, which gives the predictive bound.
\end{proof}

\subsection{Proof of Proposition~\ref{prop:kernelidentity}}

\begin{proof} The calculation is the same as the hard-region proof with indicators replaced by bounded kernel weights.
\[
\bar\pi_q^{\,W}(\gamma\mid\D) = \sum_{m=0}^M q_m \frac{\pi_0(\gamma\mid\D)w_m(\gamma)}{\alpha_m} = \pi_0(\gamma\mid\D)h_q^W(\gamma).
\]
Taking expectation under $\pi_0$ gives
\[
\E_{\pi_0}h_q^W(\Gamma) = \sum_{m=0}^M q_m \frac{\E_{\pi_0}w_m(\Gamma)}{\alpha_m} =1.
\]
Substituting $\bar\pi_q^{\,W}=\pi_0h_q^W$ into the definitions of KL divergence and total variation gives \eqref{eq:kernelRKLTV}. If $h_q^W>0$ on the support of $\pi_0$, the forward-KL identity follows from
\[
\log\frac{\pi_0(\gamma\mid\D)} {\bar\pi_q^{\,W}(\gamma\mid\D)} = -\log h_q^W(\gamma).
\]
Finally, prediction from a support posterior is a Markov kernel from support space to observables. Total variation cannot increase under a Markov kernel, giving the predictive bound.
\end{proof}

\subsection{Proof of Proposition~\ref{prop:functionalconsequence}}

\begin{proof} For bounded \(\varphi\),
\[
\left| \E_{\bar\pi_q^{\,W}}\varphi-\E_{\pi_0}\varphi \right| \le \sum_{\gamma\in\Gamma_p}|\varphi(\gamma)| \left|\bar\pi_q^{\,W}(\gamma\mid\D)-\pi_0(\gamma\mid\D)\right| \le 2B\,\TV(\bar\pi_q^{\,W},\pi_0).
\]
Taking \(\varphi=\ind\{\Gamma\in A\}\) gives the event-probability statement with the sharper constant \(1\). For the predictive statement, compose the support posterior with the Markov kernel \(K\). Total variation contracts under this composition, and applying the same bounded-functional inequality to the induced predictive laws gives the result.
\end{proof}

\subsection{Proof of Proposition~\ref{prop:referenceerror}}

\begin{proof} The first display is the triangle inequality for total variation,
\[
\TV(\bar\pi_q^{\,W},\pi_\star) \le \TV(\bar\pi_q^{\,W},\pi_0)+\TV(\pi_0,\pi_\star).
\]
For any \(|\varphi|\le B\),
\[
\left|\E_{\pi_0}\varphi-\E_{\pi_\star}\varphi\right|\le 2B\,\TV(\pi_0,\pi_\star)\le2B\eta .
\]
Combining this with Proposition~\ref{prop:functionalconsequence} gives the bounded-functional transfer bound. If \(K\) is the common posterior-predictive Markov kernel, then
\[
\TV(\pi_0K,\pi_\star K)\le\TV(\pi_0,\pi_\star)\le\eta,
\]
and the predictive statement follows by applying the same triangle inequality after the Markov-kernel pushforward.
\end{proof}

\subsection{Proof of Theorem~\ref{thm:redundancyregion}}

\begin{proof}
Because the groups are nonempty and pairwise disjoint and \(\mathcal C^\star\) contains exactly one active coordinate in each group in \(S^\star\) and no active coordinates outside those groups, choosing one coordinate from each active group gives a bijection between \(\mathcal C^\star\) and \(\prod_{k\in S^\star}G_k\). Hence \(|\mathcal C^\star|=M^\star=\prod_{k\in S^\star}|G_k|\). For part (a), the within-group relabelings act transitively on \(\mathcal C^\star\) under this one-representative definition. By the assumed posterior-numerator invariance, any two supports in \(\mathcal C^\star\) have the same value of \(p_0(\gamma)m_\gamma(\D)\). Since the normalizing constant is common and the posterior is supported on \(\mathcal C^\star\), \(\pi_0(\cdot\mid\D)\) is uniform on \(\mathcal C^\star\). If a top-\(M\) truncation keeps a set \(A\subseteq\mathcal C^\star\) with \(|A|=M\), then \(\pi_0(A)=M/M^\star\) and the truncated posterior is \(\pi_0(\cdot\mid A,\D)\). Therefore
\[
\TV\{\pi_0(\cdot\mid A,\D),\pi_0(\cdot\mid\D)\}=1-\pi_0(A\mid\D)=1-\frac{M}{M^\star}.
\]
The lower bound on \(M\) follows by rearranging. The region \(\mathcal C^\star\) has retained mass one, so conditioning on it leaves \(\pi_0\) unchanged and all three density-ratio distortions are zero.

For part (b), let \(A\subseteq\Gamma_p\) be the retained atom set of the top-\(M\) truncation, with \(|A|\le M\). Then
\[
\pi_0(A)\le \pi_0(A\cap\mathcal C^\star)+\pi_0((\mathcal C^\star)^c)
\le \pi_0(\mathcal C^\star)\pi_0(A\cap\mathcal C^\star\mid\mathcal C^\star,\D)+\delta
\le \frac{\exp(r)M}{M^\star}+\delta .
\]
The truncation is \(\pi_0(\cdot\mid A,\D)\), so \(\TV\{\pi_0(\cdot\mid A,\D),\pi_0(\cdot\mid\D)\}=1-\pi_0(A\mid\D)\). If this distance is at most \(\varepsilon\), then \(\pi_0(A\mid\D)\ge1-\varepsilon\). Combining the two inequalities gives \(M\ge\exp(-r)(1-\varepsilon-\delta)M^\star\), with a nontrivial lower bound only when \(1-\varepsilon-\delta>0\). The region bounds are Theorem~\ref{thm:regioncompressibility} with one selected region, retained mass \(\alpha_{\mathcal I}\ge1-\delta\), and fallback weight \(\zeta\). Part (c) is the resulting comparison of the number of listed atoms with the reporting cost for the single group-representative region.
\end{proof}

\subsection{Proof of Corollary~\ref{cor:vanishingregiondistortion}}

\begin{proof}
Apply Theorem~\ref{thm:regioncompressibility} with one selected region \(\mathcal C_n^\star\), retained mass \(\alpha_n\ge1-\delta_n\), and fixed fallback weight \(\zeta\). The total variation and forward-KL bounds become
\[
\TV(\bar\pi_{q,n},\pi_{0,n})\le(1-\zeta)(1-\alpha_n)\le(1-\zeta)\delta_n
\]
and
\[
\KL(\pi_{0,n}\,\|\,\bar\pi_{q,n})\le(1-\alpha_n)\log(1/\zeta)\le\delta_n\log(1/\zeta).
\]
The reverse-KL bound follows from the same theorem with \(\varepsilon=\delta_n\). Since \(\delta_n\to0\) and \(\zeta\in(0,1)\) is fixed, all three bounds converge to zero.
\end{proof}

\subsection{Proof of Corollary~\ref{cor:nearredundancyodds}}

\begin{proof}
For \(\gamma,\gamma'\in\mathcal C^\star\), the odds bound implies \(\pi_0(\gamma\mid\D)\le\exp(r)\pi_0(\gamma'\mid\D)\). Let \(\gamma_{\max}\) maximize \(\pi_0(\gamma\mid\D)\) over \(\mathcal C^\star\). Summing the lower bound \(\pi_0(\gamma\mid\D)\ge\exp(-r)\pi_0(\gamma_{\max}\mid\D)\) over \(\mathcal C^\star\) gives
\[
\pi_0(\mathcal C^\star\mid\D)\ge M^\star \exp(-r)\pi_0(\gamma_{\max}\mid\D).
\]
Thus \(\pi_0(\gamma_{\max}\mid\mathcal C^\star,\D)\le\exp(r)/M^\star\), which is the bounded-nonuniform condition in Theorem~\ref{thm:redundancyregion}.
\end{proof}

\subsection{Proof of Lemma~\ref{lem:alphaestimation}}

\begin{proof} For a fixed region, Hoeffding's inequality gives
\[
\PP\{|\widehat\alpha_m-\alpha_m|>e\} \le 2\exp(-2N_\alpha e^2).
\]
A union bound over $m=0,\dots,M$ with $e=e_\alpha$ proves the first display. The map $x\mapsto1/\max\{x,a\}$ is $a^{-2}$-Lipschitz on $[0,1]$, because its derivative has absolute value at most $a^{-2}$ wherever it is differentiable and the kink at $a$ preserves the same global Lipschitz bound. Applying this to $\widehat\alpha_m$ and $\alpha_m$ gives the reciprocal bound.
\end{proof}

\subsection{Proof of Lemma~\ref{lem:massfloordiscrepancy}}

\begin{proof}
Since \(\E_{\pi_0}w_m(\Gamma)=\alpha_m\),
\[
\E_{\pi_0}h_q^a(\Gamma)=\sum_m q_m\frac{\alpha_m}{\max\{\alpha_m,a\}}=1-\sum_{m:\alpha_m<a}q_m\left(1-\frac{\alpha_m}{a}\right).
\]
Also \(\alpha_m^a\ge\alpha_m\), so \(h_q^a\le h_q\) pointwise and
\[
\E_{\pi_0}|h_q-h_q^a|=\E_{\pi_0}(h_q-h_q^a)=1-\E_{\pi_0}h_q^a=\Delta_a(q).
\]
The FKL bound follows from the \(\epsilon_0^{-1}\)-Lipschitz property of \(-\log z\) on \([\epsilon_0,\infty)\). The TV bound follows from the reverse triangle inequality applied to \(|z-1|/2\). For RKL, the derivative of \(z\log z\) is \(1+\log z\), whose absolute value is bounded by \(\max\{|\log\epsilon_0|,|\log U|+1\}\) on \([\epsilon_0,U]\). Multiplying these Lipschitz constants by \(\E|h_q-h_q^a|\) gives the displayed inequalities.
\end{proof}

\subsection{Proof of Theorem~\ref{thm:empiricalcertification}}

\begin{proof} The proof has three steps. First, the retained-mass estimation event controls the difference between the estimated and population floored ratios. Second, a uniform empirical-process bound controls validation sampling error for the floored population class. Third, a two-deviation comparison transfers this uniform bound to the empirical minimizer.

Condition on the construction split, the resulting kernel list, and the mass-estimation sample. On the event in the theorem,
\[
\sup_{q,\gamma} |\widehat h_q^a(\gamma)-h_q^a(\gamma)| \le \sum_{m=0}^M q_m \sup_\gamma w_m(\gamma) \left| \frac1{\widehat\alpha_m^a} - \frac1{\alpha_m^a} \right| \le r_\alpha.
\]
Consequently
\[
\sup_{q\in\mathcal Q}\left|\frac1N\sum_{i=1}^N\ell\{\widehat h_q^a(\Gamma_i)\}-\frac1N\sum_{i=1}^N\ell\{h_q^a(\Gamma_i)\}\right|\le Lr_\alpha .
\]

It remains to control validation sampling error for the truncated class. Put \(z_m(\gamma)=w_m(\gamma)/\alpha_m^a\), so that \(0\le z_m\le a^{-1}\) and \(h_q^a=\sum_m q_m z_m\). Let \(c_I\in I\) be fixed, and define
\[
f_q(\gamma)=\ell\{h_q^a(\gamma)\},\qquad
\widetilde f_q(\gamma)=f_q(\gamma)-\ell(c_I).
\]
The empirical-process deviation is unchanged by this fixed centering: \((P_N-P)\widetilde f_q=(P_N-P)f_q\). Since \(q\in\Delta_M\),
\[
h_q^a(\gamma)-c_I=\sum_{m=0}^M q_m\{z_m(\gamma)-c_I\}.
\]
The translated loss \(\widetilde\ell(u)=\ell(u+c_I)-\ell(c_I)\) satisfies \(\widetilde\ell(0)=0\) and is \(L\)-Lipschitz on \(I-c_I\). Also \(|z_m(\gamma)-c_I|\le a^{-1}\), because both \(z_m(\gamma)\) and \(c_I\) lie in \([0,a^{-1}]\). Let \(P_N\) be the validation empirical measure and \(P\) the reference posterior law. Symmetrization gives
\[
\E\sup_{q\in\mathcal Q}|(P_N-P)f_q|
\le2\E_\sigma\sup_{q\in\mathcal Q}\left|\frac1N\sum_{i=1}^N\sigma_i \widetilde f_q(\Gamma_i)\right|,
\]
where \(\sigma_i\) are iid Rademacher signs. Applying the contraction inequality to the centered class gives
\[
\E_\sigma\sup_{q\in\mathcal Q}\left|\frac1N\sum_{i=1}^N\sigma_i \widetilde f_q(\Gamma_i)\right|
\le 2L\E_\sigma\sup_{q\in\mathcal Q}\left|\sum_{m=0}^M q_m\frac1N\sum_{i=1}^N\sigma_i \{z_m(\Gamma_i)-c_I\}\right|.
\]
Here \(\mathcal Q\subseteq\Delta_M\), so the supremum over \(q\) is bounded by the largest absolute value among the \(M+1\) centered coordinate Rademacher averages. Massart's finite-class lemma applied to these coordinate classes yields
\[
\E\sup_{q\in\mathcal Q} \left| \frac1N\sum_{i=1}^N \{f_q(\Gamma_i)-\E f_q(\Gamma)\} \right| \le 4La^{-1}\sqrt{\frac{2\log\{2(M+1)\}}{N}}.
\]
Changing one validation draw changes the displayed supremum by at most \(2B_\ell/N\). McDiarmid's inequality therefore yields, with probability at least \(1-\delta\),
\[
\sup_{q\in\mathcal Q} \left| \frac1N\sum_{i=1}^N f_q(\Gamma_i) - \E f_q(\Gamma) \right| \le 4La^{-1}\sqrt{\frac{2\log\{2(M+1)\}}{N}} + 2B_\ell\sqrt{\frac{\log(2/\delta)}{2N}}.
\]
Let
\[
\Delta_N=4La^{-1}\sqrt{\frac{2\log\{2(M+1)\}}{N}} + 2B_\ell\sqrt{\frac{\log(2/\delta)}{2N}}+Lr_\alpha .
\]
The preceding displays imply
\[
\sup_{q\in\mathcal Q}\left|\widehat{\mathcal L}_{\beta,\tau}(q)-\mathcal L_{\beta,\tau}^a(q)\right|\le\Delta_N,
\]
because the reporting-cost and entropy terms are identical in the two objectives after conditioning. If \(q^\star\in\operatorname*{arg\,min}_{q\in\mathcal Q}\mathcal L_{\beta,\tau}^a(q)\), then
\[
\mathcal L_{\beta,\tau}^a(\widehat q)\le \widehat{\mathcal L}_{\beta,\tau}(\widehat q)+\Delta_N\le \widehat{\mathcal L}_{\beta,\tau}(q^\star)+\Delta_N\le \mathcal L_{\beta,\tau}^a(q^\star)+2\Delta_N,
\]
which is the stated bound.
\end{proof}

\subsection{Proof of Proposition~\ref{prop:mixingcertification}}

\begin{proof} Condition on construction and on the mass-estimation event. Write \(f_q(\gamma)=\ell\{h_q^a(\gamma)\}\) and \(\widehat f_q(\gamma)=\ell\{\widehat h_q^a(\gamma)\}\). The same reciprocal-mass calculation as in Theorem~\ref{thm:empiricalcertification} gives
\[
\sup_{q,\gamma}|\widehat h_q^a(\gamma)-h_q^a(\gamma)|\le r_\alpha,
\qquad
\sup_{q,\gamma}|\widehat f_q(\gamma)-f_q(\gamma)|\le Lr_\alpha .
\]
Let \(O_j=(\Gamma_{(2j-2)s+1},\dots,\Gamma_{(2j-1)s})\) be the \(j\)th odd block and define
\[
\bar f_q(O_j)=\frac1s\sum_{r=1}^{s} f_q\{\Gamma_{(2j-2)s+r}\}.
\]
Stationarity gives \(\E\bar f_q(O_j)=\E_{\pi_0}f_q(\Gamma)\), and boundedness gives \(|\bar f_q(O_j)|\le B_\ell\). By Berbee's coupling for beta-mixing sequences, there exist independent blocks \(O_1^\ast,\dots,O_\mu^\ast\), each with the same marginal law as the corresponding odd block, such that
\[
\PP\{(O_1,\dots,O_\mu)\ne(O_1^\ast,\dots,O_\mu^\ast)\}\le(\mu-1)\beta_{\mathrm{mix}}(s).
\]
On the coupling event, the blocked empirical process is bounded by the independent-block process
\[
\sup_{q\in\mathcal Q}\left|\frac1\mu\sum_{j=1}^{\mu}\bar f_q(O_j^\ast)-\E_{\pi_0}f_q(\Gamma)\right|.
\]

We now control this independent-block process. Write \(O_j^\ast=(\Gamma_{j1}^\ast,\ldots,\Gamma_{js}^\ast)\), let \(z_m(\gamma)=w_m(\gamma)/\alpha_m^a\), and fix \(c_I\in I\). For a block \(O=(\gamma_1,\ldots,\gamma_s)\), set
\[
v_q(O)=\left(\sum_{m=0}^M q_m\{z_m(\gamma_1)-c_I\},\ldots,\sum_{m=0}^M q_m\{z_m(\gamma_s)-c_I\}\right)
\]
and
\[
\psi(v)=\frac1s\sum_{r=1}^s\{\ell(v_r+c_I)-\ell(c_I)\}.
\]
Then \(\psi(0)=0\), \(\psi\) is \(L/\sqrt{s}\)-Lipschitz under the Euclidean norm, and \(\bar f_q(O)-\ell(c_I)=\psi\{v_q(O)\}\). Symmetrization over the \(\mu\) independent blocks and the vector-contraction inequality give, for iid Rademacher signs \(\sigma_j\) and \(\epsilon_{jr}\),
\begin{align*}
\E\sup_{q\in\mathcal Q}\left|\frac1\mu\sum_{j=1}^{\mu}\bar f_q(O_j^\ast)-\E_{\pi_0}f_q(\Gamma)\right|
&\le 2\E_\sigma\sup_{q\in\mathcal Q}\left|\frac1\mu\sum_{j=1}^{\mu}\sigma_j\psi\{v_q(O_j^\ast)\}\right|\\
&\le \frac{4L}{\sqrt{s}}\E_\epsilon\sup_{q\in\mathcal Q}\left|\frac1\mu\sum_{j=1}^{\mu}\sum_{r=1}^{s}\epsilon_{jr}\sum_{m=0}^M q_m\{z_m(\Gamma_{jr}^\ast)-c_I\}\right|.
\end{align*}
The last supremum is bounded by the largest absolute centered coordinate average. Since \(|z_m(\gamma)-c_I|\le a^{-1}\), Massart's finite-class lemma gives
\[
\E_\epsilon\max_{0\le m\le M}\left|\frac1\mu\sum_{j=1}^{\mu}\sum_{r=1}^{s}\epsilon_{jr}\{z_m(\Gamma_{jr}^\ast)-c_I\}\right|
\le a^{-1}\sqrt{\frac{2s\log\{2(M+1)\}}{\mu}}.
\]
Therefore the expected independent-block deviation is at most
\[
4La^{-1}\sqrt{\frac{2\log\{2(M+1)\}}{\mu}}.
\]
Changing one independent block changes the displayed supremum by at most \(2B_\ell/\mu\). McDiarmid's inequality yields, with probability at least \(1-\delta\) for the coupled independent blocks,
\[
\sup_{q\in\mathcal Q}\left|\frac1\mu\sum_{j=1}^{\mu}\bar f_q(O_j^\ast)-\E_{\pi_0}f_q(\Gamma)\right|
\le 4La^{-1}\sqrt{\frac{2\log\{2(M+1)\}}{\mu}}+2B_\ell\sqrt{\frac{\log(2/\delta)}{2\mu}}.
\]
Adding the Berbee coupling failure probability transfers this bound to the original odd blocks. The estimated-ratio block objective differs from the population-ratio block objective by at most \(Lr_\alpha\) uniformly in \(q\). Thus the blocked empirical objective and \(\mathcal L_{\beta,\tau}^a\) differ uniformly by
\[
\Delta_\mu=4La^{-1}\sqrt{\frac{2\log\{2(M+1)\}}{\mu}}+2B_\ell\sqrt{\frac{\log(2/\delta)}{2\mu}}+Lr_\alpha .
\]
The same two-deviation comparison as in Theorem~\ref{thm:empiricalcertification} gives
\[
\mathcal L_{\beta,\tau}^a(\widehat q_{\mathrm{blk}})\le\inf_{q\in\mathcal Q}\mathcal L_{\beta,\tau}^a(q)+2\Delta_\mu,
\]
which is the displayed bound.
\end{proof}

\subsection{Proof of Corollary~\ref{cor:endtoendcertification}}

\begin{proof} Apply the proof of Theorem~\ref{thm:empiricalcertification} with \(M=J\) to the fixed finite pool \(\mathcal W\). Conditional on construction and mass estimation, the theorem gives a uniform deviation bound
\[
\sup_{q\in\Delta_{\epsilon_0}^{J}} \left| \widehat{\mathcal L}_{\beta,\tau}(q) - \mathcal L_{\beta,\tau}^a(q) \right| \le \frac12 B_N,
\]
where \(B_N\) denotes the three displayed stochastic and mass-estimation terms in Corollary~\ref{cor:endtoendcertification}. Let \(q^\star\) minimize \(\mathcal L_{\beta,\tau}^a\) over the same finite pool. Then
\[
\mathcal L_{\beta,\tau}^a(\widehat q) \le \widehat{\mathcal L}_{\beta,\tau}(\widehat q) + \frac12 B_N \le \widehat{\mathcal L}_{\beta,\tau}(q^\star) + \varepsilon_{\mathrm{opt}} + \frac12 B_N \le \mathcal L_{\beta,\tau}^a(q^\star) + \varepsilon_{\mathrm{opt}} + B_N.
\]
This proves the finite-pool statement. The final sentence records the remaining approximation gap between the generated pool and the larger comparison class.
\end{proof}

\subsection{Proof of Theorem~\ref{thm:oraclecover}}

\begin{proof} Work on the cover event and on the reciprocal mass-estimation event. Let \(W\) be any comparison list with elements \(w_0,\dots,w_K\), and let \(q\in\Delta_{\epsilon_0}(W)\). Choose a matched pool list \(\widetilde W\subset\widehat{\mathcal P}\) from the cover event and use the same weight vector \(q\). The density ratios satisfy
\[
\E_{\pi_0}\left| \sum_{m=0}^K q_m g_{w_m}^a(\Gamma) - \sum_{m=0}^K q_m g_{\widetilde w_m}^a(\Gamma) \right| \le \sum_{m=0}^K q_m d_a(w_m,\widetilde w_m) \le \xi .
\]
The \(L\)-Lipschitz loss therefore changes by at most \(L\xi\). The entropy term is unchanged because the same \(q\) is used. The expected reporting-cost term changes by at most \(\beta\xi_c\), since the fallback kernel is shared and \(\sum_{m=1}^Kq_m\le1\). Thus every comparison mixture has a pool mixture with population objective at most \(L\xi+\beta\xi_c\) larger. If the procedure uses a total-list reporting-cost penalty, the same calculation gives the looser \(\beta K\xi_c\) term stated in the theorem note.

Corollary~\ref{cor:endtoendcertification} compares the empirical optimizer \(\widehat q\) with the best mixture over the generated finite pool up to \(\varepsilon_{\mathrm{opt}}+B_N(\widehat{\mathcal P},\delta)\). Combining that finite-pool comparison with the preceding cover approximation and then taking the infimum over comparison lists proves the conditional result on the reciprocal-mass event. If this event holds with probability at least \(1-\delta_\alpha\), the unconditional failure probability is at most the sum of the validation failure probability, the cover failure probability, and \(\delta_\alpha\).
\end{proof}

\subsection{Proof of Theorem~\ref{thm:columngeneration}}

\begin{proof} For any $q\in\Delta_{\epsilon_0}$,
\[
\sum_\ell q_\ell g_\ell(\Gamma_i) \ge q_0g_0(\Gamma_i) \ge \epsilon_0,
\]
so the empirical forward-KL loss is finite. The map $q\mapsto\sum_\ell q_\ell g_\ell(\Gamma_i)$ is affine and positive on $\Delta_{\epsilon_0}$, and $z\mapsto-\log z$ is convex on $(0,\infty)$. Hence each loss term is convex in $q$. The complexity term is linear and $\sum_\ell q_\ell\log q_\ell$ is convex on the simplex, so $\widehat\Phi$ is convex for $\tau\ge0$.

If \(A\subseteq B\), then \(\Delta_{\epsilon_0}(A)\subseteq\Delta_{\epsilon_0}(B)\). Therefore
\[
\widehat\Phi(q_B)=\min_{q\in\Delta_{\epsilon_0}(B)}\widehat\Phi(q)\le\min_{q\in\Delta_{\epsilon_0}(A)}\widehat\Phi(q)=\widehat\Phi(q_A).
\]
For a proper convex function on a convex set, the variational inequality \(D\widehat\Phi(q_A;d)\ge0\) for every feasible tangent direction \(d\in T_{\epsilon_0}(q_A)\), with one-sided directional derivatives interpreted in the extended-real sense, is sufficient for global optimality. Indeed, for any \(q\in\Delta_{\epsilon_0}\), the direction \(d=q-q_A\) belongs to \(T_{\epsilon_0}(q_A)\), and convexity gives
\[
\widehat\Phi(q)\ge \widehat\Phi(q_A)+D\widehat\Phi(q_A;q-q_A)\ge\widehat\Phi(q_A).
\]
Because \(q_A\) minimizes over \(\Delta_{\epsilon_0}(A)\), this inequality already holds for directions that stay inside the active coordinates. Hence only feasible directions that add mass to inactive coordinates remain to be checked.
\end{proof}

\subsection{Proof of Theorem~\ref{thm:regioncompressibility}}

\begin{proof} The proposed weights are nonnegative and sum to
\[
\zeta+(1-\zeta)\alpha_{\mathcal I}^{-1}\sum_{r\in\mathcal I}p_r=1.
\]
Since \(q_0=\zeta\ge\epsilon_0\), the vector is feasible. For \(\gamma\in\mathcal C_r\) with \(r\in\mathcal I\), the normalized hard-region column equals \(1/p_r\) for region \(r\) and zero for all other selected regions. Thus the region contribution is \((1-\zeta)/\alpha_{\mathcal I}\). Outside the selected union only the fallback kernel contributes, giving the displayed density ratio.

Let \(\mathcal U_{\mathcal I}=\cup_{r\in\mathcal I}\mathcal C_r\). On \(\mathcal U_{\mathcal I}\),
\[
h_q-1=(1-\zeta)(1/\alpha_{\mathcal I}-1),
\]
while on \(\mathcal U_{\mathcal I}^c\), \(1-h_q=1-\zeta\). Therefore
\[
\TV(\bar\pi_q,\pi_0) = \frac12\{\alpha_{\mathcal I}(1-\zeta)(1/\alpha_{\mathcal I}-1)+(1-\alpha_{\mathcal I})(1-\zeta)\} = (1-\zeta)(1-\alpha_{\mathcal I}).
\]
For FKL,
\[
\KL(\pi_0\,\|\,\bar\pi_q) = -\alpha_{\mathcal I}\log\{\zeta+(1-\zeta)/\alpha_{\mathcal I}\}-(1-\alpha_{\mathcal I})\log\zeta .
\]
The first term is nonpositive, so the displayed upper bound follows. For RKL,
\[
\KL(\bar\pi_q\,\|\,\pi_0)=\E_{\pi_0}\{h_q(\Gamma)\log h_q(\Gamma)\}=\alpha_{\mathcal I} h_{\mathrm{in}}\log h_{\mathrm{in}}+(1-\alpha_{\mathcal I})\zeta\log\zeta .
\]
Since \(0<\zeta<1\), the second term is nonpositive. Also
\[
\alpha_{\mathcal I} h_{\mathrm{in}}=\alpha_{\mathcal I}\zeta+1-\zeta=1-\zeta(1-\alpha_{\mathcal I})\le1,
\]
and \(\alpha_{\mathcal I}\ge1-\varepsilon\) implies \(h_{\mathrm{in}}\le\zeta+(1-\zeta)/(1-\varepsilon)\). Hence
\[
\KL(\bar\pi_q\,\|\,\pi_0)\le \log h_{\mathrm{in}}\le \log\{\zeta+(1-\zeta)/(1-\varepsilon)\}.
\]
The reporting-cost expression is \(\sum_m q_m c_m\) evaluated at the displayed weights.
\end{proof}

\subsection{Proof of Theorem~\ref{thm:clustercoverage}}

\begin{proof} The probability that the construction sample misses \(B_r\) is \((1-p_r)^{N_c}\le\exp(-N_c p_r)\). A union bound shows that all \(K\) balls are hit with probability at least \(1-\sum_r\exp(-N_c p_r)\).

On this event choose the returned center \(\widehat c_r\in B_r\). If \(\gamma\in B_r\), then the triangle inequality gives
\[
d(\gamma,\widehat c_r)\le d(\gamma,c_r)+d(c_r,\widehat c_r)\le2R.
\]
Thus \(B_r\subseteq\{\gamma:d(\gamma,\widehat c_r)\le2R\}\). The generated balls are disjoint because
\[
d(\widehat c_r,\widehat c_s) \ge d(c_r,c_s)-d(c_r,\widehat c_r)-d(c_s,\widehat c_s) >4R,
\]
so no support can lie within \(2R\) of both \(\widehat c_r\) and \(\widehat c_s\). Their union therefore has retained mass at least the mass of \(\cup_r B_r\), which is at least \(1-\varepsilon\). Applying Theorem~\ref{thm:regioncompressibility} to these disjoint generated regions gives the TV and forward-KL bounds.
\end{proof}

\subsection{Proof of Corollary~\ref{cor:metricballcompression}}

\begin{proof}
If the union kernel \(w_U=\ind\{\gamma\in U\}\) is available, then its retained mass \(\alpha(U)\) is at least \(1-\varepsilon\). The first three displayed bounds are Theorem~\ref{thm:regioncompressibility} with a single selected region \(U\), fallback weight \(\zeta\), and \(\delta=\varepsilon\).

If only the individual ball kernels are available and the balls overlap, define \(C_1=B_1\) and \(C_r=B_r\setminus\cup_{\ell<r}B_\ell\) for \(r\ge2\). The nonempty cells are disjoint and have the same union \(U\). Applying Theorem~\ref{thm:regioncompressibility} to those cells gives the same bounds whenever the cells are represented in the dictionary. If the dictionary contains only the original overlapping ball indicators, the union-kernel statement is the conservative formal guarantee.

For the atom-list lower bound, repeat the counting argument used in Theorem~\ref{thm:redundancyregion}(b), replacing \(M^\star\) by \(L=|B|\) and \(\mathcal C^\star\) by \(B\). That argument uses only the mass lower bound, the bounded conditional probabilities inside the finite region, and the fact that a top-list posterior with TV error at most \(\varepsilon\) must retain posterior mass at least \(1-\varepsilon\).
\end{proof}

\subsection{Proof of Corollary~\ref{cor:representativecoverage}}

\begin{proof}
Apply Theorem~\ref{thm:regioncompressibility} with the single selected region \(\mathcal C(S^\star,u^\star)\). By assumption this region has retained mass at least \(1-\varepsilon\), and the dictionary contains its hard-region kernel. The displayed TV and forward-KL bounds are therefore the one-region case of that theorem. The reporting-cost claim follows because the report stores the group set and capacity vector, not the individual supports in the region.
\end{proof}

\subsection{Proof of Lemma~\ref{lem:intervalcoverage}}

\begin{proof}
For \(p=1\), the only nonempty interval is already one dyadic interval, so the bound with \(\max\{1,2\lceil\log_2 p\rceil\}\) is immediate. Now suppose \(p\ge2\). Consider a fixed interval \([a,b]\subseteq\{1,\dots,p\}\). The standard greedy dyadic decomposition repeatedly takes the largest dyadic interval starting at the current left endpoint and contained in the remaining interval, until the right endpoint is reached. At each dyadic scale there can be at most two selected intervals, one near the left boundary and one near the right boundary. There are at most \(\lceil\log_2 p\rceil\) scales, so the interval is a disjoint union of at most \(2\lceil\log_2 p\rceil\) dyadic intervals. Applying this decomposition to each of \(r\) bands gives the stated count.

If the dictionary contains unions of this size with capacity \(u\), it contains a region including the true band union and respecting the same capacity. By the posterior-mass assumption, that region has retained mass at least \(1-\varepsilon\). The final sentence is then an immediate application of Theorem~\ref{thm:regioncompressibility}.
\end{proof}

\subsection{Proof of Theorem~\ref{thm:topmseparation}}
\begin{proof} The nonempty pairwise-disjoint group setup makes \(\mathcal C^\star\) a one-representative product set: each support chooses exactly one coordinate from each group in \(S^\star\) and no coordinate outside those groups. Hence \(M^\star=|\mathcal C^\star|=\prod_{k\in S^\star}m_k\). Under the uniform posterior on $\mathcal C^\star$, each support has probability \(1/M^\star\). A top-\(M\) support-atom truncation retains a set \(A\) of \(M\) support atoms and renormalizes the conditional distribution on \(A\). For \(\gamma\in A\), the truncated probability is \(1/M\) and the original probability is \(1/M^\star\). For \(\gamma\notin A\), the truncated probability is zero. Therefore
\[
\TV(\pi_{\mathrm{top}\text{-}M},\pi_0) = \frac12\left\{ M\left(\frac1M-\frac1{M^\star}\right) +(M^\star-M)\frac1{M^\star} \right\} = 1-\frac{M}{M^\star}.
\]
The lower bound on \(M\) follows by rearranging. The hard region or kernel \(w^\star=\ind\{\gamma\in\mathcal C^\star\}\) has retained mass one, so the restricted component equals \(\pi_0\) exactly and has zero KL and TV distortion.
\end{proof}

\subsection{Proof of Corollary~\ref{cor:nearuniformtopm}}

\begin{proof} For any retained support-atom set $A$,
\[
\TV\{\pi_0(\cdot\mid A),\pi_0\} = 1-\pi_0(A),
\]
because the truncated distribution is the conditional distribution of $\pi_0$ on $A$. If this TV distance is at most $\varepsilon$, then $\pi_0(A)\ge1-\varepsilon$. On the other hand,
\[
\pi_0(A) \le \pi_0(A\cap \mathcal C^\star)+\pi_0((\mathcal C^\star)^c)
\le \pi_0(\mathcal C^\star)\pi_0(A\cap\mathcal C^\star\mid\mathcal C^\star)+\delta
\le \frac{\exp(r)M}{M^\star}+\delta .
\]
Combining the two displays gives
\[
1-\varepsilon \le \frac{\exp(r)M}{M^\star}+\delta,
\]
and rearranging proves the claimed lower bound.
\end{proof}

\subsection{Proof of Proposition~\ref{prop:duplicatepredictors}}

\begin{proof} Let \(T\) be a permutation that relabels predictors only inside duplicate groups. By assumption, the support prior satisfies \(p_0(T\gamma)=p_0(\gamma)\), the coefficient prior on active coefficients is mapped into itself by the same relabeling, and the likelihood satisfies \(p(y\mid X_{T\gamma},\beta_{T\gamma})=p(y\mid X_\gamma,\beta_\gamma)\) after the corresponding coefficient relabeling. In the Gaussian linear benchmark with identical columns inside a group, this is immediate because \(X_{T\gamma}\) is just a relabeled copy of \(X_\gamma\) and the slab prior is exchangeable. Therefore the integrated marginal likelihood obeys \(m_{T\gamma}(\D)=m_\gamma(\D)\). The posterior numerator \(p_0(\gamma)m_\gamma(\D)\) is the same for \(\gamma\) and \(T\gamma\), and the normalizing constant is common, so \(\pi_0(T\gamma\mid\D)=\pi_0(\gamma\mid\D)\).

If the posterior is supported on \(\mathcal C^\star\), the within-group permutations act transitively on the supports in \(\mathcal C^\star\). All supports in \(\mathcal C^\star\) therefore have the same posterior probability, and the conditional posterior is uniform. If instead \(\max_{\gamma,\gamma'\in\mathcal C^\star}|\log\pi_0(\gamma\mid\D)-\log\pi_0(\gamma'\mid\D)|\le r\), then every conditional posterior probability in \(\mathcal C^\star\) is at most \(\exp(r)/|\mathcal C^\star|\), which is the near-uniform condition in Corollary~\ref{cor:nearuniformtopm}.
\end{proof}

\fi
\bibliography{BMAbib}
\end{document}